\def\ceil#1{\lceil #1 \rceil}
\def\1{\bm{1}}
\def\vmu{{\bm{\mu}}}
\def\vm{{\bm{m}}}
\def\vw{{\bm{w}}}
\def\vz{{\bm{z}}}
\def\mA{{\bm{A}}}
\def\mN{{\bm{N}}}
\def\mX{{\bm{X}}}
\def\mY{{\bm{Y}}}
\def\mSigma{{\bm{\Sigma}}}
\DeclareMathAlphabet{\mathsfit}{\encodingdefault}{\sfdefault}{m}{sl}
\SetMathAlphabet{\mathsfit}{bold}{\encodingdefault}{\sfdefault}{bx}{n}
\newcommand{\E}{\mathbb{E}}
\newcommand{\Cov}{\mathrm{Cov}}
\DeclareMathOperator*{\argmax}{arg\,max}
\DeclareMathOperator*{\argmin}{arg\,min}
\newtheorem*{rep@theorem}{\rep@title}
\newcommand{\newreptheorem}[2]{%
\newenvironment{rep#1}[1]{%
 \def\rep@title{#2 \ref{##1}}%
 \begin{rep@theorem}}%
 {\end{rep@theorem}}}
\newtheorem{theorem}{Theorem}
\newtheorem{lemma}{Lemma}
\newtheorem{assumption}{Assumption}
\newtheorem{proposition}{Proposition}
\newtheorem{corollary}{Corollary}
\newtheorem{definition}{Definition}
\newtheorem{remark}{Remark}
\newcommand{\nn}{\nonumber}
\begin{document}

%

%


\title{Characterizing and Understanding the Generalization Error of Transfer Learning with Gibbs Algorithm}

\author{Yuheng Bu$^*$,
        Gholamali Aminian$^*$,
        Laura Toni,
        Miguel Rodrigues,
        Gregory Wornell,
\thanks{$^*$ Equal Contribution.}
\thanks{Y. Bu and G. Wornell are with the Department of Electrical Engineering and Computer Science, Massachusetts Institute of Technology, Cambridge, MA 02139 (Email: buyuheng, gww@mit.edu).}
\thanks{G. Aminian, L. Toni and M. Rodrigues are with the Electronic and Electrical Engineering Department at University College London, UK, (Email: g.aminian, l.toni, m.rodrigues@ucl.ac.uk).}
}

\allowdisplaybreaks

\maketitle

\begin{abstract}

We provide an information-theoretic analysis of the generalization ability of Gibbs-based transfer learning algorithms by focusing on two popular transfer learning approaches, $\alpha$-weighted-ERM and two-stage-ERM. Our key result is an exact characterization of the generalization behaviour using the conditional symmetrized KL information between the output hypothesis and the target training samples given the source samples. Our results can also be  applied to provide novel distribution-free generalization error upper bounds on these two aforementioned Gibbs algorithms. Our approach is versatile, as it also characterizes the generalization errors and excess risks of these two Gibbs algorithms in the asymptotic regime, where they converge to the $\alpha$-weighted-ERM and two-stage-ERM, respectively. Based on our theoretical results, we show that the benefits of transfer learning can be viewed as a bias-variance trade-off, with the bias induced by the source distribution and the variance induced by the lack of target samples. We believe this viewpoint can guide the choice of transfer learning algorithms in practice.

\end{abstract}

\section{Introduction}

A common assumption in supervised learning is that both the training and test data samples are generated from the same distribution. However, this assumption does not always hold in many applications, as we often have easy access to samples generated from a source distribution, and we want to use the  hypothesis trained using source samples for a different target task, from which only limited data are available. Transfer learning and domain adaptation methods are developed to tackle this problem, and the state of the art transfer learning algorithms based on pre-trained models and fine tuning has led to significant improvements in various applications such as computer vision, natural language processing, etc~\citep{li2012human,long2015learning,yosinski2014transferable,raffel2019exploring}.

Many works try to explain the empirical success of transfer learning from different theoretical perspectives.
The first theoretical analysis for domain adaptation is proposed by \citep{ben2007analysis} for binary classification, where the authors provide a VC-dimension-based excess risk bound for the zero-one loss in terms of $d_\mathcal{A}$-distance as a measure of discrepancy between source and target tasks.\! A new notion of discrepancy measure for transfer learning called transfer-exponent under covariate-shift assumption is proposed in \citep{hanneke2019value}. A minimax lower bound of generalization error for transfer learning in neural networks is derived in \citep{kalan2020minimax}. Recently, an Empirical Risk Minimization (ERM) algorithm via representation learning is proposed in \citep{tripuraneni2020theory}, and an upper bound on the excess risk of the new task is provided in terms of Gaussian complexity. \citep{wang2019transfer} provides an upper bound on excess risk based on instance weighting.\! Using KL divergence as a measure of similarity between source and target data-generating distribution, an information-theoretic generalization error upper bound for transfer learning is proposed in \citep{wu2020information}.


However, these upper bounds on excess risk and generalization error may not entirely capture the generalization ability of a transfer learning algorithm. One apparent reason is the tightness issue, as the proposed bounds~\citep{wang2019transfer} can be loose or even vacuous when evaluated in practice. More importantly, the current definitions of discrepancy metric do not fully characterize all the aspects that could influence the performance of a transfer learning problem, e.g., most discrepancy measures are either algorithm independent (KL divergence in~\citep{wu2020information}), or defined under specific assumption, e.g. transfer-exponent under covariate-shift assumption in~\citep{hanneke2019value}, or only depend on the hypothesis class ($d_\mathcal{A}$-distance in ~\citep{ben2007analysis}), which cannot provide too much insight in selecting different transfer learning algorithms in practice.

To overcome these limitations, we study two Gibbs algorithms, i.e., $\alpha$-weighted Gibbs algorithm and two-stage Gibbs algorithm which can be viewed as randomized version of two ERM-based transfer learning algorithms, i.e.,  $\alpha$-weighted-ERM~\citep{ben2010theory,zhang2012generalization} and two-stage-ERM~\citep{tripuraneni2020theory, donahue2014decaf} using information-theoretic tools.

Our main contributions are as follows:
\begin{itemize}
    \item   We derive exact characterizations of the generalization errors for  $\alpha$-weighted Gibbs algorithm and two-stage Gibbs algorithm using conditional symmetrized KL information. We also provide novel distribution-free upper bounds, which quantify how the number of samples from the source and target will influence the generalization error of these transfer learning algorithms.
     \item We further demonstrate how to use our method to characterize the asymptotic behavior of the generalization error for the Gibbs algorithms under large inverse temperature, where the Gibbs algorithms converge to the $\alpha$-weighted-ERM and Two-stage-ERM, respectively.
     \item By studying the excess risk of the $\alpha$-weighted-ERM and Two-stage-ERM algorithms in the asymptotic regime, we show that the benefits of transfer learning algorithms can be viewed as a bias-variance trade-off, which suggests that the choice of transfer learning algorithm should depend on both the bias induced by the source distribution and the number of target samples.
\end{itemize}

\section{Problem Formulation}
Let $D_s = \{Z_i^s\}_{i=1}^n$ and $D_t = \{Z_j^t\}_{j=1}^m$ be the source and target training sets, respectively, where $Z_i^s$ and $Z_j^t$ are defined on the same alphabet $\mathcal{Z}$.
Note that $D_s$ and $D_t$ are independent, but neither $D_s$ nor $D_t$ is required to be i.i.d generated from the data-generating distribution $P_Z^s$ or $P_Z^t$.
We denote the joint distribution of all source training samples as $P_{D_s}$ and that of the target training samples as $P_{D_t}$. We denote the hypotheses by $w \in \mathcal{W}$, where $\mathcal{W}$ is a hypothesis class. The performance of any hypotheses is measured by a non-negative loss function $\ell:\mathcal{W} \times \mathcal{Z}  \to \mathbb{R}_0^+$, and we can define the  empirical risk and the population risk of a source task as
\begin{align}
    &L_E(w,d_s)\triangleq  \frac{1}{n}\sum_{i=1}^n \ell(w,z_i^s),\\
    &L_P(w,P_{D_s})\triangleq  \mathbb{E}_{P_{D_s}}[L_E(w,D_s)],
\end{align}
and the  empirical risk and the population risk of target task
\begin{align}
    &L_E(w,d_t)\triangleq\frac{1}{m}\sum_{j=1}^m \ell(w,z_j^t),\\
    &L_P(w,P_{D_t})\triangleq  \mathbb{E}_{P_{D_t}}[L_E(w,D_t)].
\end{align}
A transfer learning algorithm can be modeled as a randomized mapping from the source and target training sets $(D_s,D_t)$ onto a hypothesis $W\in\mathcal{W}$  according to the conditional distribution $P_{W|D_s,D_t}$. Thus, the expected transfer generalization error quantifying the degree of over-fitting on the target training data can be written as
\begin{align}\label{Eq: expected GE general}
\overline{\text{gen}}(P_{W|D_s,D_t},P_{D_s},P_{D_t})\triangleq
\mathbb{E}_{P_{W,D_s,D_t}}[ L_P(W,P_{D_t})-L_E(W,{D_t})],
\end{align}
where the expectation is taken over the joint distribution $P_{W,D_s,D_t} =  P_{W|D_s,D_t}\otimes P_{D_s,D_t} $.

\subsection{Two Transfer learning Algorithms}
We focus on the following two transfer learning approaches, including $\alpha$-weighted-ERM and Two-stage-ERM.

\textbf{$\alpha$-Weighted-ERM Transfer Learning:} We denote the hypotheses by $w_\alpha \in \mathcal{W}$ as the output of $\alpha$-weighted-ERM learning algorithm.
The hypothesis $w_\alpha$ is trained by minimizing a convex combination of the source and target task empirical risks as in~\citep{ben2010theory}, i.e.,
\begin{equation}
    L_E(w_\alpha,\!d_s,\!d_t) = (1-\alpha) L_E(w_\alpha,\!d_s)+\alpha L_E(w_\alpha,\!d_t),
\end{equation}
for $0\leq\alpha\leq 1$.


\textbf{Two-stage-ERM Transfer Learning:} Suppose that the hypothesis $w \in \mathcal{W}$ can be written as $w = (w_\phi,w_c)$,
where $w_{\phi} \in \mathcal{W}_\phi$ is the shared hypothesis (parameter) across both source and target tasks, and  $w_c$ denotes some task-specific hypothesis (parameter) for source and target tasks, i.e., $w_c^s \in \mathcal{W}_c$ and $w_c^t \in \mathcal{W}_c$. For example $w_\phi$ collects parameters of first few layers of a neural network for both tasks and $w_c^s$ and $w_c^t$ collect the remaining parameters for source and target tasks respectively. The performance of the pair $(w_\phi,w_c)$ is measured by a non-negative loss function $\ell:\mathcal{W}_c \times \mathcal{W}_\phi \times \mathcal{Z}  \to \mathbb{R}_0^+$. Now, we consider the following two-stage-ERM transfer learning algorithm inspired by  \citep{tripuraneni2020theory}.


\textbf{\textit{First Stage:} }The algorithm first learns the shared hypothesis $w_{\phi}$ and the source-specific hypothesis $w_c^s$ by minimizing the following empirical risk function defined on the source data set at Stage 1:
\begin{equation}
        L_E^{S1}(w_{\phi},w_c^s,d_s)\triangleq\frac{1}{n}\sum_{i=1}^n \ell(w_{\phi},w_c^s,z_i^s).
\end{equation}
\textbf{\textit{Second Stage:}} We fix the shared hypothesis $w_{\phi}$ and learn the target-specific hypothesis $w_c^t$ by minimizing the following empirical risk function defined on the target data set at Stage 2:
    \begin{equation}
        L_E^{S2}(w_{\phi},w_c^t,d_t)=\frac{1}{m}\sum_{j=1}^m \ell(w_{\phi},w_c^t,z_j^t).
    \end{equation}

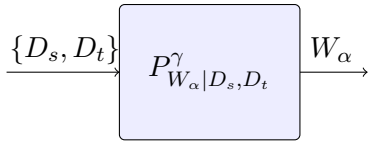
\begin{figure}[t]
\centering
\begin{tikzpicture}[scale=0.6]
 \draw[->] (-3,5.5) -- (-0.5,5.5);
\node at (-1.7,6){$\{D_s,D_t\}$};
\draw[->] (3.5,5.5) -- (5,5.5);
\draw [fill=white!93!blue,rounded corners=2pt](-0.5,4) rectangle (3.5,7);
\node at (1.5,5.5){$P_{W_\alpha|D_s,D_t}^\gamma$};
\node at (4.2,6){$W_\alpha$};
\end{tikzpicture}
\caption{$\alpha$-weighted Gibbs Algorithm}
\label{Fig: Weighted ERM}
\end{figure}

\subsection{Transfer Learning with Gibbs algorithms}
We generalize the ERM-based transfer learning algorithms by considering their Gibbs versions.\! The $(\gamma,\pi(w),f(w,s))$-Gibbs distribution, which was first proposed by \citep{gibbs1902elementary} in statistical mechanics, is defined as:
\begin{equation}\label{Eq: Gibbs Solution}
    P_{{W}|S}^\gamma (w|s) \triangleq \frac{\pi({w}) e^{-\gamma f(w,s)}}{V(s,\gamma)},\quad \gamma\ge 0,
\end{equation}
where $\gamma$ is the inverse temperature, $\pi(w)$ is an arbitrary prior distribution on $W$, $f(w,s)$ is energy function, and $V(s,\gamma) \triangleq \int \pi(w) e^{-\gamma f(w,s)} dw$ is the partition function.

The $(\gamma,\pi(w),L_E(w,d_t))$-Gibbs distribution can be viewed as a randomized version of an ERM algorithm using only target samples if we specify the energy function $f(w,s) = L_E(w,d_t)$. As the inverse temperature $\gamma \to \infty$, the prior distribution $\pi(w)$ becomes negligible, and the Gibbs algorithm converges to the standard supervised-ERM algorithm.

Similarly, we define the following $\alpha$-weighted Gibbs algorithm and two-stage Gibbs algorithm, which can be viewed as randomized $\alpha$-weighted-ERM and randomized two-stage-ERM, respectively.

\textbf{$\alpha$-weighted Gibbs algorithm} generalizes the $\alpha$-weighted-ERM by considering the $(\gamma,\pi(w_\alpha),L_E(w_\alpha,d_s,d_t))$-Gibbs algorithm (see, Figure~\ref{Fig: Weighted ERM})
\begin{align}\label{equ:Gibbs-alpha-weighted}
    P_{W_\alpha|D_s,D_t}^\gamma (w_\alpha|d_s,d_t) = \frac{\pi(w_\alpha) e^{-\gamma L_E(w_\alpha,d_s,d_t)}}{V_\alpha(d_s,d_t,\gamma)}.
\end{align}
The expected transfer generalization error of the $\alpha$-weighted Gibbs algorithm is denoted as
\begin{align}
&\overline{\text{gen}}_{\alpha}(P_{D_s},P_{D_t})\triangleq\overline{\text{gen}}(P_{W_\alpha|D_s,D_t}^\gamma,P_{D_s},P_{D_t}).
\end{align}

\textbf{Two-stage Gibbs algorithm} generalizes the two-stage-ERM by considering the
 $(\gamma, \pi(w_c^t), L_E^{S2}(w_{\phi},w_c^t,d_t))$-Gibbs algorithm algorithm
\begin{align}\label{Eq: two-stage Gibbs algorithm}
    P_{W_c^t|D_t,W_{\phi}}^\gamma (w_c^t|d_t,w_{\phi}) = \frac{\pi(w_c^t) e^{-\gamma L_E^{S_2}(w_\phi,w_c^t,d_t)}}{V_\beta(w_\phi,d_t,\gamma)}
\end{align}
in the second stage, where the learned shared hypothesis $w_\phi$ is the output of the learning algorithm $P_{W_\phi,W_c^s|D_s}$ at the first stage. As shown in Figure~\ref{Fig: two-stage-ERM}, the two-stage Gibbs algorithm is constructed by concatenating two randomized mappings $P_{W_c^t|D_t,W_{\phi}}^\gamma$ and $P_{W_\phi,W_c^s|D_s}$.

The population risk for the target task is defined as:
\begin{equation}
    L_P(w_{\phi},w_c^t,P_{D_t})=\mathbb{E}_{P_{D_t}}[L_E^{S2}(w_{\phi},w_c^t,D_t)],
\end{equation}
and the expected transfer generalization error under two-stage Gibbs algorithm can be denoted as
\begin{align}
\overline{\text{gen}}_{\beta}(P_{D_s},P_{D_t})\triangleq\mathbb{E}_{P_{W_{\phi},W_c^t,D_s,D_t}}[ L_P(W_{\phi},W_c^t,P_{D_t})- L_E^{S2}(W_{\phi},W_c^t,D_t)],
\end{align}
where the expectation is taken over the joint distribution $P_{W_{\phi},W_c^t,D_s,D_t}=P_{W_c^t|D_t,W_{\phi}}^\gamma \otimes P_{D_s,W_\phi} \otimes P_{D_t}$.

\begin{figure}[t]
\centering
\begin{tikzpicture}[scale=0.6]

 \draw[->] (-1.5,5.5) -- (-0.5,5.5);
\node at (-1,6){$D_s$};

\draw[->] (3,5.5) -- (6,5.5);
\draw[-] (3,4.5) -- (4.5,4.5);
\draw[->] (4.5,6.5) -- (6,6.5);

\draw[-] (5.25,5.5) -- (5.25,3.5);
\draw[->] (5.25,3.5) -- (11.5,3.5);
\node at (10.8,3.85){$W_\phi$};

\draw [fill=white!93!blue,rounded corners=2pt](-0.5,4) rectangle (3.5,7);
\node at (1.5,5.5){$P_{W_\phi,W_c^s|D_s}$};


\node at (4.7,6){$W_\phi$};
\node at (4,5){$W_c^s$};
\node at (5.2,7){$D_t$};

\draw [fill=white!93!blue,rounded corners=2pt](6,4) rectangle (10,7);
\node at (8,5.5){$P_{W_c^t|D_t,W_\phi}^\gamma$};
\node at (10.8,6){$W_c^t$};
\draw[->] (10,5.5) -- (11.5,5.5);

\end{tikzpicture}
\caption{Two-stage Gibbs Algorithm}
\label{Fig: two-stage-ERM}
\end{figure}
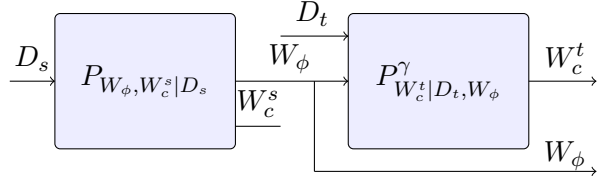

\subsection{Information Measures}
We will be characterizing the aforementioned generalization errors using various information measures. If $P$ and $Q$ are probability measures over space $\mathcal{X}$, and $P$ is absolutely continuous with respect to $Q$, the Kullback-Leibler (KL) divergence between $P$ and $Q$ is given by
$D(P\|Q)\triangleq\int_\mathcal{X}\log\left(\frac{dP}{dQ}\right) dP$. If $Q$ is also absolutely continuous with respect to $P$, the symmetrized KL divergence (a.k.a. Jeffrey's divergence~ \citep{jeffreys1946invariant}) is
\begin{equation}
D_{\mathrm{SKL}}(P\|Q)\triangleq D(P \| Q) + D(Q\|P).
\end{equation}
The mutual information between two random variables $X$ and $Y$ is the KL divergence between the joint distribution and product-of-marginal
distribution $I(X;Y)\triangleq D(P_{X,Y}\|P_X\otimes P_{Y})$, or equivalently, the conditional KL divergence between $P_{Y|X}$ and $P_Y$ averaged over $P_X$, $D(P_{Y|X} \| P_Y|P_{X})\triangleq\int_\mathcal{X}D(P_{Y|X=x} \| P_Y) dP_{X}(x)$. By swapping the role of $P_{X,Y}$ and $P_X\otimes P_{Y}$ in mutual information, we get the lautum information introduced by \citep{palomar2008lautum}, $L(X;Y)\triangleq D(P_X\otimes P_{Y}\| P_{X,Y})$. Finally, the symmetrized KL information between $X$ and $Y$ is given by \citep{aminian2015capacity}:
\begin{align}
   &I_{\mathrm{SKL}}(X;Y)\triangleq  D_{\mathrm{SKL}}(P_{X,Y}\|P_X\otimes P_Y)=
   I(X;Y)+ L(X;Y).
\end{align}
Throughout the paper, upper-case letters denote random variables (e.g., $Z$), lower-case letters denote the realizations of random variables (e.g., $z$), and calligraphic letters denote sets (e.g., $\mathcal{Z}$).
All the logarithms are the natural ones, and all information measure units are in nats. $\mathcal{N}(\!\vmu,\Sigma\!)$ denotes a Gaussian distribution with mean $\vmu$ and covariance matrix \!$\Sigma$.

\section{Related Work}

\textbf{Other Interpretations for Gibbs Algorithm:} Besides viewing the Gibbs algorithm as randomized ERM, there are additional interpretations for considering Gibbs algorithm in transfer learning.   


\textbf{SGLD:} The Stochastic Gradient Langevin Dynamics (SGLD), which can be viewed as noisy version of Stochastic Gradient Descent (SGD), is defined as:
\begin{equation*}
        W_{k+1}=W_k-\eta\nabla L_E(W_k,d_t)+\sqrt{\frac{2\beta}{\gamma}}\zeta_k, \quad k=0,1,\cdots,
\end{equation*}
where $\zeta_k$ is a standard Gaussian random vector and $\eta>0$ is the step size. In \citep{raginsky2017non}, it is proved that under some conditions on the loss function, the conditional distribution $P_{W_k|D_t}$ induced by SGLD algorithm is close to $(\gamma,\pi(W_0),L_E(w_k,d_t))$-Gibbs distribution in 2-Wasserstein distance for sufficiently large $k$.


\textbf{Information Risk Minimization:} The Gibbs algorithm also arises when conditional KL divergence is used as a regularizer to penalize over-fitting in the information risk minimization framework.
It is shown in \citep{xu2017information,zhang2006information,zhang2006E} that the solution to the regularized ERM problem
\begin{align} 
    P^{\star}_{W|D_t}=\arg \inf_{P_{W|D_t}}\big( &\mathbb{E}_{P_{W,D_t}}[L_E(W,D_t)]  +\frac{1}{\gamma} D(P_{W|D_t}\|\pi(W)|P_{D_t})\big),
\end{align}
corresponds to the $(\gamma,\pi(w),L_E(w,d_t))$-Gibbs distribution. The inverse temperature $\gamma$ controls the regularization term and balances between over-fitting and generalization.

\textbf{Generalization Error of the Gibbs Algorithm:}
An exact characterization of the generalization error for Gibbs algorithm in terms of symmetrized KL information is provided by \citepalias{OurGibbsPaper}. 
The authors also provide a generalization error upper bound with the rate of $\mathcal{O}\left(\alpha/n\right)$ under the sub-Gaussian assumption. An information-theoretic upper bound with similar rate $\mathcal{O}\left(\alpha/n\right)$ is provided in \citep{raginsky2016information} for the Gibbs algorithm with bounded loss function, and PAC-Bayesian bounds using a variational approximation of Gibbs posteriors are studied in \citep{alquier2016properties}. \citep{asadi2020chaining,kuzborskij2019distribution} both focus on bounding the excess risk of the Gibbs algorithm.



\textbf{Information-theoretic generalization error bounds for Supervised Learning:} Recently, \citep{russo2019much,xu2017information} propose to use the mutual information between the input training set and the output hypothesis to upper bound the expected generalization error.  Multiple approaches have been proposed to tighten these mutual information-based bound. \citep{bu2020tightening} provides tighter bounds by considering the individual sample mutual information, \citep{asadi2018chaining,asadi2020chaining} propose using chaining mutual information,  and \citep{steinke2020reasoning,hafez2020conditioning,haghifam2020sharpened} advocate the conditioning and processing techniques.
Information-theoretic generalization error bounds using other information quantities are also studied, such as, $f$-divergence~\citep{jiao2017dependence}, $\alpha$-R\'enyi divergence and maximal leakage~\citep{issa2019strengthened,esposito2019generalization}, and Jensen-Shannon divergence~\citep{aminian2020jensen,aminian2021information}. Using rate-distortion theory, \citep{masiha2021learning,bu2020information,bu2021population} provide information-theoretic generalization error upper bounds for model misspecification and model compression.


\textbf{Other Analyses of Transfer Learning:}
In hypothesis transfer learning problem \citep{kuzborskij2013stability}, where we only have access to the learned source hypotheses instead of the source training data, an upper bound on the leave-one-out error measured by square loss is provided. An extension of hypothesis transfer learning is studied in \citep{kuzborskij2017fast}, where an algorithm combining the hypotheses from multiple sources based on regularized ERM principle is studied.
There are also works focusing on the theoretical aspects of domain adaptation, see~\citep{ben2007analysis,ben2010theory,mansour2009domain,mansour2009multiple,germain2016new,david2010impossibility}, which are also related to our problem. Note that in domain adaptation, there is no labeled target data and only unlabeled target samples are available. Actually, having access to target labeled data would improve the performance of the learning algorithm for target task~\citep{mansour2021theory,wang2019transfer}.



Note that we provide an \emph{exact} characterization of the generalization error for Gibbs algorithms in transfer learning scenarios, which differs from this body of research.

\section{Generalization Error of Transfer Learning Algorithm}
We now offer an exact characterizations of the expected transfer generalization errors in terms of symmetrized KL information for the $\alpha$-weighted and two-stage Gibbs algorithms, respectively. Then, combining the exact characterization of expected transfer generalization error for Gibbs algorithms with a conditional mutual information-based generalization error upper bound, we derive novel distribution-free upper bounds for these two Gibbs algorithms. Finally, we provide another exact characterizations of the generalization errors in terms of symmetrized KL divergence, which is shown to be useful in the asymptotic analysis.

\subsection{Exact Characterization of Generalization Error Using Symmetrized KL Information}\label{Sec: weighted-ERM}

One of our main results, which characterizes the exact expected transfer generalization error of the $\alpha$-weighted Gibbs algorithm with prior distribution $\pi(w_\alpha)$, is as follows:
\begin{theorem}[Proved in Appendix~\ref{app: exact Characterization}]\label{Theorem: Gibbs alpha Transfer result}
For the $\alpha$-weighted Gibbs algorithm, $0<\alpha<1$ and $\gamma>0$,
\begin{equation}\label{equ:GibbsposteriorTransfer}
    P_{W_\alpha|D_s,D_t}^\gamma (w_\alpha|d_s,d_t) = \frac{\pi(w_\alpha) e^{-\gamma L_E(w_\alpha,d_s,d_t)}}{V_\alpha(d_s,d_t,\gamma)},
\end{equation}
the expected transfer generalization error is given by
\begin{equation}
   \overline{\text{gen}}_{\alpha}(P_{D_s},P_{D_t}) =  \frac{I_{\mathrm{SKL}}(W_\alpha;D_t|D_s)}{\gamma\alpha}.
\end{equation}
\end{theorem}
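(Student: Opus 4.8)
The plan is to mirror the exact-characterization argument for the ordinary Gibbs algorithm from \citepalias{OurGibbsPaper}, adapting it to the conditional (source-augmented) setting. I would begin from $I_{\mathrm{SKL}}(W_\alpha;D_t|D_s)=I(W_\alpha;D_t|D_s)+L(W_\alpha;D_t|D_s)$ and unfold both the conditional mutual information and the conditional lautum information into integrals of the single log-ratio $g\triangleq\log\big(P^\gamma_{W_\alpha|D_s,D_t}/P_{W_\alpha|D_s}\big)$. The conditional mutual information equals $\mathbb{E}_{P_{W_\alpha,D_t|D_s}}[g]$, while the conditional lautum information equals $-\mathbb{E}_{P_{W_\alpha|D_s}\otimes P_{D_t|D_s}}[g]$ (both further averaged over $P_{D_s}$), so that
\[
I_{\mathrm{SKL}}(W_\alpha;D_t|D_s)=\mathbb{E}_{P_{D_s}}\!\Big[\mathbb{E}_{P_{W_\alpha,D_t|D_s}}[g]-\mathbb{E}_{P_{W_\alpha|D_s}\otimes P_{D_t|D_s}}[g]\Big].
\]
The intractable marginal $P_{W_\alpha|D_s}$ appears only inside $g$, and a central point of the proof is that it will cancel, so it need never be computed in closed form.

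Next I would substitute the Gibbs form $\log P^\gamma_{W_\alpha|D_s,D_t}=\log\pi(W_\alpha)-\gamma L_E(W_\alpha,D_s,D_t)-\log V_\alpha(D_s,D_t,\gamma)$, splitting $g$ into four additive pieces. The key structural observation is that, in the difference of the two expectations above, any summand that is (conditioned on $D_s$) a function of $W_\alpha$ alone or of $D_t$ alone contributes identically to both measures and hence cancels, since $P_{W_\alpha,D_t|D_s}$ and $P_{W_\alpha|D_s}\otimes P_{D_t|D_s}$ share the same $D_s$-conditional marginals. This eliminates $\log\pi(W_\alpha)$ and $\log P_{W_\alpha|D_s}$ (functions of $W_\alpha$ only) and $\log V_\alpha(D_s,D_t,\gamma)$ (a function of $D_t$ only). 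Writing $L_E(W_\alpha,D_s,D_t)=(1-\alpha)L_E(W_\alpha,D_s)+\alpha L_E(W_\alpha,D_t)$, the source term $(1-\alpha)L_E(W_\alpha,D_s)$ is likewise a function of $W_\alpha$ alone given $D_s$ and drops out, leaving only
\[
I_{\mathrm{SKL}}(W_\alpha;D_t|D_s)=-\gamma\alpha\,\mathbb{E}_{P_{D_s}}\!\Big[\mathbb{E}_{P_{W_\alpha,D_t|D_s}}[L_E(W_\alpha,D_t)]-\mathbb{E}_{P_{W_\alpha|D_s}\otimes P_{D_t|D_s}}[L_E(W_\alpha,D_t)]\Big].
\]

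To finish, I would invoke the independence $D_s\perp D_t$, so that $P_{D_t|D_s}=P_{D_t}$. In the product-measure term $W_\alpha$ and $D_t$ are conditionally independent given $D_s$, and integrating $L_E(W_\alpha,D_t)$ against $P_{D_t}$ reproduces exactly the population risk $L_P(W_\alpha,P_{D_t})$, whereas the joint-measure term is the expected target empirical risk. The bracket therefore equals $\mathbb{E}_{P_{W_\alpha,D_s,D_t}}[L_E(W_\alpha,D_t)-L_P(W_\alpha,P_{D_t})]=-\overline{\text{gen}}_{\alpha}(P_{D_s},P_{D_t})$, and dividing by $-\gamma\alpha$ yields the claim. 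The main obstacle is conceptual rather than computational: both $P_{W_\alpha|D_s}$ and the partition function $V_\alpha$ are intractable, and the whole argument hinges on recognizing that the symmetrized combination of mutual and lautum information is precisely what annihilates every $W_\alpha$-only and $D_t$-only term, isolating the single bilinear energy contribution $\alpha L_E(W_\alpha,D_t)$. The delicate bookkeeping lies in tracking the $D_s$-conditional marginals through the difference of measures and in deploying $D_s\perp D_t$ at exactly the right moment to convert the conditional target expectation into $L_P$.
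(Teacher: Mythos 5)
Your proposal is correct and follows essentially the same route as the paper's Appendix A proof: expand the conditional symmetrized KL information as the difference of expectations of the log-posterior under the joint and the $D_s$-conditional product measures, observe that the prior, partition function, marginal $P_{W_\alpha|D_s}$, and source empirical risk all cancel because they depend (given $D_s$) on only one of $W_\alpha$ or $D_t$, and then use $D_s\perp D_t$ to identify the surviving term with $\gamma\alpha\,\overline{\text{gen}}_{\alpha}$. Your sign bookkeeping is also consistent throughout, so there is nothing to correct.
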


We also provide an exact characterization of the expected transfer generalization error for two-stage Gibbs algorithm using conditional symmetrized KL information.

\begin{theorem}[Proved in Appendix~\ref{app: exact Characterization}]\label{Theorem: Two-stage-ERM approach}
The expected transfer generalization error of the  two-stage Gibbs algorithm in ~\eqref{Eq: two-stage Gibbs algorithm} is given by
\begin{align}
   &\overline{\text{gen}}_{\beta}(P_{D_s},P_{D_t})=\frac{ I_{\mathrm{SKL}}(D_t;W_c^t|W_\phi)}{\gamma}.
\end{align}
\end{theorem}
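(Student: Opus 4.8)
The plan is to reduce the two-stage statement to the single-task Gibbs characterization (the exact result of \citepalias{OurGibbsPaper}) by conditioning on the shared hypothesis $W_\phi$. The crucial structural observation is that $W_\phi$ is produced at the first stage by $P_{W_\phi,W_c^s|D_s}$ and therefore depends only on $D_s$; since $D_s$ and $D_t$ are independent, $W_\phi$ is independent of $D_t$, so conditioning on the event $\{W_\phi=w_\phi\}$ leaves the marginal of $D_t$ equal to $P_{D_t}$. For each fixed $w_\phi$, the second-stage map $P_{W_c^t|D_t,W_\phi}^\gamma$ in \eqref{Eq: two-stage Gibbs algorithm} is exactly a $(\gamma,\pi(w_c^t),L_E^{S2}(w_\phi,\cdot,\cdot))$-Gibbs algorithm whose training set is $D_t$ and whose output is $W_c^t$. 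Hence the single-task exact characterization applies verbatim, giving the conditional identity $\mathbb{E}_{P_{W_c^t,D_t|W_\phi=w_\phi}}[L_P(w_\phi,W_c^t,P_{D_t})-L_E^{S2}(w_\phi,W_c^t,D_t)] = I_{\mathrm{SKL}}(D_t;W_c^t|W_\phi=w_\phi)/\gamma$.

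Taking the expectation over $W_\phi\sim P_{W_\phi}$ then finishes the argument: the left-hand side averages to $\overline{\text{gen}}_{\beta}(P_{D_s},P_{D_t})$ by its definition in the text, while the right-hand side averages to $I_{\mathrm{SKL}}(D_t;W_c^t|W_\phi)/\gamma$ because the conditional symmetrized KL information is, by definition, the $P_{W_\phi}$-average of its conditioned-on-$\{W_\phi=w_\phi\}$ version (both conditional mutual information and conditional lautum information being defined as such averages over the conditioning variable).

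To make the proposal self-contained I would supply the single-task step directly rather than merely cite it. The backbone is the identity $I_{\mathrm{SKL}}(X;Y)=\mathbb{E}_{P_{X,Y}}[\log P_{Y|X}]-\mathbb{E}_{P_X\otimes P_Y}[\log P_{Y|X}]$, obtained by writing out $I(X;Y)+L(X;Y)$ and cancelling the matching $\log P_Y$ terms. Applying it with $X=D_t$ and $Y=W_c^t$ (everything conditioned on $w_\phi$) and substituting $\log P_{W_c^t|D_t,W_\phi}^\gamma = \log\pi(W_c^t)-\gamma L_E^{S2}(w_\phi,W_c^t,D_t)-\log V_\beta(w_\phi,D_t,\gamma)$, the prior term $\log\pi(W_c^t)$ cancels (its law is the same under the joint and under the product) and the partition-function term $\log V_\beta$ cancels (it depends on $D_t$ only, whose law is likewise unchanged). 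What survives is $-\gamma$ times the difference between the expected empirical risk under $P_{W_c^t,D_t|w_\phi}$ and the expected population risk under $P_{W_c^t|w_\phi}\otimes P_{D_t}$, which is exactly $\gamma$ times the conditional generalization error.

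The main obstacle, really the only delicate point, is the bookkeeping around the conditioning on $W_\phi$: one must verify that $W_\phi$ is independent of $D_t$ so that the product measure $P_{D_t|w_\phi}\otimes P_{W_c^t|w_\phi}$ genuinely restores the population risk, and that averaging over $W_\phi$ matches the definition of the \emph{conditional} symmetrized KL information $I_{\mathrm{SKL}}(D_t;W_c^t|W_\phi)$ rather than an unconditional $I_{\mathrm{SKL}}(D_t;W_c^t)$. The two cancellations (prior and partition function) are exactly what single out the symmetrized KL information as the quantity admitting an \emph{exact} identity, and they go through precisely because the energy $L_E^{S2}$ enters the log-density linearly with coefficient $-\gamma$; I would double-check integrability so that all these expectations are finite and the cancellations are legitimate.
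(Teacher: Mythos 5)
Your proposal is correct and follows essentially the same route as the paper's proof in Appendix A: write the conditional symmetrized KL information as the difference of the expectations of $\log P^\gamma_{W_c^t|D_t,W_\phi}$ under the joint and the conditional product measures, substitute the Gibbs form so the prior and partition-function terms cancel, and use the independence of $W_\phi$ and $D_t$ to identify the product-measure expectation of the empirical risk with the population risk before averaging over $W_\phi$. You merely make explicit the cancellation bookkeeping that the paper leaves implicit, so no further changes are needed.
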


To the best of our knowledge, these results are the first exact characterizations of the expected transfer generalization error for the $\alpha$-weighted and two-stage Gibbs algorithm. Note that both Theorem~\ref{Theorem: Gibbs alpha Transfer result} and Theorem~\ref{Theorem: Two-stage-ERM approach} only assume that the loss function is non-negative and the training set of source and target are independent, and they hold even for non-i.i.d training samples in source and target training sets.

The expected transfer generalization errors are non-negative, i.e., $\overline{\text{gen}}_{\alpha}(P_{D_s},P_{D_t})\ge 0$ and $\overline{\text{gen}}_{\beta}(P_{D_s},P_{D_t})\ge 0$, which follows by the non-negativity of the conditional symmetrized KL information.



\subsection{Example: Mean Estimation}\label{sec: example}
We now consider a simple mean estimation problem, where the symmetrized KL information can be computed exactly, to demonstrate the usefulness of our Theorems. All details are provided in Appendix~\ref{app: Mean Estimation}.

Consider the problem of learning the mean $\vmu_t \in \mathbb{R}^d$ of the target task using $n$ i.i.d. source samples $D_s=\{Z^s_i\}_{i=1}^{n}$ and $m$ i.i.d. target samples  $D_t=\{Z^t_j\}_{j=1}^{m}$.
We assume that the samples
from the source and target tasks satisfying $\mathbb{E}[Z^s]=\vmu_s$, $\mathrm{cov}[Z^s] = \sigma_s^2 I_d$ and $\mathbb{E}[Z^t]=\vmu_t$, $\mathrm{cov}[Z^t] = \sigma_t^2 I_d$, respectively.
We adopt the mean-squared loss $\ell(\vw,\vz) = \|\vz-\vw\|_2^2$, and assume a Gaussian prior for the mean $\pi (\vw) =  \mathcal{N}(\vmu_0,\sigma^2_0 I_d)$.

For the $\alpha$-weighted Gibbs algorithm, if we set inverse-temperature $\gamma=\frac{m+n}{2\sigma^2}$ and $\alpha = \frac{m}{m+n}$, then the $(\frac{m+n}{2\sigma^2},\mathcal{N}(\vmu_0,\sigma^2_0 I_d),L_E(\vw_\alpha,d_s,d_t))$-Gibbs algorithm is given by the following posterior \citep{murphy2007conjugate},
\begin{align}\label{equ:mean_alg}
    P_{W_\alpha|D_t,D_s}^{\gamma}(\vw_\alpha|D_s,D_t)\sim \mathcal{N}\Big( \vm_\alpha , \sigma_1^2 I_d\Big),
\end{align}
with $\vm_\alpha = \frac{\sigma_1^2 }{\sigma_0^2}\vmu_0 +\frac{\sigma_1^2 }{\sigma^2}\big(\sum_{i=1}^{ n} Z^s_i +\sum_{j=1}^{m} Z^t_j \big)$, and $\sigma_1^2=\frac{\sigma_0^2 \sigma^2}{(m+n)\sigma_0^2 +\sigma^2}$. Since $P_{W_\alpha|D_s,D_t}^{\gamma}$ is Gaussian, the conditional symmetrized KL information does not depend on the distribution $P_{Z^t}$ when $\mathrm{cov}[Z^t] = \sigma_t^2 I_d$, i.e.,
\begin{align}
    I_{\mathrm{SKL}}(W_\alpha;D_t|D_s) &= \frac{md \sigma_0^2 \sigma_t^2}{((m+n)\sigma_0^2 + \sigma^2)\sigma^2}.
\end{align}
From Theorem~\ref{Theorem: Gibbs alpha Transfer result}, the expected transfer generalization error of this algorithm can be computed exactly as:
\begin{align}\label{equ:alpha_Gaussian}
  \overline{\text{gen}}_{\alpha}(P_{D_s},P_{D_t}) &= \frac{I_{\mathrm{SKL}}(W_\alpha;D_t|D_s)}{\gamma \alpha}
  = \frac{2d \sigma_0^2 \sigma_t^2}{(m+n)(\sigma_0^2 + \frac{1}{2\gamma})}.
\end{align}
For the two-stage Gibbs algorithm, we learn the first $d_\phi$ components $\vmu_\phi \in \mathbb{R}^{d_\phi}$ using source samples, and use the $(\frac{m}{2\sigma^2}, \mathcal{N}(\vmu_{0,c},\sigma^2_0 I_{d_c}), L_E^{S2}(\vmu_{\phi},\vw_c^t,d_t))$-Gibbs algorithm to learn the remain $d_c=d-d_\phi$ components. Following similar steps, by Theorem~\ref{Theorem: Two-stage-ERM approach}, we have
\begin{align}\label{equ:beta_Gaussian}
 \overline{\text{gen}}_{\beta}(P_{D_s},P_{D_t}) &= \frac{I_{\mathrm{SKL}}(W_c^t;D_t|W_{\phi})}{\gamma }= \frac{2d_c \sigma_0^2 \sigma_t^2}{m(\sigma_0^2 +\frac{1}{2\gamma}) }.
\end{align}

\begin{remark}[Comparison with Supervised Learning]
It is shown in \citepalias{OurGibbsPaper} that the generalization error of a supervised Gibbs algorithm is
\begin{equation}\label{Eq: supervised gen}
  \overline{\text{gen}}(P_{W|D_t}^{\gamma},P_{D_t})=\frac{2 d \sigma_0^2 \sigma_t^2 }{m(\sigma_0^2 +\frac{1}{2\gamma} )},
\end{equation}
where $P_{W|D_t}^{\gamma}$ is $(\frac{m}{2\sigma^2}, \mathcal{N}(\vmu_{0},\sigma^2_0 I_{d}),L_E(w,d_t))$-Gibbs algorithm. Comparing to the supervised learning algorithm, the $\alpha$-weighted Gibbs algorithm reduces the generalization error to $\mathcal{O}(\frac{d}{m+n})$ by fitting $n$ source samples and $m$ target samples simultaneously, and the two-stage Gibbs algorithm achieves the rate of $\mathcal{O}(\frac{d_c}{m})$ by only learning $\vw_c^t \in \mathbb{R}^{d_c}$ from the target samples $D_t$, and learning $\vw_\phi$ from $D_s$.
\end{remark}

\begin{remark}[Effect of Source samples] As shown in \eqref{equ:alpha_Gaussian} and \eqref{equ:beta_Gaussian}, the transfer generalization errors of this mean estimation problem do not depend on the distribution of sources samples $D_s$. The reason is that the effect of sources samples is cancelled out in generalization error by subtracting the empirical risk from the population risk. Although different sources samples (distribution) do not change generalization error, they will influence the population risks and excess risks, and more detailed discussion is provided in Appendix~\ref{app: Mean Estimation}.
\end{remark}


\subsection{Distribution-free Upper Bounds}\label{sec: Distribution-free Upper Bound}
To understand the behaviour of expected transfer generalization error, we provide distribution-free upper bounds in this subsection. These bounds quantify how the generalization errors of $\alpha$-weighted and two-stage Gibbs algorithms depend on the number of target  (source) samples $m$ ($n$), and can be applied when directly computing symmetrized KL information is hard.

We first provide a conditional mutual information based upper bound on the expected transfer generalization error for any general learning algorithm $P_{W|D_s,D_t}$ under i.i.d and $\sigma$-sub-Gaussian assumption.
\begin{theorem}[Proved in Appendix~\ref{app: proof Theorem: General result}]\label{Theorem: General result}
Suppose that the target training samples $D_t=\{Z_j^t\}_{j=1}^m$ are i.i.d generated from the distribution $P_Z^t$, and the non-negative loss function $\ell(w,Z)$ is $\sigma$-sub-Gaussian\footnote{A random variable $X$ is $\sigma$-sub-Gaussian if $\log \mathbb{E}[e^{\lambda(X-\mathbb{E}X)}] \le \frac{\sigma^2\lambda^2}{2}$, $\forall \lambda \in \mathbb{R} $.}
under the distribution $P_Z^t \otimes P_W$. Then the following upper bound holds
\begin{align}\label{eq: General upper bound transfer}
   &|\overline{\text{gen}}(P_{W|D_s,D_t},P_{D_s},P_{D_t})|
\leq
   \sqrt{\frac{2 \sigma^2}{m} I(W;\!D_t|\!D_s)}.
\end{align}

\end{theorem}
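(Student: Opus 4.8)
The plan is to reduce this bound to the classical mutual-information generalization bound for supervised learning \citep{xu2017information} by conditioning on the source data $D_s$, and then to re-average over $D_s$. First I would exploit the independence $D_s \perp D_t$: for a fixed realization $D_s = d_s$, the conditional law of $(W,D_t)$ factorizes as $P_{W|D_s=d_s,D_t}\otimes P_{D_t}$, so $D_t$ remains an i.i.d.\ sample from $P_Z^t$ and $W$ is its randomized output. Because the target samples are i.i.d., $L_P(W,P_{D_t}) = \mathbb{E}_{P_Z^t}[\ell(W,Z)]$, and the conditional generalization gap
\begin{equation}
g(d_s) \triangleq \mathbb{E}\!\left[L_P(W,P_{D_t}) - L_E(W,D_t)\,\middle|\, D_s = d_s\right]
\end{equation}
is exactly the expected generalization error of a supervised learning algorithm trained on $D_t$, with $\overline{\text{gen}} = \mathbb{E}_{D_s}[g(D_s)]$.

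Next I would bound $g(d_s)$ for each fixed $d_s$ via the Donsker--Varadhan variational representation of KL divergence against the product reference measure $\mu_{d_s} \triangleq P_{W|D_s=d_s}\otimes P_{D_t}$. Under $\mu_{d_s}$ the hypothesis and the target sample are independent, so the empirical risk is unbiased for the population risk and the gap is mean-zero. Donsker--Varadhan then gives, for every $\lambda \in \mathbb{R}$,
\begin{equation}
\lambda\, g(d_s) \le I(W;D_t\mid D_s = d_s) + \log \mathbb{E}_{\mu_{d_s}}\!\left[e^{\lambda\left(L_P(W,P_{D_t}) - L_E(W,D_t)\right)}\right],
\end{equation}
where the KL term $D(P_{W,D_t|D_s=d_s}\,\|\,\mu_{d_s})$ equals $I(W;D_t\mid D_s=d_s)$ since $P_{D_t|D_s}=P_{D_t}$. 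The sub-Gaussian hypothesis controls the log-moment-generating function: as $L_E(W,D_t)-L_P(W,P_{D_t})$ is an average of $m$ i.i.d.\ centered $\sigma$-sub-Gaussian terms under $\mu_{d_s}$, it is $\tfrac{\sigma^2}{m}$-sub-Gaussian, so the last term is at most $\tfrac{\lambda^2\sigma^2}{2m}$. Optimizing the resulting quadratic inequality over $\lambda$, and repeating with $-\lambda$ to capture the absolute value, yields $|g(d_s)| \le \sqrt{\tfrac{2\sigma^2}{m}\,I(W;D_t\mid D_s=d_s)}$.

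Finally I would re-average over the source data: the triangle inequality together with Jensen's inequality (concavity of $\sqrt{\cdot}$) gives
\begin{equation}
\big|\overline{\text{gen}}\big| \le \mathbb{E}_{D_s}\!\left[\sqrt{\tfrac{2\sigma^2}{m}\,I(W;D_t\mid D_s=d_s)}\right] \le \sqrt{\tfrac{2\sigma^2}{m}\,\mathbb{E}_{D_s}\big[I(W;D_t\mid D_s=d_s)\big]},
\end{equation}
and the definition of conditional mutual information identifies $\mathbb{E}_{D_s}[I(W;D_t\mid D_s=d_s)] = I(W;D_t\mid D_s)$, completing the argument. I expect the main obstacle to be the sub-Gaussian step: one must verify that the per-hypothesis sub-Gaussianity survives under the reference marginal $P_{W|D_s=d_s}$ rather than the marginal $P_W$ stated in the assumption, and that averaging $m$ i.i.d.\ target samples correctly sharpens the parameter from $\sigma$ to $\sigma/\sqrt{m}$. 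The conditioning and Jensen steps are then routine.
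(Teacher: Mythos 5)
Your architecture --- condition on $D_s=d_s$, run the Xu--Raginsky full-sample Donsker--Varadhan argument against the reference measure $P_{W|D_s=d_s}\otimes P_{D_t}$, optimize $\lambda$ for each $d_s$, then average the square-root bound over $D_s$ by Jensen --- is genuinely different from the paper's, and the obstacle you flag at the end is a real gap under the hypothesis as stated. Two things go wrong. First, because you optimize over $\lambda$ separately for each fixed $d_s$, you need the log-MGF bound $\log \mathbb{E}_{P_{W|D_s=d_s}\otimes P_{D_t}}\big[e^{\lambda(L_P-L_E)}\big]\le \lambda^2\sigma^2/(2m)$ pointwise in $d_s$; sub-Gaussianity under the marginal product $P_Z^t\otimes P_W$ does not imply sub-Gaussianity under $P_Z^t\otimes P_{W|D_s=d_s}$, since the conditional marginal is a reweighting of $P_W$ that can inflate the tails for particular $d_s$. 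Second, even setting the conditioning aside, the sharpening from $\sigma$ to $\sigma/\sqrt{m}$ for the empirical average $L_E(W,D_t)$ requires the single-sample loss to be $\sigma$-sub-Gaussian conditionally on $W=w$, so that the $m$ MGF factors multiply for each fixed $w$; the stated hypothesis only controls the MGF after averaging over $W$, and a mixture of per-$w$ MGFs does not factorize across the $m$ samples. Your route is valid under the alternative hypothesis that $\ell(w,Z)$ is $\sigma$-sub-Gaussian under $P_Z^t$ for every $w\in\mathcal{W}$ --- which the paper explicitly notes in a remark as a sufficient substitute --- but not under the product-measure hypothesis of the theorem.

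The paper's proof is engineered to avoid both issues. It first decomposes the gap into single-sample terms, $|\overline{\text{gen}}|\le\frac{1}{m}\sum_{i=1}^m\big|\mathbb{E}_{P_{W,Z_i^t}}[\ell(W,Z_i^t)]-\mathbb{E}_{P_W\otimes P_Z^t}[\ell(W,Z^t)]\big|$, so that only the sub-Gaussianity of the single-sample loss is ever invoked and no $\sqrt{m}$ sharpening is needed. It then applies Donsker--Varadhan to $(W,Z_i^t)$ given $D_s=d_s$, but crucially takes the expectation over $D_s$ at fixed $\lambda$ before bounding the log-MGF, using Jensen in the form $-\mathbb{E}_{P_{D_s}}\big[\log\mathbb{E}_{P_{W|D_s}\otimes P_Z^t}[e^{\lambda\ell}]\big]\ge-\log\mathbb{E}_{P_W\otimes P_Z^t}[e^{\lambda\ell}]$, which is exactly what converts the conditional reference marginal into the marginal $P_W$ appearing in the assumption; only then does it optimize $\lambda$ to get $\sqrt{2\sigma^2 I(W;Z_i^t|D_s)}$ per sample, and it finishes with concavity of the square root together with $\frac{1}{m}\sum_{i=1}^m I(W;Z_i^t|D_s)\le\frac{1}{m}I(W;D_t|D_s)$, which follows from the chain rule and the i.i.d.\ structure of $D_t$. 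To salvage your full-sample route you should either strengthen the hypothesis to the per-$w$ version, or restructure the argument so that the expectation over $D_s$ is taken inside the Donsker--Varadhan step before $\lambda$ is optimized.
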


The following distribution-free upper bound on the expected transfer generalization error for $\alpha$-weighted Gibbs algorithm can be obtained by combining the upper bound in Theorem~\ref{Theorem: General result} and the exact characterization in Theorem~\ref{Theorem: Gibbs alpha Transfer result}.

\begin{theorem}[Proved in Appendix~\ref{app: Distribution-free Upper Bound}]\label{Theorem:  distribution-free upper bound weighted}
Suppose that the target training samples $D_t=\{Z_j^t\}_{j=1}^m$ are i.i.d generated from the distribution $P_Z^t$, and the non-negative loss function $\ell(w,z)$ is $\sigma_\alpha$-sub-Gaussian
under the distribution $P_Z^t \otimes P_{W_\alpha}$.
If we further assume $C_\alpha\le \frac{L(W_\alpha;D_t|D_s)}{I(W_\alpha;D_t|D_s)}$ for some $C_\alpha \ge 0$, then for the $\alpha$-weighted Gibbs algorithm and $0<\alpha<1$,
\begin{align}
    &\overline{\text{gen}}_{\alpha}(P_{D_s},P_{D_t}) \leq \frac{2\sigma_\alpha^2\gamma\alpha}{(1+C_\alpha)m}.
\end{align}
\end{theorem}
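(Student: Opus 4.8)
The plan is to chain together the two results already established in the excerpt—the exact characterization in Theorem~\ref{Theorem: Gibbs alpha Transfer result} and the general sub-Gaussian bound in Theorem~\ref{Theorem: General result}—and then exploit the hypothesis $C_\alpha \le L(W_\alpha;D_t|D_s)/I(W_\alpha;D_t|D_s)$ to turn the resulting estimate into a self-bounding (quadratic) inequality in the generalization error itself, which can then be solved.

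First I would expand the conditional symmetrized KL information into its two constituent pieces,
\[
I_{\mathrm{SKL}}(W_\alpha;D_t|D_s) = I(W_\alpha;D_t|D_s) + L(W_\alpha;D_t|D_s),
\]
and invoke the assumption $L(W_\alpha;D_t|D_s) \ge C_\alpha\, I(W_\alpha;D_t|D_s)$ to obtain $I_{\mathrm{SKL}}(W_\alpha;D_t|D_s) \ge (1+C_\alpha)\, I(W_\alpha;D_t|D_s)$. Combining this with the exact identity $\overline{\text{gen}}_{\alpha}(P_{D_s},P_{D_t}) = I_{\mathrm{SKL}}(W_\alpha;D_t|D_s)/(\gamma\alpha)$ from Theorem~\ref{Theorem: Gibbs alpha Transfer result} yields an upper bound on the conditional mutual information in terms of the generalization error,
\[
I(W_\alpha;D_t|D_s) \le \frac{\gamma\alpha}{1+C_\alpha}\,\overline{\text{gen}}_{\alpha}(P_{D_s},P_{D_t}).
\]

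Next, I would apply Theorem~\ref{Theorem: General result} with $W = W_\alpha$ and $\sigma = \sigma_\alpha$, noting that $\overline{\text{gen}}_{\alpha}(P_{D_s},P_{D_t}) \ge 0$ by non-negativity of $I_{\mathrm{SKL}}$, and substitute the mutual-information bound just derived into its square-root estimate:
\[
\overline{\text{gen}}_{\alpha}(P_{D_s},P_{D_t}) \le \sqrt{\frac{2\sigma_\alpha^2}{m}\,I(W_\alpha;D_t|D_s)} \le \sqrt{\frac{2\sigma_\alpha^2\,\gamma\alpha}{m(1+C_\alpha)}\,\overline{\text{gen}}_{\alpha}(P_{D_s},P_{D_t})}.
\]
Squaring both sides (legitimate since both are non-negative) gives $\overline{\text{gen}}_{\alpha}^2 \le \tfrac{2\sigma_\alpha^2\gamma\alpha}{m(1+C_\alpha)}\,\overline{\text{gen}}_{\alpha}$; dividing through by $\overline{\text{gen}}_{\alpha}$ when it is strictly positive (the claimed bound being trivial when it vanishes, as the right-hand side is non-negative) delivers exactly $\overline{\text{gen}}_{\alpha}(P_{D_s},P_{D_t}) \le \tfrac{2\sigma_\alpha^2\gamma\alpha}{(1+C_\alpha)m}$.

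The only mildly delicate point—rather than a genuine obstacle—is orienting the hypothesis correctly: $L \ge C_\alpha I$ must be used to \emph{lower}-bound $I_{\mathrm{SKL}}$ by a multiple of $I$, so that $I$ can in turn be \emph{upper}-bounded by the generalization error, which is the form needed to feed into Theorem~\ref{Theorem: General result}. One must also remember to dispose of the degenerate case $\overline{\text{gen}}_{\alpha} = 0$ before cancelling the common factor. Everything else is a direct substitution of the two prior theorems.
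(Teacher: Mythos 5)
Your proposal is correct and follows essentially the same route as the paper: both chain the exact identity $\overline{\text{gen}}_{\alpha}=I_{\mathrm{SKL}}(W_\alpha;D_t|D_s)/(\gamma\alpha)$ from Theorem~\ref{Theorem: Gibbs alpha Transfer result} with the bound $\overline{\text{gen}}_{\alpha}\le\sqrt{2\sigma_\alpha^2 I(W_\alpha;D_t|D_s)/m}$ from Theorem~\ref{Theorem: General result} and the inequality $(1+C_\alpha)I\le I_{\mathrm{SKL}}$. The only cosmetic difference is that the paper first solves the resulting inequality for $I(W_\alpha;D_t|D_s)$ and substitutes back, whereas you form the self-bounding inequality directly in $\overline{\text{gen}}_{\alpha}$ and cancel a factor; the algebra is equivalent.
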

\begin{remark}
Let $\alpha=\frac{m}{n+m}$, then we have
\begin{align}
&\overline{\text{gen}}_{\alpha}(P_{D_s},P_{D_t}) \leq \frac{2\sigma_\alpha^2\gamma}{(1+C_\alpha)(n+m)},
\end{align}
which is lower than the distribution-free upper bound for $(\gamma,\pi(w),L_E(w,d_t))$-Gibbs algorithm $P_{W|D_t}^\gamma$ provided in \citepalias[Theorem~2]{OurGibbsPaper}, i.e.,
$\overline{\text{gen}}_{\alpha}(P_{W|D_t}^\gamma,\!P_{D_t})\!\leq\!\frac{2\sigma^2\gamma}{(1+C_E)m}$, if $C_E\!=\!C_\alpha$ and $\sigma^2\!=\!\sigma_\alpha^2$.
\end{remark}
Using similar approach, we can obtain a distribution-free upper bound on the expected transfer generalization error for the two-stage Gibbs algorithm.
\begin{theorem}[Proved in Appendix ~\ref{app: Distribution-free Upper Bound}]\label{Theorem: distribution-free upper bound two-stage}
Suppose that the target training samples $D_t=\{Z_j^t\}_{j=1}^m$ are i.i.d generated from the distribution $P_Z^t$, and the non-negative loss function $\ell(w,z)$ is $\sigma_\beta$-sub-Gaussian
under distribution $P_Z^t\otimes P_{W_c^t|W_\phi=w_\phi}$ for all $w_\phi \in \mathcal{W}_\phi$.
If we further assume $C_\beta\le \frac{L(W_c^t;D_t|W_\phi)}{I(W_c^t;D_t|W_\phi)}$ for some $C_\beta \ge 0$, then for the two-stage Gibbs algorithm in~\eqref{Eq: two-stage Gibbs algorithm}, we have
\begin{align}
   \overline{\text{gen}}_{\beta}(P_{D_s},P_{D_t})\leq \frac{2\sigma_\beta^2\gamma}{(1+C_\beta)m}.
\end{align}
\end{theorem}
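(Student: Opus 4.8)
The plan is to mirror the self-bounding argument used for the $\alpha$-weighted case in Theorem~\ref{Theorem:  distribution-free upper bound weighted}, combining the exact characterization of Theorem~\ref{Theorem: Two-stage-ERM approach} with a sub-Gaussian upper bound on the same quantity, and then solving the resulting self-referential inequality. The two-stage case is structurally identical to the $\alpha$-weighted one, the only differences being that the exact characterization carries no factor of $\alpha$ and that the conditioning variable is $W_\phi$ rather than $D_s$.

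First I would establish the two-stage analogue of the general sub-Gaussian bound of Theorem~\ref{Theorem: General result}. In the two-stage construction the joint law factorizes as $P_{W_c^t|D_t,W_\phi}^\gamma \otimes P_{D_s,W_\phi}\otimes P_{D_t}$, so $W_\phi$ is independent of the target set $D_t$ and plays exactly the role that $D_s$ plays in Theorem~\ref{Theorem: General result}. Since $D_t$ is i.i.d. and the loss is $\sigma_\beta$-sub-Gaussian under $P_Z^t \otimes P_{W_c^t|W_\phi=w_\phi}$ for every $w_\phi$, running the proof of Theorem~\ref{Theorem: General result} verbatim with the conditioning variable $D_s$ replaced by $W_\phi$ yields
\begin{equation}
\overline{\text{gen}}_{\beta}(P_{D_s},P_{D_t}) \le \sqrt{\frac{2\sigma_\beta^2}{m}\, I(W_c^t;D_t|W_\phi)},
\end{equation}
where the absolute value is dropped because the two-stage generalization error is non-negative.

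Next I would invoke Theorem~\ref{Theorem: Two-stage-ERM approach}, which gives $\overline{\text{gen}}_{\beta}(P_{D_s},P_{D_t}) = I_{\mathrm{SKL}}(D_t;W_c^t|W_\phi)/\gamma$, together with the decomposition $I_{\mathrm{SKL}}(D_t;W_c^t|W_\phi) = I(W_c^t;D_t|W_\phi) + L(W_c^t;D_t|W_\phi)$. The hypothesis $C_\beta \le L(W_c^t;D_t|W_\phi)/I(W_c^t;D_t|W_\phi)$ then forces $I_{\mathrm{SKL}}(D_t;W_c^t|W_\phi) \ge (1+C_\beta)\,I(W_c^t;D_t|W_\phi)$, which rearranges into $I(W_c^t;D_t|W_\phi) \le \frac{\gamma}{1+C_\beta}\,\overline{\text{gen}}_{\beta}(P_{D_s},P_{D_t})$.

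Finally, substituting this bound on the conditional mutual information into the sub-Gaussian inequality produces an inequality of the form $G \le \sqrt{\tfrac{2\sigma_\beta^2}{m}\cdot\tfrac{\gamma}{1+C_\beta}\,G}$ with $G = \overline{\text{gen}}_{\beta}(P_{D_s},P_{D_t})$; squaring and cancelling one factor of $G$ (the claim is trivial when $G=0$) delivers the asserted bound $\overline{\text{gen}}_{\beta}(P_{D_s},P_{D_t}) \le \frac{2\sigma_\beta^2\gamma}{(1+C_\beta)m}$. The only genuinely new step is the first one: confirming that the proof of Theorem~\ref{Theorem: General result} transfers to $W_\phi$ in place of $D_s$. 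This is where I would be most careful, since it hinges on $W_\phi$ being independent of $D_t$ and on the sub-Gaussianity being posited conditionally on each value $W_\phi=w_\phi$; once that is in hand, the remainder is the same self-bounding manipulation as in the $\alpha$-weighted proof.
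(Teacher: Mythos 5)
Your proposal is correct, and the heart of it --- the self-bounding step that feeds the exact characterization $\overline{\text{gen}}_{\beta}=I_{\mathrm{SKL}}(W_c^t;D_t|W_\phi)/\gamma$ and the hypothesis $I_{\mathrm{SKL}}\ge(1+C_\beta)I$ into the square-root bound and then cancels one factor of the generalization error --- is exactly the paper's argument (the paper phrases it as first deriving $I(W_c^t;D_t|W_\phi)\le \frac{2\sigma_\beta^2\gamma^2}{(1+C_\beta)^2 m}$ and then substituting back, which is algebraically the same manipulation).

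Where you genuinely diverge is in how you obtain the intermediate inequality $\overline{\text{gen}}_{\beta}\le\sqrt{\tfrac{2\sigma_\beta^2}{m}I(W_c^t;D_t|W_\phi)}$. You re-run the proof of Theorem~\ref{Theorem: General result} with $W_\phi$ playing the role of the conditioning variable $D_s$, checking that $W_\phi\perp D_t$ and that the sub-Gaussianity is posited conditionally on each $w_\phi$. The paper instead applies Theorem~\ref{Theorem: General result} as stated, with the composite hypothesis $W=(W_c^t,W_\phi)$ and conditioning on $D_s$, and then collapses $I(W_c^t,W_\phi;D_t|D_s)$ to $I(W_c^t;D_t|W_\phi)$ via the chain rule together with the Markov structure of the two-stage algorithm ($W_\phi\perp D_t\mid D_s$, so $I(W_\phi;D_t|D_s)=0$, and $D_s\perp(W_c^t,D_t)\mid W_\phi$, so the extra conditioning on $D_s$ can be dropped). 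The paper's route avoids re-proving the general bound but requires these two conditional-independence identities; your route requires a (routine) re-derivation but sidesteps them, and it arguably fits the theorem's stated hypothesis more cleanly, since the sub-Gaussian assumption is given under $P_Z^t\otimes P_{W_c^t|W_\phi=w_\phi}$ for each $w_\phi$ rather than under the joint law of $(W_c^t,W_\phi)$ that a literal application of Theorem~\ref{Theorem: General result} would call for. Either way the conclusion is the same, and your proof is complete.
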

\begin{remark}[Choice of $C_\beta$ and $C_\alpha$]
Setting $C_\alpha=0$ in Theorem~\ref{Theorem:  distribution-free upper bound weighted} and $C_\beta=0$ in Theorem~\ref{Theorem: distribution-free upper bound two-stage} is always valid since the lautum information is always positive whenever the mutual information is positive.
\end{remark}

\subsection{Exact Characterization of Generalization Error Using Symmetrized KL divergence}

In this section, we provide exact characterizations of expected transfer generalization errors for $\alpha$-weighted and two-stage Gibbs algorithms using conditional symmetrized KL divergence by considering the Gibbs algorithm defined with the population risks. Such a result is very useful in the asymptotic analysis~Section~\ref{sec:limit_gen}.

\begin{theorem}[Proved in Appendix~\ref{App: new rep skl divergence}] \label{Theorem: rep in skl divergence transfer} The expected transfer generalization error of the $\alpha$-weighted Gibbs algorithm in~\eqref{equ:Gibbs-alpha-weighted} is given by:
\begin{align}
    & \overline{\text{gen}}_{\alpha}(P_{D_s},P_{D_t}) =\frac{D_{\mathrm{SKL}}(P_{W_\alpha|D_s,D_t}^{\gamma}\|P_{W_\alpha|D_s}^{\gamma,L_\alpha(w_\alpha,d_s,P_{D_t})}|P_{D_s} P_{D_t})}{\gamma \alpha},
\end{align}
where $P_{W_\alpha|D_s}^{\gamma,L_\alpha(w_\alpha,d_s,P_{D_t})}$ is  $(\gamma,\pi(w_\alpha),L_\alpha(w_\alpha,d_s,P_{D_t}))$-Gibbs algorithm with  $L_\alpha(w,d_s,P_{D_t})\triangleq \alpha L_P(w_\alpha,P_{D_t})+(1-\alpha)L_E(w_\alpha,d_s)$.
\end{theorem}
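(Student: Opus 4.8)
The plan is to evaluate the conditional symmetrized KL divergence on the right-hand side directly, by writing out both Gibbs posteriors and exploiting the fact that their energy functions differ only through the target term. Abbreviate the empirical-target posterior as $P_1 \triangleq P_{W_\alpha|D_s=d_s,D_t=d_t}^{\gamma}$, whose energy is $L_E(w_\alpha,d_s,d_t)=(1-\alpha)L_E(w_\alpha,d_s)+\alpha L_E(w_\alpha,d_t)$ with partition function $V_\alpha(d_s,d_t,\gamma)$, and the population-target posterior as $P_2 \triangleq P_{W_\alpha|D_s=d_s}^{\gamma,L_\alpha(w_\alpha,d_s,P_{D_t})}$, whose energy is $L_\alpha(w_\alpha,d_s,P_{D_t})=(1-\alpha)L_E(w_\alpha,d_s)+\alpha L_P(w_\alpha,P_{D_t})$ with partition function $\tilde V(d_s,\gamma)$. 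Since both densities are $\pi(w_\alpha)$ times a strictly positive exponential, they are mutually absolutely continuous and the symmetrized divergence is well defined.

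First I would compute the two one-sided divergences for fixed $(d_s,d_t)$. In the log-ratio the common prior and the shared source term $(1-\alpha)L_E(w_\alpha,d_s)$ cancel, so the energy difference collapses to $\gamma\alpha\big(L_P(w_\alpha,P_{D_t})-L_E(w_\alpha,d_t)\big)$, yielding
\begin{align}
D(P_1\|P_2) &= \gamma\alpha\,\mathbb{E}_{P_1}\!\big[L_P(W_\alpha,P_{D_t})-L_E(W_\alpha,d_t)\big]-\log V_\alpha(d_s,d_t,\gamma)+\log\tilde V(d_s,\gamma),\nonumber\\
D(P_2\|P_1) &= \gamma\alpha\,\mathbb{E}_{P_2}\!\big[L_E(W_\alpha,d_t)-L_P(W_\alpha,P_{D_t})\big]+\log V_\alpha(d_s,d_t,\gamma)-\log\tilde V(d_s,\gamma).\nonumber
\end{align}
The decisive structural feature is that, upon summing, the two log-partition-function terms cancel, leaving an expression free of these (generally intractable) normalizers.

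Next I would average the sum $D(P_1\|P_2)+D(P_2\|P_1)$ over $P_{D_s}P_{D_t}$, using that the source and target sets are independent. The contribution carried by $P_1$ reproduces exactly $\gamma\alpha\,\overline{\text{gen}}_{\alpha}(P_{D_s},P_{D_t})$, because $P_{W_\alpha,D_s,D_t}=P_1\otimes P_{D_s}P_{D_t}$ and $\overline{\text{gen}}_{\alpha}$ is precisely the population-minus-empirical target gap under this joint law. The contribution carried by $P_2$ vanishes after averaging: since $P_2$ does not depend on $d_t$, the hypothesis it produces is independent of $D_t$ given $D_s$, so $\mathbb{E}_{P_{D_t}}\mathbb{E}_{P_2}[L_E(W_\alpha,D_t)]=\mathbb{E}_{P_2}[L_P(W_\alpha,P_{D_t})]$ by the very definition $L_P(w,P_{D_t})=\mathbb{E}_{P_{D_t}}[L_E(w,D_t)]$. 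Collecting the surviving term and dividing by $\gamma\alpha$ then gives the claimed identity.

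I expect the main obstacle to be the bookkeeping around conditioning and independence rather than any hard estimate: one must verify that $P_2$ is genuinely $D_t$-free so the population-risk identity applies, and confirm that the partition-function contributions cancel under symmetrization — they would survive in either one-sided KL divergence taken alone. These two observations are exactly what single out the symmetrized, rather than the ordinary, KL divergence as the correct object, and once they are in place the remainder is a routine substitution of the Gibbs densities.
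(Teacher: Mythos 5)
Your proposal is correct and follows essentially the same route as the paper's proof in Appendix~\ref{App: new rep skl divergence}: the paper packages your partition-function cancellation step into Lemma~\ref{Lemma: exact SKL two Gibbs} (showing $D_{\mathrm{SKL}}$ between two Gibbs posteriors with common prior and inverse temperature reduces to $\gamma$ times the expectation difference of the energy gap), and then performs exactly your averaging argument, using that $P_{W_\alpha|D_s}^{\gamma,L_\alpha(w_\alpha,d_s,P_{D_t})}$ is $D_t$-free so its contribution vanishes via $L_P(w,P_{D_t})=\mathbb{E}_{P_{D_t}}[L_E(w,D_t)]$ while the other term yields $\gamma\alpha\,\overline{\text{gen}}_{\alpha}(P_{D_s},P_{D_t})$.
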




 Similar result can be obtained for the two-stage Gibbs algorithm.

\begin{theorem}[Proved in Appendix~\ref{App: new rep skl divergence}]\label{Theorem: rep in skl divergence transfer two-stage}
The expected transfer generalization error of the two-stage Gibbs algorithm in~\eqref{Eq: two-stage Gibbs algorithm} is given by:
\begin{align}
   &\overline{\text{gen}}_{\beta}(P_{D_s},P_{D_t})= \frac{D_{\mathrm{SKL}}(P^\gamma_{W_c^t|D_t,W_\phi}\| P_{W_c^t|W_\phi}^{\gamma,L_P(w_\phi,w_c^t,P_{D_t})}|P_{D_t}P_{W_\phi})}{\gamma},
\end{align}
where $P_{W_c^t|W_\phi}^{\gamma,L_P(w_\phi,w_c^t,P_{D_t})}$ is the $(\gamma,\!\pi(w_c^t),\!L_P(w_\phi,\!w_c^t,\!P_{D_t}))$-Gibbs algorithm.
\end{theorem}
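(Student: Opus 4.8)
The plan is to mirror the strategy behind Theorem~\ref{Theorem: rep in skl divergence transfer}, exploiting that the two-stage Gibbs posterior $P^\gamma_{W_c^t|D_t,W_\phi}$ and the reference measure $P_{W_c^t|W_\phi}^{\gamma,L_P(w_\phi,w_c^t,P_{D_t})}$ are both Gibbs distributions sharing the \emph{same} prior $\pi(w_c^t)$, differing only in whether the energy is the empirical second-stage risk $L_E^{S2}(w_\phi,w_c^t,d_t)$ or the population risk $L_P(w_\phi,w_c^t,P_{D_t})$. First I would write out the log-likelihood ratio of the two densities. The priors cancel, so the ratio equals $\gamma\bigl[L_P(w_\phi,w_c^t,P_{D_t}) - L_E^{S2}(w_\phi,w_c^t,d_t)\bigr]$ plus the difference of the two log-partition functions, and crucially this latter term is constant in $w_c^t$.

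Next I would substitute this into the identity $D_{\mathrm{SKL}}(P\|Q) = \mathbb{E}_{P}[\log(P/Q)] - \mathbb{E}_{Q}[\log(P/Q)]$, applied conditionally and averaged over $P_{D_t}P_{W_\phi}$. Since the log-partition-function terms do not depend on $w_c^t$ and both conditional distributions integrate to one, these terms cancel between the two expectations. What survives is $\gamma$ times the difference of $\mathbb{E}[L_P - L_E^{S2}]$ evaluated under the two posteriors, averaged over $P_{D_t}P_{W_\phi}$.

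The term taken under $P^\gamma_{W_c^t|D_t,W_\phi}$ and averaged over $P_{D_t}P_{W_\phi}$ is exactly $\overline{\text{gen}}_{\beta}(P_{D_s},P_{D_t})$ by its definition, after marginalizing out $D_s$. The crux is then to show the remaining term vanishes. Here I would use that under the reference measure the hypothesis $W_c^t$ is conditionally independent of $D_t$ given $W_\phi$, because its energy depends on $D_t$ only through the fixed distribution $P_{D_t}$, not through the realization. This lets me swap the order of integration and apply $\mathbb{E}_{P_{D_t}}[L_E^{S2}(w_\phi,w_c^t,D_t)] = L_P(w_\phi,w_c^t,P_{D_t})$ from the definition of the population risk, so that $\mathbb{E}_{P_{D_t}}[L_P - L_E^{S2}] = 0$ holds pointwise in $(w_\phi,w_c^t)$.

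The main obstacle is the bookkeeping of the conditioning structure: ensuring the independence $W_c^t \perp D_t \mid W_\phi$ under the reference measure is invoked correctly and that Fubini is justified to exchange the $P_{D_t}$ and reference-posterior integrals. The non-negativity of the loss together with the finiteness of the generalization error (guaranteed by Theorem~\ref{Theorem: Two-stage-ERM approach}) should suffice. Assembling the two pieces gives $D_{\mathrm{SKL}}(\cdots\,|P_{D_t}P_{W_\phi}) = \gamma\,\overline{\text{gen}}_{\beta}(P_{D_s},P_{D_t})$, and dividing by $\gamma$ yields the claim.
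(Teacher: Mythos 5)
Your proposal is correct and follows essentially the same route as the paper: the paper isolates your first two steps as a standalone lemma (the prior and log-partition terms cancel in the symmetrized KL between two Gibbs measures with the same prior and inverse temperature, leaving $\gamma$ times the difference of expectations of $L_P - L_E^{S2}$), and then, exactly as you do, identifies the term under $P^\gamma_{W_c^t|D_t,W_\phi}$ with $\overline{\text{gen}}_{\beta}$ and kills the term under the reference measure by exchanging the order of integration with $P_{D_t}$ and using $\mathbb{E}_{P_{D_t}}[L_E^{S2}(w_\phi,w_c^t,D_t)]=L_P(w_\phi,w_c^t,P_{D_t})$. No gaps; the conditional-independence bookkeeping you flag is handled in the paper in the same implicit way.
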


More discussions about the connection between the results obtained using symmetrized KL information and those of symmetrized KL divergence is provided in Appendix~\ref{App: new rep skl divergence}.

\section{Asymptotic Behavior of Generalization Error and Excess Risk}\label{sec:asymptotic}

In this section, we first consider the asymptotic behavior of the generalization error for the two Gibbs algorithms as the inverse temperature $\gamma \to \infty$. Note that in this regime, both Gibbs algorithms converge to the corresponding ERM algorithms, and the distribution-free upper bounds obtained in the previous section would become vacuous. Then, we show that such results can be applied to characterize the excess risks of the two ERM algorithms as $m,n \to \infty$, which provides some intuitions for the selection of different transfer learning algorithms.



\subsection{Generalization Error}\label{sec:limit_gen}
\textbf{$\alpha$-weighted-ERM:}
We assume that there exists a unique $\hat{W}_{\alpha}(D_s,D_t)$ and a unique $\hat{W}_{\alpha}(D_s)$ that minimizes the risk $L_E(w,D_s,D_t)$ and $L_\alpha(w,D_s,P_{D_t})$, respectively, i.e.,
\begin{align}
    \hat{W}_{\alpha}(D_s,D_t)& = \argmin_{w\in \mathcal{W}} L_E(w,D_s,D_t), \\
    \hat{W}_{\alpha}(D_s)& = \argmin_{w\in \mathcal{W}} L_\alpha(w,D_s,P_{D_t}).
\end{align}
It is shown in \citep{hwang1980laplace} that if the following Hessian matrices
\begin{align}
H^*(D_s,D_t) &\triangleq \nabla^2_w L_E(w,D_s,D_t)\big|_{w = \hat{W}_\alpha(D_s,D_t)},\\
H^*(D_s) &\triangleq \nabla^2_w L_\alpha(w,D_s,P_{D_t})\big|_{w = \hat{W}_\alpha(D_s)}
\end{align}
are not singular, then, as $\gamma \to \infty$
\begin{align}\label{equ:Gaussian_approx}
    P_{W_\alpha|D_s,D_t}^\gamma &\to \mathcal{N}(\hat{W}_\alpha(D_s,D_t), \frac{1}{\gamma}H^*(D_s,D_t)^{-1}),
    \text{ and } P_{W_\alpha|D_s}^\gamma \to \mathcal{N}(\hat{W}_\alpha(D_s), \frac{1}{\gamma}H^*(D_s)^{-1})
\end{align}
in distribution. Thus, the conditional symmetrized KL divergence in Proposition~\ref{Theorem: rep in skl divergence transfer} can be evaluated directly using Gaussian approximations.

\begin{proposition}[Proved in Appendix~\ref{app:gen_limit}]\label{Prop: large gamma alpha}
If the Hessian matrices $H^*(D_s,D_t) = H^*(D_s) = H^*$ are independent of $D_s$ and $D_t$, then the generalization error of the $\alpha$-weighted-ERM algorithm is
\begin{equation*}
     \overline{\text{gen}}_{\alpha}(P_{D_t},\!P_{D_s}) =\frac{\mathbb{E}_{P_{D_s\!,D_t}}[\|\hat{W}_\alpha(\!D_s,\!D_t\!)-\hat{W}_\alpha(\!D_s\!)\|^2_{H^*}]}{\alpha},
\end{equation*}
where the notation $\|W\|_H^2 \triangleq W^\top H W$.
\end{proposition}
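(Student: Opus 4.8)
The plan is to combine the symmetrized-KL-divergence characterization of Theorem~\ref{Theorem: rep in skl divergence transfer} with the Laplace (Gaussian) approximation~\eqref{equ:Gaussian_approx} and then evaluate the resulting Gaussian symmetrized KL divergence in closed form. By Theorem~\ref{Theorem: rep in skl divergence transfer},
\[
\overline{\text{gen}}_{\alpha}(P_{D_s},P_{D_t}) = \frac{D_{\mathrm{SKL}}\big(P_{W_\alpha|D_s,D_t}^{\gamma}\,\big\|\,P_{W_\alpha|D_s}^{\gamma,L_\alpha}\,\big|\,P_{D_s}P_{D_t}\big)}{\gamma\alpha},
\]
so it suffices to compute this conditional divergence in the limit $\gamma\to\infty$.

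First I would substitute the Gaussian limits from~\eqref{equ:Gaussian_approx}: as $\gamma\to\infty$ the two Gibbs posteriors converge in distribution to $\mathcal{N}\big(\hat{W}_\alpha(D_s,D_t),\tfrac{1}{\gamma}H^*(D_s,D_t)^{-1}\big)$ and $\mathcal{N}\big(\hat{W}_\alpha(D_s),\tfrac{1}{\gamma}H^*(D_s)^{-1}\big)$, respectively. Under the hypothesis $H^*(D_s,D_t)=H^*(D_s)=H^*$, both limiting Gaussians share the common covariance $\Sigma=\tfrac{1}{\gamma}(H^*)^{-1}$ and differ only in their means $\mu_1=\hat{W}_\alpha(D_s,D_t)$ and $\mu_2=\hat{W}_\alpha(D_s)$.

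Next I would apply the closed-form symmetrized KL divergence between two Gaussians,
\[
D_{\mathrm{SKL}}=\frac{1}{2}\Big[\Tr(\Sigma_2^{-1}\Sigma_1)+\Tr(\Sigma_1^{-1}\Sigma_2)-2d+(\mu_1-\mu_2)^\top(\Sigma_1^{-1}+\Sigma_2^{-1})(\mu_1-\mu_2)\Big],
\]
in which the two log-determinant terms cancel automatically under symmetrization. With the equal covariances $\Sigma_1=\Sigma_2=\Sigma$, the trace terms reduce to $\Tr(I_d)+\Tr(I_d)-2d=0$, and inserting $\Sigma^{-1}=\gamma H^*$ leaves the conditional divergence (given $D_s,D_t$) equal to $\gamma\,\|\hat{W}_\alpha(D_s,D_t)-\hat{W}_\alpha(D_s)\|_{H^*}^2$. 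Averaging over $P_{D_s}P_{D_t}$, which equals $P_{D_s,D_t}$ since $D_s$ and $D_t$ are independent, and dividing by $\gamma\alpha$ then cancels the factor $\gamma$ and produces the claimed expression.

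The main obstacle will be justifying that the symmetrized KL divergence of the exact Gibbs posteriors actually converges to that of their Gaussian surrogates, since convergence in distribution alone does not imply convergence of KL-type functionals. I would address this by appealing to the Laplace-approximation estimates of~\citep{hwang1980laplace} underlying~\eqref{equ:Gaussian_approx}, which control the higher-order corrections to the posteriors and guarantee that the remainder vanishes as $\gamma\to\infty$. The equal-Hessian assumption is precisely what makes the calculation collapse so cleanly: without it the covariances differ, so neither the trace terms nor the log-determinant terms cancel, and one is left with additional nonvanishing contributions that would alter the form of the result.
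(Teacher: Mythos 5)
Your proposal is correct and follows essentially the same route as the paper's proof in Appendix~\ref{app:gen_limit}: invoke Theorem~\ref{Theorem: rep in skl divergence transfer}, substitute the Gaussian limits~\eqref{equ:Gaussian_approx}, and evaluate the symmetrized KL divergence between two equal-covariance Gaussians, which reduces to $\gamma\|\hat{W}_\alpha(D_s,D_t)-\hat{W}_\alpha(D_s)\|^2_{H^*}$ (the paper reaches this by expanding the log-ratio and using the $\mathbb{E}_{\Delta}$ cancellation rather than quoting the closed-form Gaussian formula, but the computation is identical). Your remark that convergence in distribution alone does not justify passing the limit inside the KL functional is a fair point that the paper itself does not address.
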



We can use Proposition~\ref{Prop: large gamma alpha} to obtain the generalization error of the maximum likelihood estimates (MLE) in the asymptotic regime $m, n\to \infty$. More specifically, suppose that we have $m$ and $n$ i.i.d. samples generated from the target distribution $P_Z^t$ and source distribution $P_Z^s$, respectively. We want to fit the training data with a parametric distribution family $\{f(z|\vw_\alpha)\}$ using the $\alpha$-weighted-ERM algorithm, where $\vw_\alpha\in \mathcal{W}\subset \mathbb{R}^d$ denotes the parameter. Here, the true data-generating distribution may not belong to the parametric family, i.e., $P_Z^s, P_Z^t \notin \{f(\cdot|\vw_\alpha)|\vw_\alpha \in \mathcal{W} \}$.

If we use the log-loss $\ell(\vw_\alpha,z)=-\log f(z|\vw_\alpha)$ in the $\alpha$-weighted Gibbs algorithm, and set $\alpha=\frac{m}{m+n}$, as $\gamma \to \infty$, it converges to the $\alpha$-weighted-ERM algorithm, which is equivalent to the following MLE, i.e.,
\begin{align}
   & \hat{W}_\alpha(D_s,D_t)\quad = \argmax_{\vw_\alpha\in \mathcal{W}} \sum_{i=1}^n \log f(Z_i^s|\vw_\alpha)+\sum_{j=1}^m \log f(Z_j^t|\vw_\alpha).
\end{align}
If we further let $m,n\to \infty$, under regularization conditions for MLE (details in Appendix~\ref{app:MLE}) which guarantee that $\hat{W}_\alpha(D_s,D_t)$ and $\hat{W}_\alpha(D_s)$ are unique, we can show that
\begin{equation}
     \hat{W}_\alpha(D_s,D_t)-\hat{W}_\alpha(D_s) \to \mathcal{N}\big(0,  \frac{m}{(m+n)^2}\bar{J}(\vw^*_\alpha)^{-1}\mathcal{I}_t(\vw^*_\alpha) \bar{J}(\vw^*_\alpha)^{-1}\big).
 \end{equation}
where
\begin{equation}
    \vw^*_{\alpha}\triangleq \argmin_{\vw\in \mathcal{W}} n D(P_Z^s\|f(\cdot|\vw))+m D(P_Z^t\|f(\cdot|\vw)),
\end{equation}
$\bar{J}(\vw^*_{\alpha})$ is the weighted expectation of the Hessian matrix, and $\bar{\mathcal{I}}(\vw^*_{\alpha})$ is the weighted Fisher information matrix. Detailed definitions of $\bar{J}$ and $\bar{\mathcal{I}}$ and proofs are provided in Appendix~\ref{app:MLE_limit}.

In addition, the Hessian matrix $H^*(D_s,D_t) \to \bar{J}(\vw^*_{\alpha})$ as $m, n\to \infty$, which is independent of the samples $D_s,D_t$. Thus, Proposition~\ref{Prop: large gamma alpha} gives
\begin{equation}\label{equ:gen_alpha_mle}
    \overline{\text{gen}}_{\alpha}(P_{D_t},P_{D_s})  = \frac{\mathrm{tr}( \bar{\mathcal{I}}(\vw^*_{\alpha})\bar{J}(\vw^*_{\alpha})^{-1})}{n+m},
\end{equation}
which scales as $\mathcal{O}(\frac{d}{m+n})$.

\begin{table*}[t]
  \caption{Comparison of different algorithms under MLE setting.}
  \label{tab:comparison}
  \centering
	\begin{tabular}{cccc}
	\\
    	\toprule
    	 & \textbf{Standard ERM} & \textbf{$\alpha$-weighted-ERM}  & \textbf{Two-stage-ERM} \\
    	\midrule

    	Excess risk bias&$0$ & $\|\vw^*_{\alpha}-\vw_t^* \|^2_{J_t(\vw_t^*)}$ & $\big\|[\vw^{s*}_{\phi},\vw^{st*}_{c}] - [\vw^{t*}_{\phi}, \vw^{t*}_{c}]\big\|^2_{J_t(\vw^{t*}_{\phi}, \vw^{t*}_{c})}$ \\

    	Excess risk variance&$\mathcal{O}(\frac{d}{m})$ & $\mathcal{O}(\frac{d}{m+n})$ & $\mathcal{O}(\frac{d}{n}+\frac{d_c}{m})$ \\
    	
    	Generalization error  &$\mathcal{O}(\frac{d}{m})$ & $\mathcal{O}(\frac{d}{m+n})$ & $\mathcal{O}(\frac{d_c}{m})$ \\
    	\bottomrule
  \end{tabular}
\end{table*}

\textbf{Two-stage-ERM:}
We assume that there exists one unique $ \hat{W}^{t}_{c}(D_t, W_{\phi})$ which minimize the empirical risk of stage 2,
 \begin{align}
   \hat{W}^{t}_{c}(D_t, W_{\phi}) &\triangleq \argmin_{w_c\in \mathcal{W}_c} L_E^{S2}(W_{\phi},w_c,D_t),
\end{align}
and there is one unique $\hat{W}^{t}_{c}(W_{\phi})$ which minimize the population risk by considering a fixed $W_\phi$,
\begin{align}
   \hat{W}^{t}_{c}(W_{\phi}) &\triangleq \argmin_{w_c\in \mathcal{W}_c} L_P(W_{\phi},w_c,P_{D_t}).
 \end{align}
Similarly, if the following Hessian matrices
\begin{align}
H_c^*(D_t,\!W_{\phi}) &\triangleq \nabla^2_{w_c}\! L_E^{S2}(W_{\phi},\!w_c,\!D_t)\big|_{w_c = \hat{W}^{t}_{c}(\!D_t,\! W_{\phi}\!)}\\
H_c^*(W_{\phi}) &\triangleq \nabla^2_{w_c}\! L_P(W_{\phi},w_c,P_{D_t})\big|_{w_c = \hat{W}^{t}_{c}( W_{\phi})}
\end{align}
are not singular, we can obtain the following result by evaluating the conditional symmetrized KL divergence in Proposition~\ref{Theorem: rep in skl divergence transfer two-stage} using similar Gaussian approximation as in \eqref{equ:Gaussian_approx}.
\begin{proposition}[Proved in Appendix~\ref{app:gen_limit}]\label{Prop: large gamma beta}
If Hessian matrices $H_c^*(D_t, W_{\phi}) = H_c^*(W_{\phi}) = H_c^*$ are independent of $D_s, D_t$, then the generalization error of the two-stage-ERM algorithm is
\begin{align*}
    & \overline{\text{gen}}_{\beta}(P_{D_t},P_{D_s})  =\mathbb{E}_{D_s,D_t,W_\phi}[\|\hat{W}^{t}_{c}(D_t, W_{\phi})-\hat{W}^{t}_{c}(W_{\phi})\|^2_{H_c^*}].
\end{align*}
\end{proposition}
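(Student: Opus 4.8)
The plan is to start from the symmetrized-KL-divergence representation already established in Proposition~\ref{Theorem: rep in skl divergence transfer two-stage},
\[
\overline{\text{gen}}_{\beta}(P_{D_s},P_{D_t}) = \frac{D_{\mathrm{SKL}}\big(P^\gamma_{W_c^t|D_t,W_\phi}\,\big\|\, P_{W_c^t|W_\phi}^{\gamma,L_P(w_\phi,w_c^t,P_{D_t})}\,\big|\,P_{D_t}P_{W_\phi}\big)}{\gamma},
\]
and to evaluate the inner conditional divergence in the limit $\gamma \to \infty$ via the Laplace (Gaussian) approximation of \citep{hwang1980laplace}, exactly paralleling the treatment of the $\alpha$-weighted case in Proposition~\ref{Prop: large gamma alpha}. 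Because both $P^\gamma_{W_c^t|D_t,W_\phi}$ and $P_{W_c^t|W_\phi}^{\gamma,L_P}$ are Gibbs distributions built respectively from the stage-2 empirical and population risks (with the shared hypothesis $W_\phi$ held fixed), the non-singularity of the Hessians $H_c^*(D_t,W_\phi)$ and $H_c^*(W_\phi)$ guarantees, conditioned on $(D_t,W_\phi)$, the convergences
\[
P^\gamma_{W_c^t|D_t,W_\phi} \to \mathcal{N}\Big(\hat{W}^{t}_{c}(D_t, W_{\phi}),\tfrac{1}{\gamma}H_c^*(D_t,W_\phi)^{-1}\Big), \qquad P_{W_c^t|W_\phi}^{\gamma,L_P} \to \mathcal{N}\Big(\hat{W}^{t}_{c}(W_{\phi}),\tfrac{1}{\gamma}H_c^*(W_\phi)^{-1}\Big).
\]

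Next I would compute the symmetrized KL divergence between these two Gaussians in closed form. For normals with means $\vmu_P,\vmu_Q$ and covariances $\Sigma_P,\Sigma_Q$,
\[
D_{\mathrm{SKL}} = \tfrac{1}{2}\mathrm{tr}\!\big(\Sigma_Q^{-1}\Sigma_P + \Sigma_P^{-1}\Sigma_Q - 2I\big) + \tfrac{1}{2}(\vmu_P-\vmu_Q)^\top\big(\Sigma_P^{-1}+\Sigma_Q^{-1}\big)(\vmu_P-\vmu_Q).
\]
Invoking the hypothesis $H_c^*(D_t,W_\phi) = H_c^*(W_\phi) = H_c^*$, the two covariances coincide at $\tfrac{1}{\gamma}(H_c^*)^{-1}$, so the trace term vanishes identically and $\Sigma_P^{-1}+\Sigma_Q^{-1} = 2\gamma H_c^*$, collapsing the divergence to the Mahalanobis form
\[
D_{\mathrm{SKL}}\big(P^\gamma_{W_c^t|D_t,W_\phi}\,\big\|\, P_{W_c^t|W_\phi}^{\gamma,L_P}\big) \to \gamma\,\big\|\hat{W}^{t}_{c}(D_t, W_{\phi})-\hat{W}^{t}_{c}(W_{\phi})\big\|^2_{H_c^*}.
\]
Substituting back, the $\gamma$ cancels the prefactor $1/\gamma$ in the representation, and averaging the conditional divergence over $P_{D_t}P_{W_\phi}$ — recalling that $W_\phi$ is generated from $D_s$ through $P_{W_\phi,W_c^s|D_s}$, and that $D_s\perp D_t$, so the expectation is over the full joint $(D_s,D_t,W_\phi)$ — yields precisely the claimed
\[
\overline{\text{gen}}_{\beta}(P_{D_t},P_{D_s}) = \mathbb{E}_{D_s,D_t,W_\phi}\big[\|\hat{W}^{t}_{c}(D_t, W_{\phi})-\hat{W}^{t}_{c}(W_{\phi})\|^2_{H_c^*}\big].
\]

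The main obstacle I anticipate is justifying that the $\gamma\to\infty$ limit may be taken inside the divergence and then inside the expectation. The Laplace approximation only gives convergence in distribution, whereas $D_{\mathrm{SKL}}$ depends on the log-densities; I would therefore need uniform control — furnished by the regularity conditions on the stage-2 risk that render the Hessian non-degenerate and the minimizers $\hat{W}^{t}_{c}$ well-separated isolated minima — to ensure the Gaussian-approximation error in $D_{\mathrm{SKL}}$ is $o(\gamma)$ and that the $\gamma$-scaled quadratic is uniformly integrable, permitting the interchange of limit and expectation. Notably, because the covariances are \emph{equal} under the hypothesis, the potentially delicate trace and log-determinant contributions cancel \emph{before} the limit is taken, which is exactly what lets the clean quadratic survive; the remaining work is the standard tail/uniform-integrability argument, identical in spirit to the one already required for Proposition~\ref{Prop: large gamma alpha}.
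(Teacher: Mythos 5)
Your proposal is correct and follows essentially the same route as the paper: both start from the conditional symmetrized KL divergence representation of $\overline{\text{gen}}_{\beta}$, invoke the Laplace/Gaussian approximation of \citep{hwang1980laplace} for the two Gibbs distributions, and use the equal-Hessian hypothesis to reduce the divergence to $\gamma\|\hat{W}^{t}_{c}(D_t,W_\phi)-\hat{W}^{t}_{c}(W_\phi)\|^2_{H_c^*}$ (the paper computes this by directly expanding the log-density ratio rather than quoting the closed-form Gaussian $D_{\mathrm{SKL}}$, which is the same calculation). Your explicit flagging of the limit-interchange issue is a point of care the paper's proof passes over silently, but it does not change the argument.
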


Consider a similar MLE setting as we did for the $\alpha$-weighted-ERM algorithm, and now we want to fit data with a parametric distribution family $\{f(z_j^t|\vw_\phi,\vw_c^t)\}_{j=1}^m$ using the two-stage-ERM algorithm, where $\vw_\phi \in \mathcal{W}_\phi\subset \mathbb{R}^{d_\phi},\vw_c^t\in \mathcal{W}_c\subset \mathbb{R}^{d_c}$ denote the shared and specific parameters, respectively. 

If we use the log-loss $\ell(\vw_\phi,\vw_c^t,z)=-\log f(z|\vw_\phi,\vw_c^t)$ in the two-stage Gibbs algorithm, as $\gamma \to \infty$, it converges to the following two-stage MLE approach,
\begin{align*}
    [\hat{W}_{\phi}(D_s), \hat{W}^{s}_{c}(D_s)] &\triangleq \argmax_{[\vw_\phi,\vw_c]\in \mathcal{W}} \sum_{i=1}^n \log f(Z_i^s|\vw_\phi,\vw_c), \\
    \hat{W}^{t}_{c}(D_t, \hat{W}_{\phi}) &\triangleq \argmax_{\vw_c\in \mathcal{W}_c} \sum_{j=1}^m \log f(Z_j^t|\hat{W}_{\phi},\vw_c).
 \end{align*}
As $m,n\to \infty$, under similar regularization conditions (details in Appendix~\ref{app:MLE}) which guarantee the uniqueness of these estimates,  we can show that
\begin{align*}
    &\hat{W}^{t}_{c}(D_t, \hat{W}_{\phi}) - \hat{W}^{t}_{c}(\hat{W}_{\phi}) \rightarrow
  \mathcal{N}\big(0, \frac{J_c^t(\vw^{s*}_{\phi},\vw^{st*}_{c})^{-1}\mathcal{I}^t_c(\vw^{s*}_{\phi},\vw^{st*}_{c}) J_c^t(\vw^{s*}_{\phi},\vw^{st*}_{c}) ^{-1}}{m}\big),
\end{align*}
where
\begin{align}
    [\vw^{s*}_{\phi}, \vw^{s*}_{c}] &\triangleq \argmin_{[\vw_\phi,\vw_c]\in \mathcal{W}} D(P_{Z^s}\| f(\cdot|\vw_\phi,\vw_c)), \\
    \vw^{st*}_{c} &\triangleq \argmin_{\vw_c\in \mathcal{W}_c} D(P_{Z^t}\| f(\cdot|\vw^{s*}_\phi,\vw_c)),
\end{align}
and $J_c^t(\vw^{s*}_{\phi},\vw^{st*}_{c})$, $\mathcal{I}^t_c(\vw^{s*}_{\phi},\vw^{st*}_{c})$ stands for the expected Hessian matrix and Fisher information matrix over $\vw_c$ under target distribution, respectively. Detailed proofs are provided in Appendix~\ref{app:MLE_limit}.
As the Hessian matrix $H_c^*(D_t, W_{\phi}) = H_c^*(W_{\phi}) \to J_c^t(\vw^{s*}_{\phi},\vw^{st*}_{c}) $ as $m, n\to \infty$, by Proposition~\ref{Prop: large gamma beta}, we have
\begin{equation}\label{equ:gen_beta_mle}
    \overline{\text{gen}}_{\beta}(P_{D_t},P_{D_s})  = \mathcal{O}(\frac{d_c}{m}).
\end{equation}

\subsection{Excess Risk in MLE setting}\label{sec:excess_risk}

In this subsection, we further consider the excess risks of the $\alpha$-weighted-ERM algorithm and the two-stage-ERM algorithm in the aforementioned MLE setting when $m,n \to \infty$, and show that such analyses provide some intuitions in selecting different transfer learning algorithms. All the details are provided in Appendix \ref{app:excess_limit}.

The excess risk \citep{mohri2018foundations} is defined as the difference between the population risk achieved by the learning algorithm and that achieved by the optimal hypothesis given the knowledge of the true target distribution $P_{Z_t}$, i.e.,
\begin{align}\label{equ:excess_risk}
    \mathcal{E}_r(P_W)  \triangleq& \mathbb{E}_{P_{W,D_s,D_t}}[L_P(W,P_{D_t})] - L_P(\vw_t^*,P_{D_t}), \nn\\
    {\text{with}}\quad &\vw_t^{*} \triangleq \argmin_{\vw\in \mathcal{W}} L_P(\vw,P_{D_t}),
\end{align}where $\vw_t^{*}= \argmin_{\vw\in \mathcal{W}} D(P_{Z}^t \| f(\cdot|w))$ also holds in the MLE setting considered here.

\textbf{$\alpha$-weighted-ERM:} In general, a proper transfer learning algorithm should have small excess risk $\mathcal{E}_r$, which justifies the following approximation of the excess risk
\begin{align*}
    & \mathcal{E}_r(P_{\hat{W}_\alpha(D_s,D_t)}) \approx \frac{1}{2} \mathbb{E}_{P_{D_s,D_t}}\Big[\big\|\hat{W}_\alpha(D_s,D_t) - \vw_t^*\big\|^2_{J_t(\vw_t^*)}\Big] = \frac{1}{2}\|\vw^*_{\alpha}-\vw_t^* \|^2_{J_t(\vw_t^*)}+\frac{\mathrm{tr}\big(J_t(\vw_t^*) \Cov(\hat{W}_\alpha(D_s,D_t))\big)}{2}.
\end{align*}
As we can see from the above expression, the excess risk can be decomposed into squared bias and variance terms. The bias is caused by learning from the mixture of the source and target distributions instead of just the target distribution $P_{Z}^t$. In addition, it can be shown that $\mathrm{tr}(J_t(\vw_t^*) \Cov(\hat{W}_\alpha(D_s,D_t))) = \mathcal{O}(\frac{d}{m+n})$, which has the same order as the generalization error in \eqref{equ:gen_alpha_mle}.

\textbf{Two-stage-ERM:} In the two-stage algorithm, $\vw^{t*}$ can be written as $\vw^{t*} = [\vw^{t*}_{\phi}, \vw^{t*}_{c}]$, and using similar approximation, we have
\begin{align}
    & \mathcal{E}_r(P_{\hat{W}_{\phi}(D_s),\hat{W}^{t}_{c}(D_t, \hat{W}_{\phi})})
    \approx \frac{1}{2}\big\|[\vw^{s*}_{\phi},\vw^{st*}_{c}] - [\vw^{t*}_{\phi}, \vw^{t*}_{c}]\big\|^2_{J_t(\vw^{t*}_{\phi}, \vw^{t*}_{c})}
    +\frac{\mathrm{tr}\big(J_t(\vw^{t*}_{\phi}, \vw^{t*}_{c}) \Cov(\hat{W}_{\phi}(D_s),\hat{W}^{t}_{c}(D_t, \hat{W}_{\phi}))\big)}{2}.
\end{align}
Here the bias is caused by sharing the parameter $\vw^{s*}_{\phi}$ from the source distribution. If $\vw^{t*}_{\phi}=\vw^{s*}_{\phi}$, then $\vw^{st*}_{c} = \vw^{t*}_{c}$ and the bias is zero. It can be shown that the variance term scales as $O(\frac{d}{n}+\frac{d_c}{m})$. When $n \gg m$, it reduces to $\mathcal{O}(\frac{d_c}{m})$, which is the same as the generalization error in \eqref{equ:gen_beta_mle}.

In Table~\ref{tab:comparison}, we summarize the excess risk, and generalization error results for the two transfer learning algorithms studied in the paper and those of the standard supervised learning under MLE setting \citep{van2000asymptotic} as $m,n \to \infty$. The improvement of the excess risk for transfer learning algorithms comes from trading the variance induced by the lack of target samples with the bias introduced by the source distribution, which suggests that the choice of learning algorithm should depend on both source distribution and the number of samples $m,n$.

The bias term in the excess risks can be interpreted as another notion of discrepancy measure, which is algorithm-dependent, as $\vw^*_{\alpha}$ and $\vw^{s*}_{\phi}, \vw^{s*}_{c}$ are defined as the optimal parameters under different algorithms given the knowledge of both source and target distributions. Sometimes, these bias terms are more useful in choosing an algorithm than the discrepancy measure used in the literature. For example, consider the mean estimation example in Section~\ref{sec: example}, if we set $\vmu_s = \vmu_t$, $\sigma_s^2 \ll \sigma_t^2$, and let $m,n \to \infty$, then the bias term for both $\alpha$-weighted-ERM and two-stage-ERM should be zero, and transfer learning algorithms are preferred over the standard ERM. However, the KL divergence between the source and target distribution, which is proposed as a discrepancy measure in \citep{wu2020information}, would be  large.

The generalization error can be interpreted as the variance of the excess risk when $n \gg m$, and the analysis provided in the paper could help us to find a good balance in the bias and variance trade-off.

\section{Conclusion}
We provide an exact characterization of the generalization error for two Gibbs-based transfer learning algorithms, i.e., $\alpha$-weighted Gibbs algorithm and two-stage-ERM Gibbs algorithm, using conditional symmetrized KL information and divergence.  
Based on our results, we show that the benefits of transfer learning can be viewed as a bias-variance trade-off, and the bias term suggest a new notion of discrepancy measure,  which requires further investigation.


\clearpage

\bibliographystyle{ieeetr}
\bibliography{Refs}

\clearpage
\appendix

\section{Exact Characterization of Generalization Error Based on Symmetrized KL Information}\label{app: exact Characterization}

\subsection{$\alpha$-weighted Gibbs Algorithm}

\begin{reptheorem}{Theorem: Gibbs alpha Transfer result}\textbf{(restated)}
For the $\alpha$-weighted Gibbs algorithm, $0<\alpha<1$ and $\gamma>0$,
\begin{equation*}
    P_{W_\alpha|D_s,D_t}^\gamma (w_\alpha|d_s,d_t) = \frac{\pi(w_\alpha) e^{-\gamma L_E(w_\alpha,d_s,d_t)}}{V_\alpha(d_s,d_t,\gamma)},
\end{equation*}
the expected transfer generalization error is given by
\begin{equation*}
   \overline{\text{gen}}_{\alpha}(P_{D_s},P_{D_t}) =  \frac{I_{\mathrm{SKL}}(W_\alpha;D_t|D_s)}{\gamma\alpha}.
\end{equation*}
\end{reptheorem}
\begin{proof}
By the definition of conditional symmetrized KL information, we have
\begin{align}
    I_{\mathrm{SKL}}(W_\alpha;D_t|D_s)
    & = \mathbb{E}_{P_{D_s}}\Big[\mathbb{E}_{P_{W_\alpha,D_t|D_s}}\Big[\log\Big(\frac{P_{W_\alpha|D_s,D_t}^\gamma}{P_{W_\alpha|D_s}P_{D_t|D_s}}\Big)\Big]+\mathbb{E}_{P_{W_\alpha|D_s}P_{D_t|D_s}}\Big[\log\Big(\frac{P_{W_\alpha|D_s}P_{D_t|D_s}}{P_{W_\alpha|D_s,D_t}^\gamma}\Big)\Big]\Big] \nn\\
    &=\mathbb{E}_{P_{D_s}}\big[\mathbb{E}_{P_{W_\alpha,D_t|D_s}}[\log(P_{W_\alpha|D_s,D_t}^\gamma)]-\mathbb{E}_{P_{W_\alpha|D_s}P_{D_t|D_s}}[\log(P_{W_\alpha|D_s,D_t}^\gamma)]\big].
\end{align}
Combining with fact that $D_s$ and $D_t$ are independent, and plug in the posterior of $\alpha$-weighted Gibbs algorithm, we have
\begin{align}
    I_{\mathrm{SKL}}(W_\alpha;D_t|D_s)&=\mathbb{E}_{P_{D_s}}[\gamma\mathbb{E}_{P_{W_\alpha,D_t|D_s}}[L_E(W_\alpha,D_s,D_t)]-\gamma\mathbb{E}_{P_{W_\alpha|D_s}P_{D_t}}[L_E(W_\alpha,D_s,D_t)]]\nn\\
    &=\gamma\mathbb{E}_{P_{D_s}}[\mathbb{E}_{P_{W_\alpha,D_t|D_s}}[(1-\alpha) L_E(w_\alpha,d_s)+\alpha L_E(w_\alpha,d_t)]]\nn\\
    & \quad - \gamma\mathbb{E}_{P_{D_s}}[\mathbb{E}_{P_{W_\alpha|D_s}P_{D_t}}[(1-\alpha) L_E(w_\alpha,d_s)+\alpha L_E(w_\alpha,d_t)]]\nn\\
    &=\gamma \alpha\big[\mathbb{E}_{P_{W_\alpha,D_t,D_s}}[L_E(w_\alpha,d_t)] - [\mathbb{E}_{P_{W_\alpha,D_s}P_{D_t}}[L_E(w_\alpha,d_t)]\big]\\
    & = \gamma \alpha \overline{\text{gen}}_{\alpha}(P_{D_s},P_{D_t}). \qedhere \nn
\end{align}
\end{proof}

Due to the symmetry of the $\alpha$-weighted Gibbs algorithm, if we use $\overline{\text{gen}}_{\alpha}(P_{D_t},P_{D_s})$ to denote the generalization error of treating $P_{D_t}$ as source task and $D_s$ as the target, we can obtain that $\overline{\text{gen}}_{\alpha}(P_{D_t},P_{D_s}) =  \frac{I_{\mathrm{SKL}}(W_\alpha;D_s|D_t)}{\gamma\alpha}$.

It is also worthwhile to mention that the $\alpha$-weighted expected generalization error of both source and target tasks can be characterized in terms of symmetrized KL information as shown in the following Proposition.
\begin{proposition}\label{Prop: Combine MTL}
For $(\gamma,\pi(w_\alpha),L_E(w_\alpha,d_s,d_t))$-Gibbs algorithm and $0<\alpha<1$, we have
\begin{align}
    &\alpha  \overline{\text{gen}}_{\alpha}(P_{D_s},P_{D_t}) + (1-\alpha) \overline{\text{gen}}_{\alpha}(P_{D_t},P_{D_s})=\frac{I_{\mathrm{SKL}}(W_\alpha;D_t,D_s)}{\gamma}.
\end{align}
\end{proposition}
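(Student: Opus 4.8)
The plan is to bypass the two conditional identities and instead compute the \emph{joint} symmetrized KL information $I_{\mathrm{SKL}}(W_\alpha;D_t,D_s)$ directly, following the same bookkeeping as in the proof of Theorem~\ref{Theorem: Gibbs alpha Transfer result}. First I would record the unconditional analogue of the opening display in that proof: writing $Y=(D_s,D_t)$ and expanding $I_{\mathrm{SKL}}(W_\alpha;Y)=D(P_{W_\alpha,Y}\|P_{W_\alpha}\otimes P_Y)+D(P_{W_\alpha}\otimes P_Y\|P_{W_\alpha,Y})$, the two $\log P_{W_\alpha}$ contributions cancel, leaving
\[
I_{\mathrm{SKL}}(W_\alpha;D_t,D_s)=\mathbb{E}_{P_{W_\alpha,D_s,D_t}}\big[\log P_{W_\alpha|D_s,D_t}^\gamma\big]-\mathbb{E}_{P_{W_\alpha}\otimes P_{D_s,D_t}}\big[\log P_{W_\alpha|D_s,D_t}^\gamma\big].
\]

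Next I would substitute the $\alpha$-weighted Gibbs posterior $\log P_{W_\alpha|D_s,D_t}^\gamma=\log\pi(W_\alpha)-\gamma L_E(W_\alpha,D_s,D_t)-\log V_\alpha(D_s,D_t,\gamma)$. The prior term depends only on $W_\alpha$ and the partition-function term only on $(D_s,D_t)$, so each has the same expectation under the joint law $P_{W_\alpha,D_s,D_t}$ as under the product law $P_{W_\alpha}\otimes P_{D_s,D_t}$ and therefore drops out. What survives is
\[
I_{\mathrm{SKL}}(W_\alpha;D_t,D_s)=\gamma\Big(\mathbb{E}_{P_{W_\alpha}\otimes P_{D_s,D_t}}[L_E(W_\alpha,D_s,D_t)]-\mathbb{E}_{P_{W_\alpha,D_s,D_t}}[L_E(W_\alpha,D_s,D_t)]\Big).
\]

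The final step is to split $L_E(W_\alpha,D_s,D_t)=(1-\alpha)L_E(W_\alpha,D_s)+\alpha L_E(W_\alpha,D_t)$ and identify each half. Under the product law $W_\alpha$ is independent of a freshly drawn $D_t$, so $\mathbb{E}_{P_{W_\alpha}\otimes P_{D_t}}[L_E(W_\alpha,D_t)]=\mathbb{E}[L_P(W_\alpha,P_{D_t})]$, whereas under the joint law the same term is the empirical target risk; their difference is exactly $\overline{\text{gen}}_{\alpha}(P_{D_s},P_{D_t})$, contributing with weight $\alpha$. By the identical argument with source and target exchanged—using $D_s\perp D_t$ so that the product law supplies a fresh, $W_\alpha$-independent copy of $D_s$—the source half contributes $(1-\alpha)\,\overline{\text{gen}}_{\alpha}(P_{D_t},P_{D_s})$, where $\overline{\text{gen}}_{\alpha}(P_{D_t},P_{D_s})$ is the population-minus-empirical gap on $D_s$. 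Collecting terms gives $I_{\mathrm{SKL}}(W_\alpha;D_t,D_s)=\gamma\big[\alpha\,\overline{\text{gen}}_{\alpha}(P_{D_s},P_{D_t})+(1-\alpha)\,\overline{\text{gen}}_{\alpha}(P_{D_t},P_{D_s})\big]$, and dividing by $\gamma$ yields the claim. As a by-product this proves the additivity $I_{\mathrm{SKL}}(W_\alpha;D_t,D_s)=I_{\mathrm{SKL}}(W_\alpha;D_t|D_s)+I_{\mathrm{SKL}}(W_\alpha;D_s|D_t)$, so an equivalent route is to invoke Theorem~\ref{Theorem: Gibbs alpha Transfer result}, its role-swapped analogue, and this identity.

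I expect the main obstacle to be the bookkeeping in the last step: one must track precisely which marginalizations collapse under the product versus the joint law, and recognize that the population risk emerges exactly because the product measure feeds $W_\alpha$ an independent, freshly drawn copy of the data. The independence $D_s\perp D_t$ must be invoked explicitly to turn the source half into a genuine source-task generalization gap rather than a mixed quantity, and care is needed so that the asymmetric weights $\alpha$ and $1-\alpha$ attach to the correct generalization error.
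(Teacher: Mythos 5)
Your proof is correct and follows essentially the same route as the paper's: expand $I_{\mathrm{SKL}}(W_\alpha;D_t,D_s)$ as the difference of the joint- and product-law expectations of $\log P^{\gamma}_{W_\alpha|D_s,D_t}$, cancel the prior and partition-function terms, and split the weighted empirical risk so that each half is identified with the corresponding generalization error. The only quibble is that you invoke $D_s\perp D_t$ to handle the source half, which is unnecessary---marginalizing the product law $P_{W_\alpha}\otimes P_{D_s,D_t}$ over $D_t$ already yields $P_{W_\alpha}\otimes P_{D_s}$---and indeed the paper remarks that this proposition holds even for dependent source and target samples.
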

\begin{proof}
The symmetrized KL information can be written as
\begin{align}
    I_{\mathrm{SKL}}(W_\alpha;D_t,D_s) = \mathbb{E}_{P_{W_\alpha,D_t,D_s}}\big[\log(P_{W_\alpha|D_s,D_t}^\gamma)\big]-\mathbb{E}_{P_{W_\alpha}P_{D_t,D_s}}\big[\log(P_{W_\alpha|D_s,D_t}^\gamma)\big].
\end{align}
Plug in the posterior of $\alpha$-weighted Gibbs algorithm,
\begin{align}
    &{I_{\mathrm{SKL}}(W_\alpha;D_t,D_s)}\nn \\
    & =  \mathbb{E}_{P_{W_\alpha,D_t,D_s}}\big[-\gamma L_E(w_\alpha,d_s,d_t)\big]+\mathbb{E}_{P_{W_\alpha}P_{D_t,D_s}}\big[\gamma L_E(w_\alpha,d_s,d_t)\big] \nn \\
    & =-\gamma\mathbb{E}_{P_{W_\alpha,D_t,D_s}}[\alpha L_E(w_\alpha,d_t)+ (1-\alpha)L_E(w_\alpha,d_s)] + \gamma\mathbb{E}_{P_{W_\alpha}P_{D_t,D_s}}[\alpha L_E(w_\alpha,d_t)+(1-\alpha) L_E(w_\alpha,d_s)]\\
    & = \alpha \gamma \overline{\text{gen}}_{\alpha}(P_{D_s},P_{D_t}) + (1-\alpha) \gamma \overline{\text{gen}}_{\alpha}(P_{D_t},P_{D_s}).\nn \qedhere
\end{align}
\end{proof}
Note that the Proposition~\ref{Prop: Combine MTL} holds even for dependent source $D_s$ and target $D_t$ samples.

\subsection{Two-stage Gibbs Algrotihm}
\begin{reptheorem}{Theorem: Two-stage-ERM approach}\textbf{(restated)}
The expected transfer generalization error of the  two-stage Gibbs algorithm,
\begin{align*}
    P_{W_c^t|D_t,W_{\phi}}^\gamma (w_c^t|d_t,w_{\phi}) = \frac{\pi(w_c^t) e^{-\gamma L_E^{S_2}(w_\phi,w_c^t,d_t)}}{V_\beta(w_\phi,d_t,\gamma)},
\end{align*}
is given by
\begin{align*}
   &\overline{\text{gen}}_{\beta}(P_{D_s},P_{D_t})=\frac{ I_{\mathrm{SKL}}(D_t;W_c^t|W_\phi)}{\gamma}.
\end{align*}
\end{reptheorem}
\begin{proof}
In the second stage we freeze the share parameters $W_{\phi}$, and we will update the specific target task parameter. Thus,
\begin{align}
    &I_{\mathrm{SKL}}(W_c^t;D_t|W_{\phi}) \nn\\
    & = \mathbb{E}_{P_{W_\phi}}\big[\mathbb{E}_{P_{W_c^t,D_t|W_{\phi}}}[\log(P_{W_c^t|D_t,W_{\phi}})] - \mathbb{E}_{P_{W_c^t|W_{\phi}}P_{D_t|W_{\phi}}}[\log(P_{W_c^t|D_s,W_{\phi}})]\big] \nn\\
    &=\gamma\left(\mathbb{E}_{P_{W_\phi}}\big[\mathbb{E}_{P_{W_c^t|W_{\phi}}P_{D_t|W_{\phi}}}[L_E^{S2}(W_{\phi},W_c^t,D_t)]- \mathbb{E}_{P_{W_c^s,D_t|W_{\phi}}}[L_E^{S2}(W_{\phi},W_c^t,D_t)]\big]\right) \\
    & =\gamma \overline{\text{gen}}_{\beta}(P_{D_s},P_{D_t}).\nn \qedhere
\end{align}
\end{proof}

\section{Example: Mean Estimation}\label{app: Mean Estimation}

\subsection{Symmetrized KL Divergence}
The following lemma from \citep{palomar2008lautum} characterizes the mutual and lautum information for the Gaussian channel.

\begin{lemma}{\citep[Theorem 14]{palomar2008lautum}}\label{lemma:Gaussian}
Consider the following model
\begin{equation}
    \mY = \mA \mX+\mN_{\mathrm{G}},
\end{equation}
where $\mX \in \mathbb{R}^{d_X}$ denotes the input
random vector with zero mean (not necessarily
Gaussian), $\mA \in \mathbb{R}^{d_Y \times d_X}$ denotes the linear transformation undergone by the input, $\mY\in \mathbb{R}^{d_Y}$ is the
output vector, and $\mN_{\mathrm{G}}\in \mathbb{R}^{d_Y}$ is a
Gaussian noise vector independent of $\mX$. The input and the
noise covariance matrices are given by
$\mSigma$ and $\mSigma_{N_{\mathrm{G}}}$.
Then, we have
\begin{align}
    I(\mX;\mY) &= \frac{1}{2}\mathrm{tr}\big(\mSigma_{N_{\mathrm{G}}}^{-1} \mA \mSigma \mA^\top \big) - D\big(P_\mY\|P_{N_{\mathrm{G}}} \big),  \\
    L(\mX;\mY) &= \frac{1}{2}\mathrm{tr}\big(\mSigma_{N_{\mathrm{G}}}^{-1} \mA \mSigma \mA^\top \big) + D\big(P_\mY\|P_{N_{\mathrm{G}}}).
\end{align}
\end{lemma}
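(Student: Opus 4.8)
The plan is to reduce both identities to computations of differential entropy and Gaussian cross-entropies, exploiting that the channel $\mY = \mA\mX + \mN_{\mathrm{G}}$ has a Gaussian conditional law. Write $h(\cdot)$ for differential entropy and read $P_{N_{\mathrm{G}}}$ as the noise law $\mathcal{N}(\vzero,\mSigma_{N_{\mathrm{G}}})$. Throughout I use that $\mX$ and $\mN_{\mathrm{G}}$ are zero-mean and independent, so $\mY$ is zero-mean with covariance $\mSigma_\mY = \mA\mSigma\mA^\top + \mSigma_{N_{\mathrm{G}}}$.

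For the mutual information, first I would write $I(\mX;\mY) = h(\mY) - h(\mY|\mX)$ and observe that, conditioned on $\mX=\vx$, the output $\mY$ is merely $\mN_{\mathrm{G}}$ shifted by $\mA\vx$; by translation invariance of differential entropy $h(\mY|\mX) = h(\mN_{\mathrm{G}})$. The remaining work is to express $h(\mY)$ through $D(P_\mY\|P_{N_{\mathrm{G}}})$. Expanding the divergence gives $D(P_\mY\|P_{N_{\mathrm{G}}}) = -h(\mY) - \mathbb{E}_{P_\mY}[\log p_{N_{\mathrm{G}}}(\mY)]$, and since $\log p_{N_{\mathrm{G}}}$ is a quadratic form the cross-entropy term depends on $\mY$ only through its covariance, namely $\mathbb{E}_{P_\mY}[\mY^\top\mSigma_{N_{\mathrm{G}}}^{-1}\mY] = \mathrm{tr}(\mSigma_{N_{\mathrm{G}}}^{-1}\mSigma_\mY)$. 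Substituting $\mSigma_\mY = \mA\mSigma\mA^\top + \mSigma_{N_{\mathrm{G}}}$ and collecting the constant $\tfrac{1}{2}\log((2\pi e)^{d_Y}\det\mSigma_{N_{\mathrm{G}}}) = h(\mN_{\mathrm{G}})$ yields $h(\mY) = h(\mN_{\mathrm{G}}) + \tfrac{1}{2}\mathrm{tr}(\mSigma_{N_{\mathrm{G}}}^{-1}\mA\mSigma\mA^\top) - D(P_\mY\|P_{N_{\mathrm{G}}})$, from which the mutual-information formula follows immediately.

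For the lautum information I would start from its definition $L(\mX;\mY) = D(P_\mX\otimes P_\mY\|P_{\mX,\mY})$ and expand it as $-h(\mY) - \mathbb{E}_{P_\mX\otimes P_\mY}[\log p_{\mY|\mX}(\mY|\mX)]$, using $p_{\mY|\mX}(\cdot|\vx) = \mathcal{N}(\mA\vx,\mSigma_{N_{\mathrm{G}}})$. The crucial difference from the mutual-information computation is that the quadratic $(\mY-\mA\mX)^\top\mSigma_{N_{\mathrm{G}}}^{-1}(\mY-\mA\mX)$ is now averaged under the \emph{product} law, where $\mX$ and $\mY$ are independent, so the cross term no longer cancels the signal and one finds $\mathbb{E}_{P_\mX\otimes P_\mY}[(\mY-\mA\mX)(\mY-\mA\mX)^\top] = \mSigma_\mY + \mA\mSigma\mA^\top = 2\mA\mSigma\mA^\top + \mSigma_{N_{\mathrm{G}}}$. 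This produces a factor $\mathrm{tr}(\mSigma_{N_{\mathrm{G}}}^{-1}\mA\mSigma\mA^\top)$ rather than its half, giving $L(\mX;\mY) = -h(\mY) + h(\mN_{\mathrm{G}}) + \mathrm{tr}(\mSigma_{N_{\mathrm{G}}}^{-1}\mA\mSigma\mA^\top)$. Comparing with the expression for $D(P_\mY\|P_{N_{\mathrm{G}}})$ obtained in the previous step then gives $L(\mX;\mY) = \tfrac{1}{2}\mathrm{tr}(\mSigma_{N_{\mathrm{G}}}^{-1}\mA\mSigma\mA^\top) + D(P_\mY\|P_{N_{\mathrm{G}}})$.

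The main obstacle is bookkeeping rather than conceptual: I must carefully track the second-moment matrix of $\mY - \mA\mX$ under the joint law versus the product law, since that single difference (covariance $\mSigma_{N_{\mathrm{G}}}$ under the joint, but $2\mA\mSigma\mA^\top + \mSigma_{N_{\mathrm{G}}}$ under the product) is precisely what flips the sign of the divergence term and doubles the signal contribution between the two identities. Care is also needed so that all cross-entropy normalization constants assemble into $h(\mN_{\mathrm{G}})$, ensuring the log-determinant terms cancel cleanly and leave only the trace and divergence terms.
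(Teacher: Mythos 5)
Your derivation is correct; note, though, that the paper does not prove this lemma at all --- it is imported verbatim as Theorem~14 of the cited Palomar--Verd\'u paper --- so there is no in-paper argument to compare against. Your route is the standard one and matches the original source in spirit: write $I(\mX;\mY)=h(\mY)-h(\mN_{\mathrm{G}})$, convert $D(P_\mY\|P_{N_{\mathrm{G}}})$ into $-h(\mY)$ plus a Gaussian cross-entropy whose quadratic part is $\tfrac12\mathrm{tr}(\mSigma_{N_{\mathrm{G}}}^{-1}\mSigma_\mY)$, and for the lautum term average $\tfrac12(\mY-\mA\mX)^\top\mSigma_{N_{\mathrm{G}}}^{-1}(\mY-\mA\mX)$ under the product law, where the second-moment matrix is $2\mA\mSigma\mA^\top+\mSigma_{N_{\mathrm{G}}}$ rather than $\mSigma_{N_{\mathrm{G}}}$; I checked both trace computations and the bookkeeping of the $\tfrac{d_Y}{2}$ and log-determinant constants into $h(\mN_{\mathrm{G}})$, and they are right. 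Two small caveats worth recording: (i) you take $\mN_{\mathrm{G}}$ to be zero-mean, whereas the paper applies the lemma with a nonzero-mean $P_{N_{\mathrm{G}}}$ (conditioning on $d_s$); since $\mX$ is zero-mean the argument is translation-invariant --- center all quadratic forms at $\mathbb{E}[\mY]=\mathbb{E}[\mN_{\mathrm{G}}]$ and nothing changes --- but this should be said explicitly; (ii) the decomposition $I(\mX;\mY)=h(\mY)-h(\mY|\mX)$ presupposes that $h(\mY)$ is well defined and finite, which holds here because $\mY$ has a density (Gaussian noise is additive) and finite covariance, but is an implicit regularity assumption in your first step.
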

In the $\alpha$-weighted Gibbs algorithm, the output $W_\alpha$ can be written as
\begin{align}\label{equ:Gaussian_alpha}
    W_\alpha & = \frac{\sigma_1^2 }{\sigma_0^2}\vmu_0 +\frac{\sigma_1^2 }{\sigma^2}\big(\sum_{i=1}^{ n} Z^s_i +\sum_{j=1}^{m} Z^t_j \big) +N  = \frac{\sigma_1^2 }{\sigma^2}\sum_{j=1}^m (Z_j^t-\vmu_t) + \frac{\sigma_1^2 }{\sigma_0^2}\vmu_0+\frac{m \sigma_1^2  }{\sigma^2}\vmu_t + \frac{\sigma_1^2 }{\sigma^2}\sum_{i=1}^n Z_i^s + N,
\end{align}
where $N\sim \mathcal{N}(0,\sigma^2_1 I_d)$, and $\sigma_1^2=\frac{\sigma_0^2 \sigma^2}{(m+n)\sigma_0^2 +\sigma^2}$. For fixed sources training sample $d_s$, we can set $P_{N_{\mathrm{G}}} \sim \mathcal{N}(\frac{\sigma_1^2 }{\sigma_0^2}\vmu_0+\frac{m \sigma_1^2  }{\sigma^2}\vmu_t + \frac{\sigma_1^2 }{\sigma^2}\sum_{i=1}^n z_i^s, \sigma^2_1 I_d)$ and $\mSigma = \sigma_t^2 I_{nd}$ in Lemma~\ref{lemma:Gaussian} gives
\begin{align}
    I_{\mathrm{SKL}}(W_\alpha;D_t|D_s=d_s) = \mathrm{tr}\big(\mSigma_{N_{\mathrm{G}}}^{-1} \mA \mSigma \mA^\top \big) = \mathrm{tr}\big(\frac{\sigma_t^2}{\sigma_1^2} \mA  \mA^\top \big).
\end{align}
Noticing that $\mA  \mA^\top = \frac{m \sigma_1^4}{ \sigma^4}I_{d}$ and taking expectation over $P_S$, we have
\begin{align}
    I_{\mathrm{SKL}}(W_\alpha;D_t|D_s) &= \frac{md \sigma_0^2 \sigma_t^2}{((m+n)\sigma_0^2 + \sigma^2)\sigma^2}.
\end{align}

For the two-stage Gibbs algorithm, the output $W_c^t$ can be written as
\begin{align}\label{equ:Gaussian_two}
    W_c^t & = \frac{\sigma_c^2 }{\sigma_0^2}\vmu_{0,c} +\frac{\sigma_c^2 }{\sigma^2} \sum_{j=1}^{m} Z^t_{j,c} + N_c  = \frac{\sigma_c^2 }{\sigma^2}\sum_{j=1}^m (Z_{j,c}^t-\vmu_{t,c}) + \frac{\sigma_c^2 }{\sigma_0^2}\vmu_{0,c}+\frac{n \sigma_c^2  }{\sigma^2}\vmu_{t,c} + N_c,
\end{align}
where $N_c\sim \mathcal{N}(0,\sigma^2_c I_{d_c})$, $\sigma_c^2=\frac{\sigma_0^2 \sigma^2}{m\sigma_0^2 +\sigma^2}$, and subscript $c$ stands for the task-specific component of the parameters. Since $W_c^t$ is independent of the source samples, setting $P_{N_{\mathrm{G}}} \sim \mathcal{N}(\frac{\sigma_c^2 }{\sigma_0^2}\vmu_{0,c}+\frac{n \sigma_c^2  }{\sigma^2}\vmu_{t,c}, \sigma^2_c I_{d_c})$ and $\mSigma = \sigma_t^2 I_{nd_c}$ in Lemma~\ref{lemma:Gaussian} gives
\begin{align}
   I_{\mathrm{SKL}}(W_c^t;D_t|W_{\phi}) =  \mathrm{tr}\big(\mSigma_{N_{\mathrm{G}}}^{-1} \mA \mSigma \mA^\top \big) = \mathrm{tr}\big(\frac{\sigma_t^2}{\sigma_c^2} \mA  \mA^\top \big)=\frac{md_c \sigma_0^2 \sigma_t^2}{(m\sigma_0^2 + \sigma^2)\sigma^2},
\end{align}
where the last step follows due to the fact $\mA  \mA^\top = \frac{m \sigma_c^4}{ \sigma^4}I_{d_c}$ in this case.

\subsection{Effect of Source samples}
As shown in \eqref{equ:alpha_Gaussian} and \eqref{equ:beta_Gaussian}, the transfer generalization errors of this mean estimation problem only depend on the number of samples of $D_s$, and do not depend on the distribution $P_{D_s}$. In this subsection, we will show that, though different sources samples (distribution) do not change generalization error, they will influence the population risks and excess risks.

In this mean estimation example, the population risk of any $W$ can be decomposed into \begin{align}
    L_P(W,P_{D_t}) &= \mathbb{E}_{Z_t}[\|W-Z_t\|_2^2] = \mathbb{E}_{Z_t}[\|W-\mathbb{E}[W]+\mathbb{E}[W]-\vmu_t + \vmu_t - Z_t\|_2^2] \nn\\
    &= \|\mathbb{E}[W]-\vmu_t\|_2^2 + \mathrm{tr}(\Cov[W]) +  d\sigma_t^2,
\end{align}
where the first term, $\|\mathbb{E}[W]-\vmu_t\|_2^2$, is the squared bias, and the second term, $\mathrm{tr}(\Cov[W])$, is the variance. It is easy to verify that the optimal $\vw^* = \argmin L_P(W,P_{D_t})$ is just the target mean $\vmu_t$, and $L_P(\vw^*,P_{D_t}) = d\sigma_t^2$, then the excess risk defined in \eqref{equ:excess_risk}  can be written as,
\begin{align}
    \mathcal{E}_r(P_W)  = \|\mathbb{E}[W]-\vmu_t\|_2^2 + \mathrm{tr}(\Cov[W]).
\end{align}
For the $\alpha$-weighted Gibbs algorithm in \eqref{equ:Gaussian_two}, it can be shown that
\begin{align}
   \mathrm{Bias} =  \mathbb{E}[W_\alpha]-\vmu_t &= \frac{\sigma^2(\vmu_0-\vmu_t)+n\sigma_0^2(\vmu_s-\vmu_t)}{(m+n)\sigma_0^2+\sigma^2},\\
   \mathrm{tr}(\Cov[W_\alpha]) &= \frac{d\sigma_1^4}{\sigma^4}(n\sigma_s^2+m\sigma_t^2) +d \sigma_1^2.
\end{align}
The Bias term will be zero if $\vmu_0 = \vmu_s = \vmu_t$. Thus, the excess risk of $\alpha$-weighted Gibbs algorithm will be minimized when $\vmu_s = \vmu_t$ and $\sigma_s^2=0$, which is equivalent to the case that the target mean $\vmu_t$ is known.

For the two-stage Gibbs algorithm, if we learn the first $d_\phi$ components $\vmu_\phi \in \mathbb{R}^{d_\phi}$ using the Gibbs algorithm with $(\frac{n}{2\sigma^2}, \mathcal{N}(\vmu_{1,\phi},\sigma^2_0 I_{d_\phi}), L_E^{S1}(\vw_{\phi},\vw_c^s,d_s))$, and use the $(\frac{m}{2\sigma^2}, \mathcal{N}(\vmu_{2,c},\sigma^2_0 I_{d_c}), L_E^{S2}(\vmu_{\phi},\vw_c^t,d_t))$-Gibbs algorithm to learn the remain $d_c$ components in the second stage, it can be shown that
\begin{align}
   \mathrm{Bias}_\phi =  \mathbb{E}[W_\phi]-\vmu_{t,\phi} &= \frac{\sigma^2(\vmu_{1,\phi}-\vmu_{t,\phi})+n\sigma_0^2(\vmu_{s,\phi}-\vmu_{t,\phi})}{n\sigma_0^2+\sigma^2},\\
   \mathrm{Bias}_c =  \mathbb{E}[W_c^t]-\vmu_{t,c} &= \frac{\sigma^2(\vmu_{2,c}-\vmu_{t,c})}{m\sigma_0^2+\sigma^2},\\
   \mathrm{tr}(\Cov[W_\phi]) &= \frac{nd_\phi\sigma_\phi^4\sigma_s^2}{\sigma^4} +d_\phi \sigma_\phi^2,\\
   \mathrm{tr}(\Cov[W_c^t]) &= \frac{md_c\sigma_c^4\sigma_t^2}{\sigma^4} +d_c \sigma_c^2,
\end{align}
with $\sigma_\phi^2=\frac{\sigma_0^2 \sigma^2}{n\sigma_0^2 +\sigma^2}$ and $\sigma_c^2=\frac{\sigma_0^2 \sigma^2}{m\sigma_0^2 +\sigma^2}$. The excess risk of the two-stage  Gibbs algorithm will be minimized when $\vmu_{s,\phi} = \vmu_{t,\phi}$ and $\sigma_s^2=0$, i.e., the optimal shared parameter $\vmu_{t,\phi}$ is known.

\section{Expected Transfer Generalization Error Upper Bound for General Learning Algorithm}\label{app: proof Theorem: General result}

\subsection{Preliminaries}
We first provide some preliminaries for our proofs in this section by introducing the notion of cumulant generating function, which characterizes different tail behaviors of random variables.
\begin{definition}
The cumulant generating function (CGF) of a random variable $X$ is defined as
\begin{equation}
\Lambda_X(\lambda) \triangleq \log \mathbb{E}[e^{\lambda(X-\mathbb{E}X)}].
\end{equation}
\end{definition}
Assuming $\Lambda_X(\lambda)$ exists, it can be verified that $\Lambda_X(0)=\Lambda_X'(0)=0$, and that it is convex.

\begin{definition}
For a convex function $\psi$ defined on the interval $[0,b)$, where $0<b\le \infty$, its Legendre dual $\psi^\star$ is defined as
\begin{equation}
  \psi^\star(x) \triangleq \sup_{\lambda \in [0,b)} \big(\lambda x-\psi(\lambda)\big).
\end{equation}
\end{definition}

The following lemma characterizes a useful property of the Legendre dual and its inverse function.
\begin{lemma}\label{lemma:psi_star}\citep[Lemma 2.4]{boucheron2013concentration}
Assume that $\psi(0)= \psi'(0) = 0$. Then $\psi^\star(x)$ defined above is a non-negative convex and non-decreasing function on $[0,\infty)$ with $\psi^\star(0) = 0$. Moreover, its inverse function $\psi^{\star -1}(y) = \inf\{x\ge 0: \psi^\star(x)\ge y\}$ is concave, and can be written as
\begin{equation}
  \psi^{\star -1}(y) = \inf_{\lambda \in [0,b)} \Big( \frac{y+\psi(\lambda)}{\lambda} \Big),\quad b> 0.
\end{equation}
\end{lemma}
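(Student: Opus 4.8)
The plan is to split the statement into its two halves: the elementary structural properties of $\psi^\star$ follow directly from its definition as a pointwise supremum of affine functions and require only a few lines, whereas the variational formula for the generalized inverse $\psi^{\star-1}$ is the substantive part and will be established by a matching pair of inequalities.

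For the properties of $\psi^\star$, I would first record that $\psi$ is itself non-negative on $[0,b)$: convexity together with $\psi'(0)=0$ makes $\lambda=0$ a global minimizer, so $\psi(\lambda)\ge\psi(0)=0$. Evaluating the defining supremum at $\lambda=0$ then gives $\psi^\star(x)\ge 0\cdot x-\psi(0)=0$ for every $x$, and in particular $\psi^\star(0)=\sup_{\lambda}(-\psi(\lambda))=-\inf_\lambda\psi(\lambda)=0$. Convexity of $\psi^\star$ is immediate because for each fixed $\lambda$ the map $x\mapsto\lambda x-\psi(\lambda)$ is affine and a pointwise supremum of affine functions is convex; monotonicity on $[0,\infty)$ follows because each such affine function has non-negative slope $\lambda\ge 0$, hence is non-decreasing, and a supremum of non-decreasing functions is non-decreasing.

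For the variational formula I would introduce $g(y)\triangleq\inf_{\lambda\in(0,b)}\frac{y+\psi(\lambda)}{\lambda}$ (the index $\lambda=0$ contributes $+\infty$ for $y>0$ and may be dropped, which matches the $\inf_{\lambda\in[0,b)}$ written in the statement). The key observation is that $g$ is concave in $y$, being an infimum of the functions $y\mapsto\frac1\lambda y+\frac{\psi(\lambda)}{\lambda}$, each affine in $y$; this already delivers the asserted concavity of $\psi^{\star-1}$ once the identity $\psi^{\star-1}=g$ is established. The easy inequality $\psi^{\star-1}(y)\le g(y)$ comes from $\psi^\star(x)\ge\lambda x-\psi(\lambda)$, so that the condition $x\ge\frac{y+\psi(\lambda)}{\lambda}$ already forces $\psi^\star(x)\ge y$; hence each $\frac{y+\psi(\lambda)}{\lambda}$ belongs to $\{x\ge0:\psi^\star(x)\ge y\}$, and taking the infimum over $\lambda$ gives the bound.

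The reverse inequality $\psi^{\star-1}(y)\ge g(y)$ is where the main difficulty lies, since the supremum defining $\psi^\star$ need not be attained and one cannot read off a single witnessing $\lambda$ from $\psi^\star(x)\ge y$. I would instead argue approximately: if $\psi^\star(x)\ge y$ then for every $\epsilon\in(0,y)$ there is some $\lambda$ with $\lambda x-\psi(\lambda)>y-\epsilon$, which is possible only for $\lambda>0$ and rearranges to $x>\frac{(y-\epsilon)+\psi(\lambda)}{\lambda}\ge g(y-\epsilon)$. Letting $\epsilon\downarrow 0$ and invoking continuity of the concave function $g$ on the interior of its effective domain (which contains every $y>0$) yields $x\ge g(y)$ for every admissible $x$, so that $\psi^{\star-1}(y)=\inf\{x:\psi^\star(x)\ge y\}\ge g(y)$. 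Combining the two inequalities gives $\psi^{\star-1}=g$, which is concave and has exactly the claimed representation, completing the proof.
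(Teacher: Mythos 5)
The paper does not prove this lemma: it is imported verbatim as a cited result (Lemma 2.4 of Boucheron, Lugosi and Massart), so there is no in-paper proof to compare against. Your argument is correct and is essentially the standard one from that reference --- the structural properties of $\psi^\star$ from its representation as a supremum of affine maps with non-negative slopes, the upper bound on $\psi^{\star-1}$ by exhibiting each $\frac{y+\psi(\lambda)}{\lambda}$ as a member of $\{x\ge 0:\psi^\star(x)\ge y\}$, and the reverse bound via an $\epsilon$-approximate maximizer combined with continuity of the concave, finite function $g$ on $(0,\infty)$. The only loose end is the trivial case $y=0$, where both sides equal $0$ (using $\psi(\lambda)/\lambda\to\psi'(0)=0$ as $\lambda\downarrow 0$), which is worth one sentence for completeness.
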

We consider the distributions with the following tail behaviors in the appendices:
\begin{itemize}
\item \textbf{Sub-Gaussian:} A random variable $X$ is  $\sigma$-sub-Gaussian, if $\psi(\lambda)= \frac{\sigma^2\lambda^2}{2}$ is an upper bound on $\Lambda_X(\lambda)$, for $\lambda \in \mathbb{R}$. Then by Lemma~\ref{lemma:psi_star}, $$\psi^{\star -1}(y)=\sqrt{2\sigma^2y}.$$
\item \textbf{Sub-Exponential:} A random variable $X$ is  $(\sigma_e^2,b)$-sub-Exponential, if $\psi(\lambda)= \frac{\sigma_e^2\lambda^2}{2}$ is an upper bound on $\Lambda_X(\lambda)$, for $0 \leq |\lambda|\leq \frac{1}{b} $ and $b > 0$. Using Lemma~\ref{lemma:psi_star}, we have
$$
\psi^{\star -1}(y)=
    \begin{cases}                       &\sqrt{2\sigma_e^2y}, \quad \textit{if }y\leq \frac{\sigma_e^2}{2b};\\
    &by+\frac{\sigma_e^2}{2b},\quad \textit{otherwise.}
    \end{cases}
$$
\item \textbf{Sub-Gamma:} A random variable $X$ is  $\Gamma(\sigma_s^2,c_s)$-sub-Gamma \citep{zhang2020concentration}, if $\psi(\lambda)=  \frac{\lambda^2 \sigma_s^2}{2(1-c_s |\lambda|)}$ is an upper bound on $\Lambda_X(\lambda)$, for $0 < |\lambda| < \frac{1}{c_s} $ and $c_s>0$. Using Lemma~\ref{lemma:psi_star}, we have $$\psi^{\star -1}(y)=\sqrt{2\sigma_s^2y}+c_s y.$$
\end{itemize}

\subsection{Proof of Theorem~\ref{Theorem: General result}}
We prove a more general form of Theorem~\ref{Theorem: General result} as follows:
\begin{theorem}\label{Theorem: general with psi}
Suppose that the target training samples $D_t=\{Z_j^t\}_{j=1}^m$ are i.i.d generated from the distribution $P_Z^t$ and the loss function $\ell(w,Z)$ satisfies $ \Lambda_{\ell(w,Z)}(\lambda)\leq \psi(-\lambda)$, for $\lambda \in (-b,0)$ and $ \Lambda_{\ell(w,Z)}(\lambda)\leq \psi(\lambda)$, for $\lambda \in (0,b)$ and $b>0$ under the distribution $P_Z^t \otimes P_W$. The following upper bound holds:
\begin{align}
    &|\overline{\text{gen}}(P_{W|D_s,D_t},P_{D_s},P_{D_t})|
\leq
   \psi^{\star -1}(\frac{ I(W;D_t|D_s)}{m}).
\end{align}
\end{theorem}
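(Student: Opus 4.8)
The plan is to reduce the statement to a single-letter decoupling estimate and then aggregate over the $m$ i.i.d.\ target samples. First I would condition on $D_s=d_s$. Since $D_s$ and $D_t$ are independent, the identity $L_P(W,P_{D_t})=\mathbb{E}_{P_{D_t}}[L_E(W,D_t)]$ lets me rewrite the conditional generalization error $\overline{\text{gen}}(d_s)$ as the gap between the expectation of $L_E(W,D_t)$ under the product law $P_{W|d_s}\otimes P_{D_t}$ and under the joint law $P_{W,D_t|d_s}$. Writing $L_E(W,D_t)=\tfrac1m\sum_{j=1}^m \ell(W,Z_j^t)$ and using that the $Z_j^t$ are i.i.d., this gap splits additively as $\overline{\text{gen}}(d_s)=\tfrac1m\sum_{j=1}^m\Delta_j(d_s)$, where $\Delta_j(d_s)$ is the decoupling gap for the single loss $\ell(W,Z_j^t)$ between the product $P_{W|d_s}\otimes P_{Z_j^t}$ and the joint $P_{W,Z_j^t|d_s}$.

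The engine is a per-sample variational bound. For each $j$ and each admissible $\lambda$, I would apply the Donsker--Varadhan inequality to the pair $P_{W,Z_j^t|d_s}$ (as the reference expectation) and $P_{W|d_s}\otimes P_{Z_j^t}$ with test function $\lambda\big(\ell(W,Z_j^t)-\mathbb{E}[\ell]\big)$; the relative-entropy term equals $I(W;Z_j^t|D_s=d_s)$ and the residual log-moment-generating term is precisely the CGF $\Lambda_{\ell(w,Z)}$ under the product measure, which the hypothesis bounds by $\psi(\lambda)$ for $\lambda\in(0,b)$ and by $\psi(-\lambda)$ for $\lambda\in(-b,0)$. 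Running $\lambda>0$ and $\lambda<0$ controls the two tails separately, and in each case optimizing the ratio $\frac{I(W;Z_j^t|D_s=d_s)+\psi(\pm\lambda)}{|\lambda|}$ over $\lambda$ and invoking the Legendre-dual identity of Lemma~\ref{lemma:psi_star} collapses the infimum to $\psi^{\star-1}\!\big(I(W;Z_j^t|D_s=d_s)\big)$. Combining both tails gives the two-sided estimate $|\Delta_j(d_s)|\le\psi^{\star-1}\!\big(I(W;Z_j^t|D_s=d_s)\big)$.

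It remains to aggregate the $m$ terms. Since $\psi^{\star-1}$ is concave, non-decreasing, and vanishes at $0$ (Lemma~\ref{lemma:psi_star}), Jensen's inequality yields $\tfrac1m\sum_j\psi^{\star-1}(I_j)\le\psi^{\star-1}\!\big(\tfrac1m\sum_j I_j\big)$ with $I_j=I(W;Z_j^t|D_s=d_s)$. The key information-theoretic step is the super-additivity bound $\sum_{j=1}^m I(W;Z_j^t|D_s=d_s)\le I(W;D_t|D_s=d_s)$, which holds \emph{because} the target samples are i.i.d.: the chain rule gives $I(W;D_t|D_s=d_s)=\sum_j I(W;Z_j^t\mid Z_{<j}^t,D_s=d_s)$, and independence together with the fact that conditioning reduces entropy gives $I(W;Z_j^t\mid Z_{<j}^t,D_s=d_s)\ge I(W;Z_j^t|D_s=d_s)$. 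Monotonicity of $\psi^{\star-1}$ then produces $|\overline{\text{gen}}(d_s)|\le\psi^{\star-1}\!\big(\tfrac1m I(W;D_t|D_s=d_s)\big)$, and a final expectation over $D_s$ with one more use of Jensen replaces $\mathbb{E}_{D_s}[I(W;D_t|D_s=d_s)]$ by $I(W;D_t|D_s)$, giving the claim. Specializing $\psi(\lambda)=\sigma^2\lambda^2/2$, so that $\psi^{\star-1}(y)=\sqrt{2\sigma^2 y}$, recovers Theorem~\ref{Theorem: General result}.

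The hard part will be the per-sample variational step and, in particular, keeping the bookkeeping honest about \emph{which} product measure the CGF hypothesis is invoked under: the Donsker--Varadhan argument requires the CGF of $\ell(W,Z_j^t)$ under the conditional product $P_{W|d_s}\otimes P_{Z_j^t}$, so the sub-Gaussian/CGF assumption must be read as holding for the conditional law of $W$, not merely for the marginal. The second delicate point is that the $1/m$ factor does \emph{not} come from any sub-Gaussianity of the averaged empirical risk $L_E$ (which would force control of the fluctuations of $L_P(W)$ that we do not have); it emerges only from the $\tfrac1m$ prefactor of the per-sample decomposition combined with the concavity of $\psi^{\star-1}$ and the super-additivity of conditional mutual information. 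Getting those two ingredients to cooperate, together with the two-sided tail handling needed for the absolute value, is the crux of the argument.
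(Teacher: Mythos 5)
Your proposal follows essentially the same route as the paper's proof: per-sample decomposition of the generalization error, Donsker--Varadhan applied to each $(W,Z_j^t)$ pair conditionally on the source data, inversion via the Legendre dual of $\psi$ with separate positive and negative $\lambda$ for the two tails, and then aggregation using the concavity of $\psi^{\star-1}$ together with $\sum_{j}I(W;Z_j^t|D_s)\le I(W;D_t|D_s)$, which holds exactly by the chain-rule argument you give for i.i.d.\ target samples.

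The one place where your bookkeeping departs from the theorem as stated is the measure under which the CGF hypothesis is invoked. You correctly identify that a fully conditional argument needs the CGF of $\ell(W,Z^t)$ under $P_{W|D_s=d_s}\otimes P_Z^t$, but you then propose to resolve this by \emph{reading} the assumption as holding for the conditional law of $W$ --- that strengthens the hypothesis, since the theorem only assumes the bound under the marginal product $P_Z^t\otimes P_W$. The paper closes this gap differently and without changing the assumption: it takes the expectation over $D_s$ of the Donsker--Varadhan inequality \emph{before} invoking the CGF bound, and applies Jensen's inequality to the log-moment-generating term,
\begin{align}
\mathbb{E}_{P_{D_s}}\Big[\log\mathbb{E}_{P_{W|D_s}\otimes P_Z^t}\big[e^{\lambda\ell(W,Z^t)}\big]\Big]\le \log\mathbb{E}_{P_W\otimes P_Z^t}\big[e^{\lambda\ell(W,Z^t)}\big]\le \psi(\lambda)+\lambda\,\mathbb{E}_{P_W\otimes P_Z^t}[\ell(W,Z^t)]-\lambda\,\mathbb{E}_{P_W\otimes P_Z^t}[\ell(W,Z^t)],
\end{align}
so that only the marginal-product CGF bound is ever needed, and the relative-entropy term averages to $I(W;Z_j^t|D_s)$. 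With that reordering (average over $D_s$ first, then optimize over $\lambda$), the rest of your argument --- the two-sided tail handling, the Jensen step for $\psi^{\star-1}$, and the super-additivity of conditional mutual information --- goes through verbatim and recovers the stated bound. Your closing remarks about where the $1/m$ factor comes from are accurate and match the paper's reasoning.
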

\begin{proof}
The generalization error can be written as
\begin{align}\label{Eq: decomp gen psi}
   |\overline{\text{gen}}(P_{W|D_s,D_t},P_{D_s},P_{D_t})|\leq \frac{1}{m}\sum_{i=1}^m |\mathbb{E}_{P_{W,Z_i^t}}[ \ell(W,Z_i^t)]-\mathbb{E}_{P_{W}\otimes P_{Z}^t}[\ell(W,Z^t)]|.
\end{align}
Using the Donsker–Varadhan variational representation  \citep{boucheron2013concentration}, for all $\lambda \in (-b,+b)$,
\begin{align}
    D(P_{W,Z_i^t|d_s}\| P_{W|d_s}\otimes P_{Z}^t)\geq \mathbb{E}_{P_{W,Z_i^t|d_s}}[\lambda \ell(W,Z_i^t)]-\log(\mathbb{E}_{P_{W|d_s}\otimes P_{Z}^t}[e^{\lambda \ell(W,Z^t)}]).
\end{align}
Taking expectation respect to $D_s$ over both sides, then we have
\begin{align}\label{Eq: jensen 1}
    I(W;Z_i^t|D_s)&\geq \mathbb{E}_{P_{W,Z_i^t}}[\lambda \ell(W,Z_i^t)]-\mathbb{E}_{P_{D_s}}[\log(\mathbb{E}_{P_{W|D_s}\otimes P_{Z}^t}[e^{\lambda \ell(W,Z^t)}])] \nn\\
    &
    \geq \mathbb{E}_{P_{W,Z_i^t}}[\lambda \ell(W,Z_i^t)]-\log(\mathbb{E}_{P_{W}\otimes P_{Z}^t}[e^{\lambda \ell(W,Z^t)}])\nn \\
    &
    \geq \lambda(\mathbb{E}_{P_{W,Z_i^t}}[ \ell(W,Z_i^t)]-\mathbb{E}_{P_{W}\otimes P_{Z}^t}[\ell(W,Z^t)])-\psi(\lambda).
\end{align}
Using similar approach as in \citep[Theorem~1]{bu2020tightening},
\begin{align}\label{Eq: psi bound}
    |\mathbb{E}_{P_{W,Z_i^t}}[ \ell(W,Z_i^t)]-\mathbb{E}_{P_{W}\otimes P_{Z}^t}[\ell(W,Z^t)]|\leq \psi^{\star -1}(I(W;Z_i^t|D_s)).
\end{align}
Now by combining \eqref{Eq: decomp gen psi} and \eqref{Eq: psi bound}, we have:
\begin{align}\label{Eq: concave}
   |\overline{\text{gen}}(P_{W|D_s,D_t},P_{D_s},P_{D_t})|&\leq \frac{1}{m}\sum_{i=1}^m \psi^{\star -1}(I(W;Z_i^t|D_s)) \nn\\
   &\le  \psi^{\star -1}\Big(\frac{1}{m}\sum_{i=1}^m I(W;Z_i^t|D_s)\Big) \nn \\
   &\leq \psi^{\star -1}\Big(\frac{I(W,D_t|D_s)}{m}\Big),
\end{align}
where the inequality follows due to the concavity of $\psi^{\star -1}$ function and the Independence between $Z_i^t$.
\end{proof}
\begin{reptheorem}{Theorem: General result}
\textbf{(restated)}
Suppose that the target training samples $D_t=\{Z_j^t\}_{j=1}^m$ are i.i.d generated from the distribution $P_Z^t$, and the non-negative loss function $\ell(w,Z)$ is $\sigma$-sub-Gaussian
under the distribution $P_Z^t \otimes P_W$. Then, the following upper bound holds
\begin{align*}
   &|\overline{\text{gen}}(P_{W|D_s,D_t},P_{D_s},P_{D_t})|
\leq
   \sqrt{\frac{2 \sigma^2}{m} I(W;D_t|D_s)}.
\end{align*}
\end{reptheorem}
\begin{proof}
For $\sigma$-subgaussian assumption, we have $\psi^{\star -1}(y)=\sqrt{2\sigma^2y}$ in Theorem~\ref{Theorem: general with psi} and this completes the proof.
\end{proof}

\begin{remark}
Similar upper bound on the expected transfer generalization error in Theorem~\ref{Theorem: General result} holds by considering a different assumption that the loss function $\ell(w,Z)$ is $\sigma$-sub-Gaussian
under the distribution $P_Z^t$ for all $w \in \mathcal{W}$.
\end{remark}


\subsection{Other Tail Distributions}

Using Theorem~\ref{Theorem: general with psi}, we can also provide upper bounds on the expected transfer generalization error for any general learning algorithms under sub-Exponential and sub-Gamma assumptions.

\begin{corollary}[Sub-Exponential]\label{Cor: sub exponential general}
Suppose that the target training samples $D_t=\{Z_j^t\}_{j=1}^m$ are i.i.d generated from the distribution $P_Z^t$, and the non-negative loss function $\ell(w,Z)$ $(\sigma_e^2,b)$-sub-Exponential under distribution $P_Z^t \otimes P_W$. Then the following upper bound holds
\begin{align}
   &|\overline{\text{gen}}(P_{W|D_s,D_t},P_{D_s},P_{D_t})|
\leq
   \begin{cases}
   &\sqrt{2\sigma_e^2\frac{I(W;D_t|D_s)}{m}}, \quad \textit{if }\frac{I(W;D_t|D_s)}{m} \leq \frac{\sigma_e^2}{2b};\\
    &b\frac{I(W;D_t|D_s)}{m}+\frac{\sigma_e^2}{2b},\quad \textit{otherwise.}
\end{cases}.
\end{align}
\end{corollary}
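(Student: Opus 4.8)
The plan is to invoke the general bound of Theorem~\ref{Theorem: general with psi}, which already does all the heavy lifting: it establishes that $|\overline{\text{gen}}(P_{W|D_s,D_t},P_{D_s},P_{D_t})| \leq \psi^{\star -1}(I(W;D_t|D_s)/m)$ whenever the loss obeys the two one-sided CGF bounds $\Lambda_{\ell(w,Z)}(\lambda) \leq \psi(\lambda)$ on $(0,b)$ and $\Lambda_{\ell(w,Z)}(\lambda) \leq \psi(-\lambda)$ on $(-b,0)$ under $P_Z^t \otimes P_W$. So the only work is to verify that the $(\sigma_e^2,b)$-sub-Exponential hypothesis supplies an admissible $\psi$, and then to substitute the corresponding $\psi^{\star-1}$ read off from the Preliminaries.

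First I would set $\psi(\lambda) = \sigma_e^2 \lambda^2/2$, which is exactly the upper bound on $\Lambda_{\ell(w,Z)}(\lambda)$ granted by the sub-Exponential assumption, valid for $|\lambda| \leq 1/b$. Since $\psi$ is even, the two one-sided conditions of Theorem~\ref{Theorem: general with psi} collapse to the single statement $\Lambda_{\ell(w,Z)}(\lambda) \leq \sigma_e^2 \lambda^2/2$ on $(-1/b, 1/b)$. This $\psi$ satisfies $\psi(0) = \psi'(0) = 0$, so Lemma~\ref{lemma:psi_star} applies, the Legendre dual $\psi^\star$ is non-negative and non-decreasing, and its inverse $\psi^{\star-1}$ is well defined and concave — the concavity being precisely what was used in the last display of the proof of Theorem~\ref{Theorem: general with psi} to pass from the per-sample bounds to the aggregate mutual information.

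Next I would read off $\psi^{\star-1}$ from the sub-Exponential bullet in the Preliminaries, namely $\psi^{\star-1}(y) = \sqrt{2\sigma_e^2 y}$ for $y \leq \sigma_e^2/(2b)$ and $\psi^{\star-1}(y) = by + \sigma_e^2/(2b)$ otherwise. Substituting $y = I(W;D_t|D_s)/m$ into the master bound then yields the two-case expression claimed in the Corollary, with the threshold $I(W;D_t|D_s)/m \leq \sigma_e^2/(2b)$ determining which branch is active.

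There is no genuine obstacle here: the argument is a direct specialization of Theorem~\ref{Theorem: general with psi}. The only point demanding care is a notational collision — the theorem is phrased with the CGF bound valid on $(-b,b)$, whereas the sub-Exponential definition uses the range $(-1/b,1/b)$, so the symbol $b$ plays different roles in the two statements. I would therefore make explicit that the abstract parameter of Theorem~\ref{Theorem: general with psi} is instantiated as $1/b$, and confirm that the derivation of $\psi^{\star-1}$ through the infimum representation of Lemma~\ref{lemma:psi_star} respects this admissible range.
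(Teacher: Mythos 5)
Your proposal is correct and follows exactly the route the paper takes: the paper proves Corollary~\ref{Cor: sub exponential general} by directly specializing Theorem~\ref{Theorem: general with psi} with $\psi(\lambda)=\sigma_e^2\lambda^2/2$ and substituting the sub-Exponential $\psi^{\star-1}$ listed in the Preliminaries, which is precisely what you do. Your explicit remark that the interval endpoint $b$ in Theorem~\ref{Theorem: general with psi} must be instantiated as $1/b$ for the $(\sigma_e^2,b)$-sub-Exponential assumption is a point the paper leaves implicit, and it is handled correctly.
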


\begin{corollary}[Sub-Gamma]\label{Cor: sub gamma general}
Suppose that the target training samples $D_t=\{Z_j^t\}_{j=1}^m$ are i.i.d generated from the distribution $P_Z^t$, and the non-negative loss function $\ell(w,Z)$ is $\Gamma(\sigma_s^2,c_s)$-sub-Gamma under distribution $P_Z^t \otimes P_W$. Then, the following upper bound holds
\begin{align}
   &|\overline{\text{gen}}(P_{W|D_s,D_t},P_{D_s},P_{D_t})|
\leq
  \sqrt{2\sigma_s^2\frac{I(W;D_t|D_s)}{m} }+c_s \frac{I(W;D_t|D_s)}{m}.
\end{align}
\end{corollary}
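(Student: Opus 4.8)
The plan is to obtain this Sub-Gamma corollary as an immediate specialization of the general bound already established in Theorem~\ref{Theorem: general with psi}, which states that $|\overline{\text{gen}}(P_{W|D_s,D_t},P_{D_s},P_{D_t})| \leq \psi^{\star -1}\big(I(W;D_t|D_s)/m\big)$ whenever the loss obeys the two-sided CGF control $\Lambda_{\ell(w,Z)}(\lambda) \leq \psi(\lambda)$ for $\lambda \in (0,b)$ and $\Lambda_{\ell(w,Z)}(\lambda) \leq \psi(-\lambda)$ for $\lambda \in (-b,0)$ under $P_Z^t \otimes P_W$. So the entire task reduces to identifying the right $\psi$ and evaluating its inverse Legendre dual.

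First I would verify that the $\Gamma(\sigma_s^2,c_s)$-sub-Gamma hypothesis supplies exactly these two-sided bounds, with $\psi(\lambda) = \frac{\lambda^2 \sigma_s^2}{2(1-c_s|\lambda|)}$ and $b = 1/c_s$. Since this $\psi$ depends on $\lambda$ only through $|\lambda|$, it is even, so $\psi(-\lambda) = \psi(\lambda)$ and the negative-side control required by the general theorem follows automatically from the positive-side bound. I would also record that $\psi(0) = \psi'(0) = 0$ and that $\psi$ is convex on $[0,1/c_s)$, which are precisely the hypotheses needed to invoke Lemma~\ref{lemma:psi_star}.

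Next I would evaluate the inverse Legendre dual using Lemma~\ref{lemma:psi_star}, namely $\psi^{\star -1}(y) = \inf_{\lambda \in [0,1/c_s)} \frac{y + \psi(\lambda)}{\lambda}$. Carrying out this one-dimensional minimization in closed form yields $\psi^{\star -1}(y) = \sqrt{2\sigma_s^2 y} + c_s y$, which is the value already tabulated in the sub-Gamma bullet of the preliminaries. Substituting $y = I(W;D_t|D_s)/m$ then gives the claimed bound $\sqrt{2\sigma_s^2 I(W;D_t|D_s)/m} + c_s I(W;D_t|D_s)/m$, and the proof is complete.

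The only genuinely computational step is the optimization defining $\psi^{\star -1}$, so that is where the work concentrates; everything else is a direct invocation of the already-proved general theorem. The main point to be careful about is the constrained nature of the infimum over $\lambda \in [0,1/c_s)$: one must confirm that the minimizing $\lambda$ stays in the feasible range for every $y \geq 0$, so that the closed-form expression is valid throughout and no boundary case at $\lambda \to 1/c_s$ intervenes. Since this Legendre computation is standard and is moreover already stated in the preliminaries, I expect no further estimates to be required.
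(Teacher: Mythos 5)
Your proposal is correct and follows exactly the route the paper intends: the corollary is a direct specialization of Theorem~\ref{Theorem: general with psi} with $\psi(\lambda)=\frac{\lambda^2\sigma_s^2}{2(1-c_s|\lambda|)}$, using the closed form $\psi^{\star-1}(y)=\sqrt{2\sigma_s^2 y}+c_s y$ already recorded in the preliminaries via Lemma~\ref{lemma:psi_star}. Your added care about the evenness of $\psi$ and the feasibility of the minimizer $\lambda^*=\big(c_s+\sqrt{\sigma_s^2/(2y)}\big)^{-1}<1/c_s$ is sound and only makes explicit what the paper leaves implicit.
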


\section{Distribution-free Upper Bound on Generalization Error}\label{app: Distribution-free Upper Bound}



\begin{reptheorem}{Theorem:  distribution-free upper bound weighted}\textbf{(restated)}
Suppose that the target training samples $D_t=\{Z_j^t\}_{j=1}^m$ are i.i.d generated from the distribution $P_Z^t$, and the non-negative loss function $\ell(w,z)$ is $\sigma_\alpha$-sub-Gaussian
under the distribution $P_Z^t \otimes P_{W_\alpha}$.
If we further assume $C_\alpha\le \frac{L(W_\alpha;D_t|D_s)}{I(W_\alpha;D_t|D_s)}$ for some $C_\alpha \ge 0$, then for the $\alpha$-weighted Gibbs algorithm and $0<\alpha<1$,
\begin{align*}
    &\overline{\text{gen}}_{\alpha}(P_{D_s},P_{D_t}) \leq \frac{2\sigma_\alpha^2\gamma\alpha}{(1+C_\alpha)m}.
\end{align*}
\end{reptheorem}
\begin{proof}
By equation \eqref{eq: General upper bound transfer} in Theorem~\ref{Theorem: General result}, we have
\begin{align}\label{Eq: Transfer const 1}
    \overline{\text{gen}}_\alpha(P_{D_s},P_{D_t})
    &=\frac{I_{\mathrm{SKL}}(W_\alpha;D_t|D_s)}{\gamma\alpha} \\\nonumber
    &\le \sqrt{\frac{2\sigma^2I(W_\alpha;D_t|D_s)}{m}}.
\end{align}
As we have $I(W_\alpha;D_t|D_s)(1+C_\alpha)\leq I_{\mathrm{SKL}}(W_\alpha;D_t|D_s)$ in the assumption, the following upper bound holds:
\begin{align}
    \frac{I(W_\alpha;D_t|D_s)(1+C_\alpha)}{\gamma\alpha} \le \sqrt{\frac{2\sigma_\alpha^2I(W_\alpha;D_t|D_s)}{m}},
\end{align}
which implies that
\begin{equation}\label{Eq: MI upper bound transfer}
    I(W_\alpha;D_t|D_s)\leq \frac{2\sigma_\alpha^2\gamma^2\alpha^2}{(1+C_\alpha)^2m}.
\end{equation}
Combining \eqref{Eq: MI upper bound transfer} with \eqref{Eq: Transfer const 1} completes the proof.
\end{proof}

\begin{reptheorem}{Theorem: distribution-free upper bound two-stage}\textbf{(restated)}
Suppose that the target training samples $D_t=\{Z_j^t\}_{j=1}^m$ are i.i.d generated from the distribution $P_Z^t$, and the non-negative loss function $\ell(w,z)$ is $\sigma_\beta$-sub-Gaussian
under distribution $P_Z^t\otimes P_{W_c^t|W_\phi=w_\phi}$ for all $w_\phi \in \mathcal{W}_\phi$.
If we further assume $C_\beta\le \frac{L(W_c^t;D_t|W_\phi)}{I(W_c^t;D_t|W_\phi)}$ for some $C_\beta \ge 0$, then for the two-stage Gibbs algorithm,
\begin{align*}
    P_{W_c^t|D_t,W_{\phi}}^\gamma (w_c^t|d_t,w_{\phi}) = \frac{\pi(w_c^t) e^{-\gamma L_E^{S_2}(w_\phi,w_c^t,d_t)}}{V_\beta(w_\phi,d_t,\gamma)},
\end{align*}
 we have
\begin{align*}
   \overline{\text{gen}}_{\beta}(P_{D_s},P_{D_t})\leq \frac{2\sigma_\beta^2\gamma}{(1+C_\beta)m}.
\end{align*}
\end{reptheorem}

\begin{proof}
Using Theorem~\ref{Theorem: General result} by considering $W=(W_C^t,W_\phi)$,
\begin{align*}
   &|\overline{\text{gen}}_\beta(P_{D_s},P_{D_t})|
\leq
   \sqrt{\frac{2 \sigma^2}{m} I(W_c^t,W_\phi;D_t|D_s)}.
\end{align*}
Now, based on chain rule for mutual information we have
\begin{align*}
    I(W_c^t,W_\phi;D_t|D_s)&=I(W_\phi;D_t|D_s)+I(W_c^t;D_t|D_s,W_\phi)\\&=I(W_c^t;D_t|W_\phi),
\end{align*}
where $I( W_\phi;D_t|D_s) = 0$ due to the fact that $W_\phi$ is independent from $D_t$ given $D_s$, and $I( W_c^t;D_t|W_\phi, D_s) = I( W_c^t;D_t|W_\phi)$ since $D_s \perp (W_c^t,D_t) | W_\phi$.

Using Theorem~\ref{Theorem: Two-stage-ERM approach}, it can be shown that
\begin{align}\label{Eq: Transfer const 2}
   &\overline{\text{gen}}_{\beta}(P_{D_s},P_{D_t})=\frac{ I_{\mathrm{SKL}}(D_t;W_c^t|W_\phi)}{\gamma}\leq \sqrt{\frac{2\sigma_\beta^2}{m}I(W_c^t;D_t|W_\phi)}.
\end{align}
As we have $I(W_c^t;D_t|W_\phi)(1+C_\beta)\leq I_{\mathrm{SKL}}(W_c^t;D_t|W_\phi)$, the following bound holds:
\begin{align}
    \frac{I(W_c^t;D_t|W_\phi)(1+C_\beta)}{\gamma} \le \sqrt{\frac{2\sigma_\beta I(W_c^t;D_t|W_\phi)}{m}},
\end{align}
which implies that
\begin{equation}\label{Eq: MI upper bound transfer2}
    I(W_c^t;D_t|W_\phi)\leq \frac{2\sigma_\beta^2\gamma^2}{(1+C_\beta)^2m}.
\end{equation}
Combining \eqref{Eq: MI upper bound transfer2} with \eqref{Eq: Transfer const 2} completes the proof.
\end{proof}

We could provide distribution-free upper bounds under sub-Exponential and sub-Gamma assumption using  similar approach as in Theorem~\ref{Theorem:  distribution-free upper bound weighted} and Theorem~\ref{Theorem:  distribution-free upper bound two-stage} for $\alpha$-weighted Gibbs algorithm and two-stage Gibbs algorithm, respectively.

\textbf{sub-Exponential:}  For $\alpha$-weighted Gibbs algorithm, we assume that the loss function is $(\sigma_{\alpha,e}^2,b_\alpha)$-sub-Exponential under distribution $P_Z^t \otimes P_{W_\alpha}$. And for two-stage Gibbs algorithm, we assume that the loss function is $(\sigma_{\beta,e}^2,b_\beta)$-sub-Exponential under distribution $P_Z^t\otimes P_{W_c^t|W_\phi=w_\phi}$ for all $w_\phi \in \mathcal{W}_\phi$. We provide the results in Table~\ref{Table:tail Bound Comparison}. Denote $B_\alpha\triangleq \ceil{\frac{\gamma\alpha b_\alpha}{1+C_\alpha}}$, $B_\beta\triangleq\ceil{\frac{\gamma b_\beta}{1+C_\beta}}$, $I_\alpha\triangleq \frac{2b_\alpha I(W_\alpha;D_t|D_s)}{\sigma_{\alpha,e}^2}$ and $I_\beta\triangleq \frac{2b_\beta I(W_c^t;D_t|W_\phi)}{\sigma_{\beta,e}^2}$ in Table~\ref{Table:tail Bound Comparison}.

\textbf{sub-Gamma:} For $\alpha$-weighted Gibbs algorithm, we assume that the loss function is $\Gamma(\sigma_{\alpha,s}^2,c_{\alpha,s})$-sub-Gamma under distribution $P_Z^t \otimes P_{W_\alpha}$ and $m > \frac{\gamma\alpha c_{\alpha,s}}{(1+C_\alpha)}$. For two-stage Gibbs algorithm, we assume that the loss function is $\Gamma(\sigma_{\beta,s}^2,c_{\beta,s})$-sub-Gamma under distribution $P_Z^t\otimes P_{W_c^t|W_\phi=w_\phi}$ for all $w_\phi \in \mathcal{W}_\phi$ and $m > \frac{\gamma c_{\beta,s}}{(1+C_\beta)}$. We provide the results in Table~\ref{Table:tail Bound Comparison}.

\begin{table*}[ht!]
  \caption{Distribution-free Upper Bounds under different Tail Distributions.}
  \label{Table:tail Bound Comparison}
  \centering
	\begin{tabular}{ccc}
	\\
    	\toprule
    	 & \textbf{sub-Exponential} & \textbf{sub-Gamma} \\
    	\midrule

    	\makecell{$\alpha$-weighted\\ Gibbs Algorithm }
    	& $
    	\begin{cases}
         \frac{2\sigma_{\alpha,e}^2 \gamma\alpha}{m (1+C_\alpha)}, &\textit{if  } m\geq I_\alpha; \\
          \frac{\sigma_{\alpha,e}^2}{2b_\alpha}\Big(\frac{\gamma\alpha b_\alpha }{(m(1+C_\alpha)-\gamma\alpha b_\alpha) }+1\Big), & \textit{if  } B_\alpha <m < I_\alpha
       \end{cases}$
    	& $\frac{2\sigma_{\alpha,s}^2 \gamma\alpha}{(1+C_\alpha)m-\gamma\alpha c_{\alpha,s}}
    \Big(1+\frac{\gamma\alpha c_{\alpha,s}}{(1+C_\alpha)m-\gamma\alpha c_{\alpha,s}}\Big)$ \\
    	\midrule

    	\makecell{Two-stage \\ Gibbs Algorithm}
    	&$
        \begin{cases}
              \frac{2\sigma_{\beta,e}^2 \gamma}{m (1+C_\beta)}, &\textit{if  } m\geq I_\beta; \\
                \frac{\sigma_{\beta,e}^2}{2b_\beta}\Big(\frac{\gamma b_\beta }{(m(1+C_E)-\gamma b_\beta) }+1\Big), & \textit{if  } B_\beta <m < I_\beta
       \end{cases}$
    	& $\frac{2\sigma_{\beta,s}^2 \gamma}{(1+C_\beta)m-\gamma c_{\beta,s}}
    \Big(1+\frac{\gamma c_{\beta,s}}{(1+C_\beta)m-\gamma c_{\beta,s}}\Big)$  \\

    	\bottomrule
  \end{tabular}
\end{table*}
\normalsize

\section{Exact Characterization of Generalization Error Based on Symmetrized KL divergence}\label{App: new rep skl divergence}

We first present the following Lemma to prove the results related to symmetrized KL divergence.
\begin{lemma}\label{Lemma: exact SKL two Gibbs}
 Denote the $(\gamma,\pi(w),L_E(w,d_t))$-Gibbs algorithm as $P_{W|D_t}^{\gamma}$ and the $(\gamma,\pi(w),L_P(w,P_{D_t}))$-Gibbs algorithm as $P_{W}^{\gamma,L_{P_{D_t}}}$. Then, the following equality holds for these two Gibbs distributions with the same inverse temperature and prior distribution
\begin{align}
\mathbb{E}_{\Delta(P_{W|D_t=d_t}^{\gamma},P_{W}^{\gamma,L_{P_{D_t}}})}[L_P(W,P_{D_t})-L_E(W,d_t)]=\frac{D_{\mathrm{SKL}}(P_{W|D_t=d_t}^{\gamma}\|P_{W}^{\gamma,L_{P_{D_t}}})}{\gamma},
\end{align}
where $\mathbb{E}_{\Delta(P_{W|D_t=d_t}^{\gamma},P_{W}^{\gamma,L_{P_{D_t}}})}[f(W)]=\mathbb{E}_{P_{W|D_t=d_t}^{\gamma}}[f(W)]-\mathbb{E}_{P_{W}^{\gamma,L_{P_{D_t}}}}[f(W)]$.
\end{lemma}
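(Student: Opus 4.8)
The plan is to exploit that both distributions are Gibbs distributions with the \emph{same} inverse temperature $\gamma$ and the \emph{same} prior $\pi(w)$, differing only in their energy functions, namely $L_E(w,d_t)$ for $P_{W|D_t=d_t}^{\gamma}$ and $L_P(w,P_{D_t})$ for $P_{W}^{\gamma,L_{P_{D_t}}}$. The crucial structural fact is that the log-ratio of two such Gibbs densities is affine in the energy difference, its only nonlinear ingredient being a $w$-independent constant built from the two log-partition functions.

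First I would write the densities explicitly, with partition functions $V(d_t,\gamma)=\int \pi(w)e^{-\gamma L_E(w,d_t)}dw$ and $\tilde V(\gamma)=\int \pi(w)e^{-\gamma L_P(w,P_{D_t})}dw$, and take the logarithm of their ratio:
\[
\log\frac{P_{W|D_t=d_t}^{\gamma}(w)}{P_{W}^{\gamma,L_{P_{D_t}}}(w)} = \gamma\big(L_P(w,P_{D_t})-L_E(w,d_t)\big) + \log\tilde V(\gamma)-\log V(d_t,\gamma),
\]
where the common factor $\pi(w)$ cancels. Next I would expand the symmetrized divergence as $D_{\mathrm{SKL}}(P\|Q)=\mathbb{E}_{P}[\log(P/Q)]-\mathbb{E}_{Q}[\log(P/Q)]$ with $P=P_{W|D_t=d_t}^{\gamma}$ and $Q=P_{W}^{\gamma,L_{P_{D_t}}}$, and substitute the log-ratio above. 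The constant $\log\tilde V(\gamma)-\log V(d_t,\gamma)$ enters both expectations identically and hence cancels in the difference, leaving exactly $\gamma$ times $\mathbb{E}_{P}[L_P(W,P_{D_t})-L_E(W,d_t)]-\mathbb{E}_{Q}[L_P(W,P_{D_t})-L_E(W,d_t)]$. By the definition of the $\Delta$-expectation this equals $\gamma\,\mathbb{E}_{\Delta(P,Q)}[L_P(W,P_{D_t})-L_E(W,d_t)]$; dividing by $\gamma$ gives the stated identity.

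There is no genuine analytic obstacle: the entire argument is a one-line computation once the log-ratio is written down. The single point requiring care --- and the reason the \emph{symmetrization} is indispensable --- is the cancellation of the log-partition functions. Each of these constants is intractable on its own, but because they appear with opposite signs in $D(P\|Q)$ and $D(Q\|P)$, the symmetrized divergence annihilates them, yielding a quantity expressed purely through expectations of the loss difference $L_P-L_E$. This is precisely the mechanism that powers the symmetrized-KL characterizations in Theorems~\ref{Theorem: Gibbs alpha Transfer result} and~\ref{Theorem: Two-stage-ERM approach}, so the lemma can be viewed as the pointwise (conditioned-on-$d_t$) analogue that will underlie the asymptotic results.
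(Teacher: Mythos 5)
Your proposal is correct and follows essentially the same route as the paper's own proof: both write $D_{\mathrm{SKL}}$ as the integral of the density difference against the log-ratio, note that the prior and the log-partition constants drop out (the paper does this implicitly via $\int(P-Q)\,dw=0$, you make it explicit), and read off $\gamma$ times the $\Delta$-expectation of $L_P-L_E$. No gaps.
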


\begin{proof}
\begin{align}
    D_{\mathrm{SKL}}(P_{W|D_t=d_t}^{\gamma}\|P_{W}^{\gamma,L_{P_{D_t}}})&=\int_\mathcal{W} (P_{W|D_t=d_t}^{\gamma}-P_{W}^{\gamma,L_{P_{D_t}}}) \log\left(\frac{P_{W|D_t=d_t}^{\gamma}}{P_{W}^{\gamma,L_{P_{D_t}}}}\right)dw \nn \\
    &= \int_\mathcal{W} (P_{W|D_t=d_t}^{\gamma}-P_{W}^{\gamma,L_{P_{D_t}}}) \log(e^{-\gamma (L_E(w,d_t)-L_P(w,P_{D_t}))})dw\\
    &=\gamma \mathbb{E}_{\Delta(P_{W|D_t=d_t}^{\gamma},P_{W}^{\gamma,L_{P_{D_t}}})}[L_P(W,P_{D_t})-L_E(W,d_t)]. \nn \qedhere
\end{align}
\end{proof}






Using Lemma~\ref{Lemma: exact SKL two Gibbs}, we provide different characterizations of $\alpha$-weighted Gibbs algorithm and two-stage Gibbs algorithm using symmetrized KL divergence.
\subsection{$\alpha$-weighted Gibbs Algorithm}\label{app: second representation alpha weighted}
\begin{reptheorem}{Theorem: rep in skl divergence transfer}\textbf{(restated)}The expected transfer generalization error of the $\alpha$-weighted Gibbs algorithm in~\eqref{equ:Gibbs-alpha-weighted} is given by:
\begin{align}
    & \overline{\text{gen}}_{\alpha}(P_{D_s},P_{D_t}) = \frac{D_{\mathrm{SKL}}(P_{W_\alpha|D_s,D_t}^{\gamma}\|P_{W_\alpha|D_s}^{\gamma,L_\alpha(w_\alpha,d_s,P_{D_t})}|P_{D_s} P_{D_t})}{\gamma \alpha},
\end{align}
where $P_{W_\alpha|D_s}^{\gamma,L_\alpha(w_\alpha,d_s,P_{D_t})}$ is the $(\gamma,\pi(w_\alpha),L_\alpha(w_\alpha,d_s,P_{D_t}))$-Gibbs algorithm with  $L_\alpha(w,d_s,P_{D_t})\triangleq \alpha L_P(w_\alpha,P_{D_t})+(1-\alpha)L_E(w_\alpha,d_s)$.
\end{reptheorem}
\begin{proof}
Applying Lemma~\ref{Lemma: exact SKL two Gibbs} to the $\alpha$-weighted Gibbs algorithm and $(\gamma,\pi(w_\alpha),L_\alpha(w,d_s,P_{D_t}))$-Gibbs algorithm gives
\begin{align}
    &\frac{D_{\mathrm{SKL}}(P_{W_\alpha|D_s=d_s,D_t=d_t}^{\gamma}\|P_{W_\alpha|D_s=d_s}^{\gamma,L_\alpha(w_\alpha,d_s,P_{D_t})})}{\gamma}\\\nn&
    =\mathbb{E}_{\Delta\big(P_{W_\alpha|D_s=d_s,D_t=d_t}^{\gamma},P_{W_\alpha|D_s=d_s}^{\gamma,L_\alpha(w_\alpha,d_s,P_{D_t})}\big)}\left[L_\alpha(W_\alpha,d_s,P_{D_t})-L_E(W_\alpha,d_s,d_t)\right]\\\nn&
    =\alpha \mathbb{E}_{\Delta\big(P_{W_\alpha|D_s=d_s,D_t=d_t}^{\gamma},P_{W_\alpha|D_s=d_s}^{\gamma,L_\alpha(w_\alpha,d_s,P_{D_t})}\big)}\left[L_P(W_\alpha,P_{D_t})-L_E(W_\alpha,d_t)\right].
\end{align}
Notice the fact that
\begin{align*}
    \mathbb{E}_{P_{W_\alpha|D_s=d_s}^{\gamma,L_\alpha(w_\alpha,d_s,P_{D_t})}}[L_P(W_\alpha,P_{D_t})]=\mathbb{E}_{P_{D_t}}\big[\mathbb{E}_{P_{W_\alpha|D_s=d_s}^{\gamma,L_\alpha(w_\alpha,d_s,P_{D_t})}}[L_E(W_\alpha,D_t)]\big],
\end{align*}
and taking expectation over $D_s$ and $D_t$, we have
\begin{align*}
    D_{\mathrm{SKL}}(P_{W_\alpha|D_s,D_t}^{\gamma}\|P_{W_\alpha|D_s}^{\gamma,L_\alpha(w_\alpha,d_s,P_{D_t})}|P_{D_s} P_{D_t})&= \mathbb{E}_{P_{D_s}P_{D_t}} [D_{\mathrm{SKL}}(P_{W_\alpha|d_s,d_t}^{\gamma}\|P_{W_\alpha|d_s}^{\gamma,L_\alpha(w_\alpha,d_s,P_{D_t})})], \\
    & = \gamma \alpha \overline{\text{gen}}_{\alpha}(P_{D_s},P_{D_t}). \qedhere
\end{align*}
\end{proof}

In the following, we provide an explanation for the existence of two different characterizations of the  expected transfer generalization error, i.e., Theorem~\ref{Theorem: rep in skl divergence transfer} and Theorem~\ref{Theorem: Gibbs alpha Transfer result}.

For an arbitrary conditional distribution on hypothesis space $Q_{W_\alpha|D_s}$, we can write
\begin{align}
    &I(W_\alpha;D_t|D_s)=D(P_{W_\alpha,D_t|D_s}\|Q_{W_\alpha|D_s}\otimes P_{D_t}|P_{D_s})-D(P_{W_\alpha|D_s}\|Q_{W_\alpha|D_s}|P_{D_s}),\\
    &L(W_\alpha;D_t|D_s)=\mathbb{E}_{P_{D_s}}\big[\mathbb{E}_{P_{D_t}\otimes P_{W_\alpha|D_s}}[\log(Q_{W_\alpha|D_s}/P_{W_\alpha|D_t,D_s})]\big]+D(P_{W_\alpha|D_s}\|Q_{W_\alpha|D_s}|P_{D_s}).
\end{align}
Thus, the  symmetrized KL information can be written as
\begin{align}\label{eq: rep 1}
    I_{\mathrm{SKL}}(W_\alpha;D_t|D_s)&=I(W_\alpha;D_t|D_s)+L(W_\alpha;D_t|D_s)\nn\\
    &= D(P_{W_\alpha,D_t|D_s}\|Q_{W_\alpha|D_s}\otimes P_{D_t}|P_{D_s})+ \mathbb{E}_{P_{D_s}}\big[\mathbb{E}_{P_{D_t}\otimes P_{W_\alpha|D_s}}[\log(Q_{W_\alpha|D_s}/P_{W_\alpha|D_t,D_s})]\big],
\end{align}
which holds for all $Q_{W_\alpha|D_s}$. We compare this expression with the following representation:
\begin{align}
    D(P_{W_\alpha,D_t|D_s}\|Q_{W_\alpha|D_s}\otimes P_{D_t}|P_{D_s}) + D(Q_{W_\alpha|D_s} \otimes P_{D_t} \| P_{W_\alpha,D_t|D_s}|P_{D_s}).
\end{align}
The difference between these two expressions is as follows:
\begin{align}
    &I_{\mathrm{SKL}}(W_\alpha;D_t|D_s)-\left(D(P_{W_\alpha,D_t|D_s}\|Q_{W_\alpha|D_s}\otimes P_{D_t}|P_{D_s}) + D(Q_{W_\alpha|D_s} \otimes P_{D_t} \| P_{W_\alpha,D_t|D_s}|P_{D_s})\right) \nn \\
    &=\mathbb{E}_{P_{D_s}}\big[\mathbb{E}_{P_{D_t}\otimes P_{W_\alpha|D_s}}[\log(Q_{W_\alpha|D_s}/P_{W|D_t,D_s})]\big]-D(Q_{W_\alpha|D_s} \otimes P_{D_t} \| P_{W_\alpha,D_t|D_s}|P_{D_s}) \nn \\
    &=\mathbb{E}_{P_{D_s}}\big[\mathbb{E}_{P_{D_t}\otimes P_{W_\alpha|D_s}}[\log(Q_{W_\alpha|D_s}/P_{W_\alpha|D_t,D_s})]-\mathbb{E}_{P_{D_t}\otimes Q_{W_\alpha|D_s}}[\log(Q_{W_\alpha|D_s}/P_{W_\alpha|D_t,D_s})]\big]\nn \\
    &=\mathbb{E}_{P_{D_s}}\big[\mathbb{E}_{\Delta(P_{W_\alpha|D_s},Q_{W_\alpha|D_s})}[\mathbb{E}_{P_{D_t}}[\log(Q_{W_\alpha|D_s}/P_{W_\alpha|D_t,D_s})]]\big].
\end{align}
Thus, if  $Q_{W_\alpha|D_s}$ satisfies the following condition
\begin{equation}\label{Eq: rep main condition}
 \mathbb{E}_{\Delta(P_{W_\alpha|D_s}-Q_{W_\alpha|D_s})}[\mathbb{E}_{P_{D_t}}[\log(Q_{W_\alpha|D_s}/P_{W_\alpha|D_t,D_s})]]=0,
\end{equation}
then we have
\begin{align}
    I_{\mathrm{SKL}}(W_\alpha;D_t)=D(P_{W_\alpha,D_t|D_s}\|Q_{W_\alpha|D_s}\otimes P_{D_t}|P_{D_s}) + D(Q_{W_\alpha|D_s} \otimes P_{D_t} \| P_{W_\alpha,D_t|D_s}|P_{D_s}).
\end{align}
Now, if we set $(\gamma,\pi(w),L_E(w,d_s,d_t))$-Gibbs algorithm as $P_{W_\alpha|D_t,D_s}$, then it can be verified that using $(\gamma,\pi(w),L_\alpha(w_\alpha,d_s,P_{D_t}))$-Gibbs algorithm as $Q_{W_\alpha|D_s}$ would satisfy the condition in \eqref{Eq: rep main condition}. Thus, we can represent the expected transfer generalization error using both symmetrized KL information and divergence.

\subsection{Two-stage Gibbs Algorithm}
\begin{reptheorem}{Theorem: rep in skl divergence transfer two-stage}\textbf{(restated)}
The expected transfer generalization error of the two-stage Gibbs algorithm in~\eqref{Eq: two-stage Gibbs algorithm} is given by:
\begin{align*}
   &\overline{\text{gen}}_{\beta}(P_{D_s},P_{D_t})=\quad \frac{D_{\mathrm{SKL}}(P^\gamma_{W_c^t|D_t,W_\phi}\| P_{W_c^t|W_\phi}^{\gamma,L_P(w_\phi,w_c^t,P_{D_t})}|P_{D_t}P_{W_\phi})}{\gamma},
\end{align*}
where $P_{W_c^t|W_\phi}^{\gamma,L_P(w_\phi,w_c^t,P_{D_t})}$ is the $(\gamma,\pi(w_c^t),L_P(w_\phi,w_c^t,P_{D_t}))$-Gibbs algorithm.
\end{reptheorem}
\begin{proof}
Applying Lemma~\ref{Lemma: exact SKL two Gibbs} to the two-stage Gibbs algorithm and $(\gamma,\pi(w_c^t),L_P(w_\phi,w_c^t,P_{D_t}))$-Gibbs algorithm, we have
\begin{align}
    &\frac{D_{\mathrm{SKL}}(P^\gamma_{W_c^t|D_t=d_t,W_\phi=w_\phi}\| P_{W_c^t|W_\phi=w_\phi}^{\gamma,L_P(w_\phi,W_c^t,P_{D_t})})}{\gamma}\\\nn&
    = \mathbb{E}_{\Delta\left(P^\gamma_{W_c^t|D_t=d_t,W_\phi=w_\phi}\!,P_{W_c^t|W_\phi=w_\phi}^{\gamma,L_P(w_\phi,w_c^t,P_{D_t})}\right)}\left[L_P( W_c^t,w_\phi, P_{D_t})-L_E(W_c^t,w_\alpha,d_t)\right].
\end{align}
Notice the fact that
\begin{align*}
    \mathbb{E}_{P_{W_c^t|W_\phi=w_\phi}^{\gamma,L_P(w_\phi,w_c^t,P_{D_t})}}[L_P( W_c^t,w_\phi, P_{D_t})]=\mathbb{E}_{P_{D_t}}\big[\mathbb{E}_{P_{W_c^t|W_\phi=w_\phi}^{\gamma,L_P(w_\phi,w_c^t,P_{D_t})}}[L_E(W_c^t,w_\phi,d_t)]\big],
\end{align*}
and taking expectation over $W_\phi$ and $D_t$ completes the proof.
\end{proof}

\section{Asymptotic Behavior of Generalization Error for Gibbs Algorithm}

\subsection{Generalization Error}\label{app:gen_limit}
\begin{repproposition}{Prop: large gamma alpha}(\textbf{restated})
If the Hessian matrices $H^*(D_s,D_t) = H^*(D_s) = H^*$ are independent of $D_s$ and $D_t$, then the generalization error of the $\alpha$-weighted-ERM algorithm is
\begin{equation*}
     \overline{\text{gen}}_{\alpha}(P_{D_t}, P_{D_s}) =\frac{\mathbb{E}_{P_{D_s\!,D_t}}[\|\hat{W}_\alpha(D_s,D_t)-\hat{W}_\alpha(D_s)\|^2_{H^*}]}{\alpha},
\end{equation*}
where the notation $\|W\|_H^2 \triangleq W^\top H W$.
\end{repproposition}
\begin{proof}
It is shown in \citep{hwang1980laplace} that if the following Hessian matrices
\begin{align}
H^*(D_s,D_t) &\triangleq \nabla^2_w L_E(w,D_s,D_t)\big|_{w = \hat{W}_\alpha(D_s,D_t)},\\
H^*(D_s) &\triangleq \nabla^2_w L_\alpha(w,D_s,P_{D_t})\big|_{w = \hat{W}_\alpha(D_s)}
\end{align}
are not singular, then, as $\gamma \to \infty$
\begin{align}
    P_{W_\alpha|D_s,D_t}^\gamma &\to \mathcal{N}(\hat{W}_\alpha(D_s,D_t), \frac{1}{\gamma}H^*(D_s,D_t)^{-1}), \nn \\
    \text{and}\quad P_{W_\alpha|D_s}^{\gamma,L_\alpha} &\to \mathcal{N}(\hat{W}_\alpha(D_s), \frac{1}{\gamma}H^*(D_s)^{-1})
\end{align}
in distribution, and we use   $P_{W_\alpha|D_s}^{\gamma,L_\alpha}$ to denote $P_{W_\alpha|D_s}^{\gamma,L_\alpha(w_\alpha,d_s,P_{D_t})}$.

Thus, the conditional symmetrized KL divergence in Theorem~\ref{Theorem: rep in skl divergence transfer} can be evaluated directly using Gaussian approximations under the assumption that $H^*(D_s,D_t) = H^*(D_s) = H^*$,
\begin{align}
    & D_{\mathrm{SKL}}(P_{W_\alpha|D_s,D_t}^{\gamma}\|P_{W_\alpha|D_s}^{\gamma,L_\alpha}|P_{D_s} P_{D_t}) \nn \\
    & = \mathbb{E}_{P_{D_t,D_s}}\Big[\mathbb{E}_{P_{W_\alpha|D_s,D_t}^{\gamma}}\big[\log \frac{P_{W_\alpha|D_s,D_t}^{\gamma}}{ P_{W_\alpha|D_s}^{\gamma,L_\alpha}}\big] -  \mathbb{E}_{ P_{W_\alpha|D_s}^{\gamma,L_\alpha}}\big[\log \frac{P_{W_\alpha|D_s,D_t}^{\gamma}}{ P_{W_\alpha|D_s}^{\gamma,L_\alpha}}\big] \Big]\nn \\
    & = \mathbb{E}_{P_{D_t,D_s}}\Big[\mathbb{E}_{\Delta(P_{W_\alpha|D_s,D_t}^{\gamma},  P_{W_\alpha|D_s}^{\gamma,L_\alpha})} \big[ -\frac{\gamma}{2}(W_\alpha - \hat{W}_\alpha(D_s,D_t))^\top H^* (W_\alpha - \hat{W}_\alpha(D_s,D_t)) \nn\\ &\qquad\qquad+\frac{\gamma}{2}(W_\alpha - \hat{W}_\alpha(D_s))^\top H^* (W_\alpha - \hat{W}_\alpha(D_s))\big]\Big] \nn\\
    & = \gamma \mathbb{E}_{P_{D_t,D_s}}\Big[\mathbb{E}_{\Delta(P_{W_\alpha|D_s,D_t}^{\gamma},  P_{W_\alpha|D_s}^{\gamma,L_\alpha})} \big[  W_\alpha ^\top H^*  \hat{W}_\alpha(D_s,D_t)  - W_\alpha ^\top H^*  \hat{W}_\alpha(D_s) \big] \Big]\nn \\
    & = \gamma \mathbb{E}_{P_{D_t,D_s}} \big[  \hat{W}_\alpha(D_s,D_t) ^\top H^*  \hat{W}_\alpha(D_s,D_t)  - \hat{W}_\alpha(D_s,D_t) ^\top H^*  \hat{W}_\alpha(D_s)\nn \\
    &  \qquad \qquad
    - \hat{W}_\alpha(D_s) ^\top H^*  \hat{W}_\alpha(D_s,D_t)  - \hat{W}_\alpha(D_s) ^\top H^*  \hat{W}_\alpha(D_s) \big] \nn \\
    & = \gamma \mathbb{E}_{P_{D_t,D_s}} \big[  (\hat{W}_\alpha(D_s,D_t)-\hat{W}_\alpha(D_s)) ^\top H^*  (\hat{W}_\alpha(D_s,D_t)-\hat{W}_\alpha(D_s)) \big].
\end{align}
Thus,
\begin{equation*}
    \overline{\text{gen}}_{\alpha}(P_{D_s},P_{D_t}) = \frac{D_{\mathrm{SKL}}(P_{W_\alpha|D_s,D_t}^{\gamma}\| P_{W_\alpha|D_s}^{\gamma,L_\alpha}|P_{D_s} P_{D_t})}{\gamma \alpha} = \frac{\mathbb{E}_{P_{D_s\!,D_t}}[\|\hat{W}_\alpha(D_s,D_t)-\hat{W}_\alpha(D_s)\|^2_{H^*}]}{\alpha}. \qedhere
\end{equation*}
\end{proof}

\begin{repproposition}{Prop: large gamma beta}(\textbf{restated})
If Hessian matrices $H_c^*(D_t, W_{\phi}) = H_c^*(W_{\phi}) = H_c^*$ are independent of $D_s, D_t$, then the generalization error of the two-stage-ERM algorithm is
\begin{align*}
     \overline{\text{gen}}_{\beta}(P_{D_t},P_{D_s})  =\mathbb{E}_{D_s,D_t,W_\phi}[\|\hat{W}^{t}_{c}(D_t, W_{\phi})-\hat{W}^{t}_{c}(W_{\phi})\|^2_{H_c^*}].
\end{align*}
\end{repproposition}
\begin{proof}
It is shown in \citep{hwang1980laplace} that if the following Hessian matrices
\begin{align}
H_c^*(D_t,\!W_{\phi}) &\triangleq \nabla^2_{w_c}\! L_E^{S2}(W_{\phi},\!w_c,\!D_t)\big|_{w_c = \hat{W}^{t}_{c}(\!D_t,\! W_{\phi}\!)}\\
H_c^*(W_{\phi}) &\triangleq \nabla^2_{w_c}\! L_P(W_{\phi},w_c,P_{D_t})\big|_{w_c = \hat{W}^{t}_{c}( W_{\phi})}
\end{align}
are not singular, then, as $\gamma \to \infty$
\begin{align}
    P^\gamma_{W_c^t|D_t,W_\phi} &\to \mathcal{N}(\hat{W}^{t}_{c}(D_t, W_{\phi}), \frac{1}{\gamma}H_c^*(D_t,\!W_{\phi})^{-1}), \nn \\
    \quad P_{W_c^t|W_\phi}^{\gamma,L_P} &\to \mathcal{N}(\hat{W}^{t}_{c}( W_{\phi}), \frac{1}{\gamma}H^*(D_s)^{-1}),
\end{align}
where we use $P_{W_c^t|W_\phi}^{\gamma,L_P}$ to denote $P_{W_c^t|W_\phi}^{\gamma,L_P(w_\phi,w_c^t,P_{D_t})}$. Thus, the conditional symmetrized KL divergence in Theorem~\ref{Theorem: rep in skl divergence transfer two-stage} can be evaluated directly using Gaussian approximations under the assumption that $H_c^*(D_t, W_{\phi}) = H_c^*(W_{\phi}) = H_c^*$.
\begin{align}
    & D_{\mathrm{SKL}}(P^\gamma_{W_c^t|D_t,W_\phi}\| P_{W_c^t|W_\phi}^{\gamma,L_P}|P_{D_t}P_{W_\phi}) \nn \\
    & = \mathbb{E}_{P_{D_t,W_\phi}}\Big[\mathbb{E}_{P^\gamma_{W_c^t|D_t,W_\phi}}\big[\log \frac{P^\gamma_{W_c^t|D_t,W_\phi}}{P_{W_c^t|W_\phi}^{\gamma,L_P}}\big] -  \mathbb{E}_{P_{W_c^t|W_\phi}^{\gamma,L_P}}\big[\log \frac{P^\gamma_{W_c^t|D_t,W_\phi}}{P_{W_c^t|W_\phi}^{\gamma,L_P}}\big] \Big]\nn \\
    & = \mathbb{E}_{P_{D_t,W_\phi}}\Big[\mathbb{E}_{\Delta(P^\gamma_{W_c^t|D_t,W_\phi}, P_{W_c^t|W_\phi}^{\gamma,L_P})} \big[ -\frac{\gamma}{2}(W_c^t - \hat{W}^{t}_{c}(D_t, W_{\phi}))^\top H_c^* (W_c^t - \hat{W}^{t}_{c}(D_t, W_{\phi})) \nn\\ &\qquad\qquad+\frac{\gamma}{2}(W_c^t - \hat{W}^{t}_{c}( W_{\phi}))^\top H_c^* (W_c^t - \hat{W}^{t}_{c}( W_{\phi}))\big]\Big] \nn\\
    & = \gamma \mathbb{E}_{P_{D_t,W_\phi}}\Big[\mathbb{E}_{\Delta(P^\gamma_{W_c^t|D_t,W_\phi}, P_{W_c^t|W_\phi}^{\gamma,L_P})} \big[  (W_c^t) ^\top H_c^*  \hat{W}^{t}_{c}(D_t, W_{\phi})  - (W_c^t)^\top H_c^*  \hat{W}^{t}_{c}( W_{\phi}) \big] \Big]\nn \\
    & = \gamma \mathbb{E}_{P_{D_t,W_\phi}} \big[  \hat{W}^{t}_{c}(D_t, W_{\phi}) ^\top H_c^*  \hat{W}^{t}_{c}(D_t, W_{\phi})  - \hat{W}^{t}_{c}(D_t, W_{\phi}) ^\top H_c^*  \hat{W}^{t}_{c}( W_{\phi})\nn \\
    &  \qquad \qquad
    - \hat{W}^{t}_{c}( W_{\phi}) ^\top H_c^*  \hat{W}^{t}_{c}(D_t, W_{\phi})  - \hat{W}^{t}_{c}( W_{\phi}) ^\top H_c^*  \hat{W}^{t}_{c}( W_{\phi}) \big] \nn \\
    & = \gamma \mathbb{E}_{P_{D_t,W_\phi}} \big[  (\hat{W}^{t}_{c}(D_t, W_{\phi})-\hat{W}^{t}_{c}( W_{\phi})) ^\top H_c^*  (\hat{W}^{t}_{c}(D_t, W_{\phi})-\hat{W}^{t}_{c}( W_{\phi})) \big].
\end{align}
Thus,
\begin{equation*}
    \overline{\text{gen}}_{\beta}(P_{D_t},P_{D_s}) = \frac{D_{\mathrm{SKL}}(P^\gamma_{W_c^t|D_t,W_\phi}\| P_{W_c^t|W_\phi}^{\gamma,L_P}|P_{D_t}P_{W_\phi}) }{\gamma} =\mathbb{E}_{D_t,W_\phi}[\|\hat{W}^{t}_{c}(D_t, W_{\phi})-\hat{W}^{t}_{c}(W_{\phi})\|^2_{H_c^*}]. \qedhere
\end{equation*}
\end{proof}

\subsection{Regularity Conditions for MLE}\label{app:MLE}

In this section, we present the regularity conditions required by the asymptotic normality \citep{van2000asymptotic} of maximum likelihood estimates.

\begin{assumption}\label{assump:MLE}
\textbf{Regularity Conditions for MLE:}
\begin{enumerate}
  \item $f(z|\vw) \ne f(z|\vw')$ for $\vw \ne \vw'$.
  \item $\mathcal{W}$ is an open subset of $\mathbb{R}^d$.
  \item The function $\log f(z|\vw)$ is three times continuously differentiable with
respect to $\vw$.
  \item There exist functions $F_1(z): \mathcal{Z} \to \mathbb{R}$, $F_2(z): \mathcal{Z} \to \mathbb{R}$ and $M(z): \mathcal{Z} \to \mathbb{R}$, such that
\begin{equation*}
  \mathbb{E}_{Z\sim f(z|\vw)}[M(Z)] <\infty,
\end{equation*}
and the following inequalities hold for any $\vw \in \mathcal{W}$,
\begin{align*}
  \left|\frac{\partial \log f(z|\vw)}{\partial w_i} \right|<F_1(z), &\qquad  \left|\frac{\partial^2 \log f(z|\vw)}{\partial w_i \partial w_j} \right|<F_1(z), \\
  \left|\frac{\partial^3 \log f(z|\vw)}{\partial w_i \partial w_j \partial w_k} \right|<M(z), &\qquad i,j,k =1,2,\cdots,d.
\end{align*}
  \item The following inequality holds for an arbitrary $\vw \in \mathcal{W}$,
\begin{equation*}
  0< \mathbb{E}_{Z\sim f(z|\vw)}\left[\frac{\partial \log f(z|\vw)}{\partial w_i}\frac{\partial \log f(z|\vw)}{\partial w_j}\right] <\infty, \quad i,j=1,2,\cdots,d.
\end{equation*}
\end{enumerate}
\end{assumption}

\subsection{Generalization error in MLE} \label{app:MLE_limit}
\textbf{$\alpha$-weighted ERM:}
We use the following notations to denote the expectation of the Hessian matrices and the Fisher information matrices,
\begin{align*}
J_s(\vw_\alpha) \triangleq \E_{P_Z^s} \big[ -\nabla_{\vw_\alpha}^2 \log f(Z|\vw_\alpha) \big],&\quad J_t(\vw_\alpha) \triangleq \E_{P_Z^t} \big[ -\nabla_{\vw_\alpha}^2 \log f(Z|\vw_\alpha) \big],\\
    \mathcal{I}_s(\vw_\alpha) \triangleq \E_{P_Z^s} \big[ \nabla_{\vw_\alpha} \log f(Z|\vw_\alpha) \nabla_{\vw_\alpha} \log f(Z|\vw_\alpha) ^\top\big], &\quad \mathcal{I}_t(\vw_\alpha) \triangleq \E_{P_Z^t} \big[ \nabla_{\vw_\alpha} \log f(Z|\vw_\alpha) \nabla_{\vw_\alpha} \log f(Z|\vw_\alpha) ^\top\big], \\ \bar{J}(\vw_\alpha) = \frac{n}{m+n}J_s(\vw_\alpha)+\frac{m}{m+n}J_t(\vw_\alpha),&\quad
    \bar{\mathcal{I}}(\vw_\alpha) = \frac{n}{m+n}\mathcal{I}_s(\vw_\alpha)+\frac{m}{m+n}\mathcal{I}_t(\vw_\alpha).
\end{align*}

\begin{lemma}\label{Lemma: ge for alpha} Under Assumption~\ref{assump:MLE}, for any fixed source samples $d_s$, if we let $m\to \infty$, then the $\alpha$-weighted ERM satisfies
 \begin{equation}
     \sqrt{m}\big(\hat{W}_\alpha(d_s,D_t)-\hat{W}_\alpha(d_s)\big) \to \mathcal{N}\big(0,  \alpha^2\widetilde{J}(\hat{W}_\alpha(d_s))^{-1}\mathcal{I}_t(\hat{W}_\alpha(d_s)) \widetilde{J}(\hat{W}_\alpha(d_s))^{-1}\big),
 \end{equation}
 where $\widetilde{J}(\hat{W}_\alpha(d_s)) \triangleq \alpha J_t(\hat{W}_\alpha(d_s)) +(1-\alpha) \nabla^2_w L_E(w,d_s)\big|_{w = \hat{W}_\alpha(d_s)}$, and $\mathcal{I}_t(\hat{W}_\alpha(d_s))$ is the covariance matrix of $\nabla_w \log f(Z^t|\hat{W}_\alpha(d_s))$.
\end{lemma}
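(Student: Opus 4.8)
The plan is to recognize the statement as the classical asymptotic normality of an M-estimator, in which the source samples $d_s$ are held fixed (hence deterministic) and all randomness enters through the i.i.d.\ target samples $D_t=\{Z_j^t\}_{j=1}^m$. Writing $\theta_0 \triangleq \hat{W}_\alpha(d_s)$ and $\hat\theta_m \triangleq \hat{W}_\alpha(d_s,D_t)$, I would introduce the criterion function $m_w(z) \triangleq \alpha\, \ell(w,z) + (1-\alpha) L_E(w,d_s)$ (with $\ell(w,z)=-\log f(z|w)$), so that the empirical objective is $M_m(w) = \frac{1}{m}\sum_{j=1}^m m_w(Z_j^t) = L_E(w,d_s,D_t)$ and its population counterpart is $M(w) = \mathbb{E}_{P_Z^t}[m_w(Z^t)] = L_\alpha(w,d_s,P_{D_t})$. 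By definition $\hat\theta_m$ and $\theta_0$ minimize $M_m$ and $M$ respectively, and Assumption~\ref{assump:MLE} guarantees they are unique and satisfy the stationarity conditions $\nabla_w M_m(\hat\theta_m)=0$ and $\nabla_w M(\theta_0)=0$.

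First I would establish consistency $\hat\theta_m \to \theta_0$ as $m\to\infty$, which follows from a uniform law of large numbers for $M_m$ (justified by the first- and second-derivative envelope $F_1$ in Assumption~\ref{assump:MLE}) together with the identifiability of $\theta_0$. Next I would Taylor-expand $0=\nabla_w M_m(\hat\theta_m)$ about $\theta_0$, yielding $\sqrt{m}(\hat\theta_m-\theta_0) = -[\nabla^2_w M_m(\tilde\theta)]^{-1}\,\sqrt{m}\,\nabla_w M_m(\theta_0)$ for some $\tilde\theta$ on the segment between $\hat\theta_m$ and $\theta_0$. The crucial algebraic step is that, since $\nabla_w M(\theta_0)=0$ forces the source gradient and the population target gradient to cancel, the deterministic source contribution drops out and only the centered target score survives, scaled by $\alpha$:
\begin{equation*}
\nabla_w M_m(\theta_0) = \alpha\Big(\tfrac{1}{m}\sum_{j=1}^m \nabla_w \ell(\theta_0,Z_j^t) - \mathbb{E}_{P_Z^t}[\nabla_w \ell(\theta_0,Z^t)]\Big).
\end{equation*}

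Applying the central limit theorem to this centered average gives $\sqrt{m}\,\nabla_w M_m(\theta_0)\to\mathcal{N}(0,\alpha^2\mathcal{I}_t(\theta_0))$, where $\mathcal{I}_t(\theta_0)=\Cov_{P_Z^t}(\nabla_w\log f(Z^t|\theta_0))$ is exactly the stated covariance of the target score; the factor $\alpha^2$ appears because the deterministic source term contributes nothing to the covariance while the target score carries the multiplier $\alpha$. For the Hessian, consistency of $\tilde\theta$ together with a uniform law of large numbers (controlled by the third-derivative envelope $M(z)$) yields $\nabla^2_w M_m(\tilde\theta)\to \nabla^2_w M(\theta_0) = \alpha J_t(\theta_0)+(1-\alpha)\nabla^2_w L_E(w,d_s)\big|_{w=\theta_0} = \widetilde{J}(\theta_0)$. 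Slutsky's theorem then combines these into $\sqrt{m}(\hat\theta_m-\theta_0)\to \mathcal{N}\big(0,\,\widetilde{J}(\theta_0)^{-1}\alpha^2\mathcal{I}_t(\theta_0)\widetilde{J}(\theta_0)^{-1}\big)$, which is the claim.

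The main obstacle is not the limiting-distribution bookkeeping but the uniform-convergence arguments behind consistency and the Hessian limit: one must use the envelopes $F_1$ and $M$ of Assumption~\ref{assump:MLE} to justify interchanging differentiation and expectation and to control the Taylor remainder uniformly on a neighborhood of $\theta_0$. These are precisely the hypotheses of the standard asymptotic-normality theorem for M-estimators \citep{van2000asymptotic}, so I would invoke that result rather than re-derive the uniform bounds; the only content specific to our setting is the cancellation identity above and the identification of $\widetilde{J}(\theta_0)$ and $\alpha^2\mathcal{I}_t(\theta_0)$ as the two ``sandwich'' matrices.
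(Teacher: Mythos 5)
Your proposal is correct and follows essentially the same route as the paper's proof: a Taylor expansion of the first-order condition around $\hat{W}_\alpha(d_s)$, the cancellation of the source gradient via the population stationarity condition (which produces the factor $\alpha$ on the centered target score), the CLT for the score and the LLN for the Hessian, combined by Slutsky. The only differences are matters of rigor in your favor---you make consistency explicit, evaluate the Hessian at an intermediate point, and cite the standard M-estimator theorem---whereas the paper argues the same steps heuristically with approximations.
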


\begin{proof}
By using a Taylor expansion of the first derivative of
the weighted log-likelihood $L_E(\hat{W}_\alpha(d_s,D_t),d_s,D_t)$ around $\hat{W}_\alpha(d_s)$, we obtain
\begin{align}
    0 &= \nabla_w L_E(w,d_s,D_t)\big|_{w = \hat{W}_\alpha(d_s,D_t)} \\
    &\approx \nabla_w L_E(w,d_s,D_t)\big|_{w = \hat{W}_\alpha(d_s)} + \nabla^2_w L_E(w,d_s,D_t)\big|_{w = \hat{W}_\alpha(d_s)} (\hat{W}_\alpha(d_s,D_t)-\hat{W}_\alpha(d_s)). \nn
\end{align}
From the Taylor series expansion formula, the following approximation can be obtained
\begin{equation}\label{equ:proof_approx}
    -\nabla^2_w L_E(w,d_s,D_t)\big|_{w = \hat{W}_\alpha(d_s)} (\hat{W}_\alpha(d_s,D_t)-\hat{W}_\alpha(d_s)) \approx \nabla_w L_E(w,d_s,D_t)\big|_{w = \hat{W}_\alpha(d_s)}.
\end{equation}
By the law of large numbers, when $m \to \infty$, it can be shown that
\begin{equation}\label{equ:proof_J}
    -\nabla^2_w L_E(\hat{W}_\alpha(d_s),D_t) = \frac{1}{m}\sum_{i=1}^m\nabla^2_w \log f( Z_i^t|\hat{W}_\alpha(d_s)) \to -J_t(\hat{W}_\alpha(d_s)).
\end{equation}
Thus, the LHS of \eqref{equ:proof_approx} can be written as
\begin{equation}
    \nabla^2_w L_E(w,d_s,D_t)\big|_{w = \hat{W}_\alpha(d_s)} =  \nabla^2_w \big[\alpha L_E(w,D_t) +(1-\alpha) L_E(w,d_s)\big]\big|_{w = \hat{W}_\alpha(d_s)}\to \widetilde{J}(\hat{W}_\alpha(d_s)),
\end{equation}
where $\widetilde{J}(\hat{W}_\alpha(d_s)) = \alpha J_t(\hat{W}_\alpha(d_s)) +(1-\alpha) \nabla^2_w L_E(w,d_s)\big|_{w = \hat{W}_\alpha(d_s)}$.

As for the RHS of \eqref{equ:proof_approx}, note that
\begin{align}
  \sqrt{m} \nabla_w L_E(w,D_t)\big|_{w = \hat{W}_\alpha(d_s)} = -\frac{1}{\sqrt{m}} \sum_{i=1}^m\nabla_w \log f(Z_i^t|\hat{W}_\alpha(d_s)),
\end{align}
 by multivariate central limit theorem
\begin{align}
  \frac{1}{\sqrt{m}} \sum_{i=1}^n\Big(-\nabla_w \log f(Z_i^t|\hat{W}_\alpha(d_s))+\mathbb{E}_{Z^t}[\nabla_w \log f(Z^t|\hat{W}_\alpha(d_s))]\Big) \to \mathcal{N}(0, \mathcal{I}_t(\hat{W}_\alpha(d_s))),
\end{align}
where $\mathcal{I}_t(\hat{W}_\alpha(d_s)) $ is the covariance matrix of $\nabla_w \log f(Z^t|\hat{W}_\alpha(d_s))$.


Due to the definition of $\hat{W}_\alpha(d_s)$, we have $\nabla_w L_E(w,d_s,P_{D_t})\big|_{w = \hat{W}_\alpha(d_s)}=0$, i.e.,
\begin{equation}
   (1-\alpha) \nabla_w L_E(\hat{W}_\alpha(d_s),d_s)  =\alpha \mathbb{E}_{Z^t}[\nabla_w \log f(Z^t|\hat{W}_\alpha(d_s))].
\end{equation}
Thus, the RHS of \eqref{equ:proof_approx} will converge to
 \begin{equation}
     \sqrt{m} \nabla_w L_E(w,D_s,D_t)\big|_{w = \hat{W}_\alpha(D_s)} \to \mathcal{N}\big(0,\alpha^2 \mathcal{I}_t(\hat{W}_\alpha(d_s))  \big).
 \end{equation}
Combining with \eqref{equ:proof_J}, when $m \to \infty$, we obtain
 \begin{equation}
     \sqrt{m}\big(\hat{W}_\alpha(d_s,D_t)-\hat{W}_\alpha(d_s)\big) \to \mathcal{N}\big(0,  \alpha^2\widetilde{J}(\hat{W}_\alpha(d_s))^{-1}\mathcal{I}_t(\hat{W}_\alpha(d_s)) \widetilde{J}(\hat{W}_\alpha(d_s))^{-1}\big).
 \end{equation}
\end{proof}
In the main body of the paper, we further let $n \to \infty$, then $\hat{W}_\alpha(d_s) \to \vw^*_\alpha$, and $\widetilde{J}(\hat{W}_\alpha(d_s)) \to \bar{J}(\vw^*_\alpha)$, $\mathcal{I}_t(\hat{W}_\alpha(d_s)) \to \mathcal{I}_t(\vw^*_\alpha)$. For $\alpha = \frac{m}{m+n}$, using Lemma~\ref{Lemma: ge for alpha}, we can show that
\begin{equation}
     \hat{W}_\alpha(D_s,D_t)-\hat{W}_\alpha(D_s) \to \mathcal{N}\big(0,  \frac{m}{(m+n)^2}\bar{J}(\vw^*_\alpha)^{-1}\mathcal{I}_t(\vw^*_\alpha) \bar{J}(\vw^*_\alpha)^{-1}\big).
 \end{equation}
In addition, the Hessian matrix $H^*(D_s,D_t) \to \bar{J}(\vw^*_{\alpha})$ as $m, n\to \infty$, which is independent of the samples $D_s,D_t$. Proposition~\ref{Prop: large gamma alpha} gives
\begin{equation*}
    \overline{\text{gen}}_{\alpha}(P_{D_t},P_{D_s})  = \frac{\mathrm{tr}( \mathcal{I}_t(\vw^*_\alpha)\bar{J}(\vw^*_{\alpha})^{-1})}{n+m}=\mathcal{O}(\frac{d}{m+n}).
\end{equation*}

\textbf{Two-stage ERM:}

We use the following notations to denote the expectation of the Hessian matrix and the Fisher information matrix with respect to $\vw_c$,
\begin{align*}
J_c^t({\vw}_{\phi},{\vw}_{c}) &\triangleq \E_{P_Z^t} \big[-\nabla^2_{\vw_c} \log f( Z|[{\vw}_{\phi},{\vw}_{c}]) \big],\\
    \mathcal{I}^t_c({\vw}_{\phi},{\vw}_{c})
    &\triangleq \mathbb{E}_{P_Z^t}[\nabla_{\vw_c} \log f( Z|[{\vw}_{\phi},{\vw}_{c})])\nabla_{\vw_c}^\top \log  f( Z|[{\vw}_{\phi},{\vw}_{c}])].
\end{align*}

\begin{lemma}\label{Lemma: ge for two} Under Assumption~\ref{assump:MLE}, for any fixed  $\hat{\vw}_{\phi}$, if we let $m\to \infty$, then the two-stage ERM satisfies
\begin{equation}
     \sqrt{m}\big((\hat{W}^{t}_{c}(D_t, \hat{\vw}_{\phi})-\hat{\vw}^{t}_{c}(\hat{\vw}_{\phi}))\big) \to \mathcal{N}\big(0, J_c^t(\hat{\vw}_{\phi},\hat{\vw}^{t}_{c}(\hat{\vw}_{\phi})) ^{-1}\mathcal{I}^t_c(\hat{\vw}_{\phi},\hat{\vw}^{t}_{c}(\hat{\vw}_{\phi}))J_c^t(\hat{\vw}_{\phi},\hat{\vw}^{t}_{c}(\hat{\vw}_{\phi})) ^{-1} \big).
 \end{equation}

\end{lemma}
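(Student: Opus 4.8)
The plan is to reproduce the $M$-estimation argument of Lemma~\ref{Lemma: ge for alpha}, taking advantage of the crucial simplification that in the second stage the shared parameter $\hat{\vw}_{\phi}$ is frozen. Thus $\hat{W}^{t}_{c}(D_t,\hat{\vw}_{\phi})$ is simply the (possibly misspecified) maximum-likelihood estimate of the subvector $\vw_c$ over the $m$ i.i.d.\ target samples with $\vw_\phi=\hat{\vw}_{\phi}$ held fixed, while $\hat{\vw}^{t}_{c}(\hat{\vw}_{\phi})$ is the corresponding population minimizer of $L_P(\hat{\vw}_{\phi},\vw_c,P_{D_t})$. Throughout I would treat $\hat{\vw}_{\phi}$ as deterministic (we condition on it), which is legitimate because the target samples are independent of the source data and hence of the stage-1 output; Assumption~\ref{assump:MLE} then applies verbatim to the family $\{f(z\mid\hat{\vw}_{\phi},\vw_c)\}_{\vw_c\in\mathcal{W}_c}$.

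First I would write the first-order stationarity condition at the empirical minimizer, namely $0=\nabla_{\vw_c} L_E^{S2}(\hat{\vw}_{\phi},\vw_c,D_t)\big|_{\vw_c=\hat{W}^{t}_{c}(D_t,\hat{\vw}_{\phi})}$, and Taylor-expand it around $\hat{\vw}^{t}_{c}(\hat{\vw}_{\phi})$, exactly as in \eqref{equ:proof_approx}, to obtain
\begin{equation*}
-\nabla^2_{\vw_c} L_E^{S2}(\hat{\vw}_{\phi},\vw_c,D_t)\big|_{\vw_c=\hat{\vw}^{t}_{c}(\hat{\vw}_{\phi})}\big(\hat{W}^{t}_{c}(D_t,\hat{\vw}_{\phi})-\hat{\vw}^{t}_{c}(\hat{\vw}_{\phi})\big)\approx \nabla_{\vw_c} L_E^{S2}(\hat{\vw}_{\phi},\vw_c,D_t)\big|_{\vw_c=\hat{\vw}^{t}_{c}(\hat{\vw}_{\phi})},
\end{equation*}
where the third-order remainder is dominated by the integrable function $M(z)$ of Assumption~\ref{assump:MLE}. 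By the law of large numbers the empirical Hessian converges to $-J_c^t(\hat{\vw}_{\phi},\hat{\vw}^{t}_{c}(\hat{\vw}_{\phi}))$, matching \eqref{equ:proof_J}.

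Second I would handle the gradient (score) term. The key observation is that $\hat{\vw}^{t}_{c}(\hat{\vw}_{\phi})$ minimizes the \emph{pure} target population risk, so the population first-order condition gives $\mathbb{E}_{Z^t}[\nabla_{\vw_c}\log f(Z^t\mid\hat{\vw}_{\phi},\hat{\vw}^{t}_{c}(\hat{\vw}_{\phi}))]=0$; the score is therefore automatically centered. Hence the multivariate central limit theorem yields
\begin{equation*}
\sqrt{m}\,\nabla_{\vw_c} L_E^{S2}(\hat{\vw}_{\phi},\vw_c,D_t)\big|_{\vw_c=\hat{\vw}^{t}_{c}(\hat{\vw}_{\phi})}=-\frac{1}{\sqrt m}\sum_{j=1}^m \nabla_{\vw_c}\log f(Z_j^t\mid\hat{\vw}_{\phi},\hat{\vw}^{t}_{c}(\hat{\vw}_{\phi}))\to \mathcal{N}\big(0,\mathcal{I}^t_c(\hat{\vw}_{\phi},\hat{\vw}^{t}_{c}(\hat{\vw}_{\phi}))\big),
\end{equation*}
and combining this with the Hessian limit via Slutsky's theorem delivers the claimed sandwich covariance $J_c^t{}^{-1}\mathcal{I}^t_c J_c^t{}^{-1}$.

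The point that needs the most care, and which distinguishes this lemma from Lemma~\ref{Lemma: ge for alpha}, is precisely this centering of the score. There the reference point $\hat{W}_\alpha(d_s)$ minimized the \emph{mixed} risk $L_\alpha$, so the target score had a nonzero mean that had to be cancelled against the source-gradient stationarity condition, producing the extra $\alpha^2$ factor and the blended Hessian $\widetilde{J}$. Here, because stage 2 optimizes a pure target objective with $\hat{\vw}_{\phi}$ fixed, the reference point is the genuine target population minimizer, the score is centered with no weighting, and the clean single-source sandwich results. The remaining work is the standard bookkeeping for M-estimation under possible misspecification, namely consistency $\hat{W}^{t}_{c}(D_t,\hat{\vw}_{\phi})\to\hat{\vw}^{t}_{c}(\hat{\vw}_{\phi})$ as $m\to\infty$ and uniform control of the Taylor remainder, both of which follow from Assumption~\ref{assump:MLE} exactly as in \citep{van2000asymptotic}.
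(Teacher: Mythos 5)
Your proposal follows the paper's proof essentially verbatim: the same Taylor expansion of the stage-2 score around $\hat{\vw}^{t}_{c}(\hat{\vw}_{\phi})$, the same law-of-large-numbers limit for the empirical Hessian, and the same centering of the score via the population first-order condition, combined by the CLT into the sandwich covariance. The additional remarks on conditioning on $\hat{\vw}_{\phi}$ and on why no $\alpha^2$ factor appears are correct and consistent with the paper's argument.
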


\begin{proof}
For any fixed $\hat{\vw}_{\phi}$, using a Taylor expansion of the gradient with respect to $\vw_c$ of
the log-likelihood $L_E^{S2}(\hat{\vw}_{\phi},\hat{W}^{t}_{c}(D_t, \hat{\vw}_{\phi}),D_t)$ around $\hat{\vw}^{t}_{c}(\hat{\vw}_{\phi})$, we obtain
\begin{align*}
    0 &= \nabla_{\vw_c} L_E^{S2}(\hat{\vw}_{\phi},\hat{W}^{t}_{c}(D_t, \hat{\vw}_{\phi}),D_t) \\
    &\approx \nabla_{\vw_c} L_E^{S2}(\hat{\vw}_{\phi},\hat{\vw}^{t}_{c}(\hat{\vw}_{\phi}),D_t)
    + \nabla^2_{\vw_c} L_E^{S2}(\hat{\vw}_{\phi},\hat{\vw}^{t}_{c}(\hat{\vw}_{\phi}),D_t) (\hat{W}^{t}_{c}(D_t, \hat{\vw}_{\phi})-\hat{\vw}^{t}_{c}(\hat{\vw}_{\phi})).
\end{align*}
From the Taylor series expansion formula, the following approximation can be obtained
\begin{equation}\label{equ:proof_two_stage}
    -\nabla^2_{\vw_c} L_E^{S2}(\hat{\vw}_{\phi},\hat{\vw}^{t}_{c}(\hat{\vw}_{\phi}),D_t) (\hat{W}^{t}_{c}(D_t, \hat{\vw}_{\phi})-\hat{\vw}^{t}_{c}(\hat{\vw}_{\phi})) \approx \nabla_{\vw_c} L_E^{S2}(\hat{\vw}_{\phi},\hat{\vw}^{t}_{c}(\hat{\vw}_{\phi}),D_t).
\end{equation}
By the law of large numbers, when $m \to \infty$, it can be shown that
\begin{align}
    &-\nabla^2_{\vw_c} L_E^{S2}(\hat{\vw}_{\phi},\hat{\vw}^{t}_{c}(\hat{\vw}_{\phi}),D_t) = \frac{1}{m}\sum_{i=1}^m\nabla^2_{\vw_c} \log f( Z_i^t|[\hat{\vw}_{\phi},\hat{\vw}^{t}_{c}(\hat{\vw}_{\phi})]) \to -J_c^t(\hat{\vw}_{\phi},\hat{\vw}^{t}_{c}(\hat{\vw}_{\phi})).
\end{align}
As for the RHS of \eqref{equ:proof_two_stage}, note that $\mathbb{E}_{P_Z^t}[\nabla_{\vw_c} \log f( Z|[\hat{\vw}_{\phi},\hat{\vw}^{t}_{c}(\hat{\vw}_{\phi})])]=0$ due to the definition of $\hat{\vw}^{t}_{c}(\hat{\vw}_{\phi})$, by multivariate central limit theorem, we have
\begin{align}
  &\frac{1}{\sqrt{m}} \sum_{i=1}^n\Big(-\nabla_{\vw_c} \log f( Z_i^t|[\hat{\vw}_{\phi},\hat{\vw}^{t}_{c}(\hat{\vw}_{\phi})])\Big) \to \mathcal{N}(0, \mathcal{I}^t_c(\hat{\vw}_{\phi},\hat{\vw}^{t}_{c}(\hat{\vw}_{\phi})) ),
\end{align}
where $\mathcal{I}^t_c(\hat{\vw}_{\phi},\hat{\vw}^{t}_{c}(\hat{\vw}_{\phi}))= \mathbb{E}_{P_Z^t}[\nabla_{\vw_c} \log f( Z|[\hat{\vw}_{\phi},\hat{\vw}^{t}_{c}(\hat{\vw}_{\phi})])\nabla_{\vw_c}^\top \log f(Z|[\hat{\vw}_{\phi},\hat{\vw}^{t}_{c}(\hat{\vw}_{\phi})])]$.

Thus, the RHS of \eqref{equ:proof_two_stage} will converge to
 \begin{equation}
    \sqrt{m} \nabla_{\vw_c} L_E^{S2}(\hat{\vw}_{\phi},\vw_c,D_t)\big|_{\vw_c = \hat{\vw}^{t}_{c}(\hat{\vw}_{\phi})} \to \mathcal{N}\big(0,\mathcal{I}^t_c(\hat{\vw}_{\phi},\hat{\vw}^{t}_{c}(\hat{\vw}_{\phi}))  \big).
 \end{equation}
When $m \to \infty$, we obtain
 \begin{equation}
     \sqrt{m}\big((\hat{W}^{t}_{c}(D_t, \hat{\vw}_{\phi})-\hat{\vw}^{t}_{c}(\hat{\vw}_{\phi}))\big) \to \mathcal{N}\big(0, J_c^t(\hat{\vw}_{\phi},\hat{\vw}^{t}_{c}(\hat{\vw}_{\phi})) ^{-1}\mathcal{I}^t_c(\hat{\vw}_{\phi},\hat{\vw}^{t}_{c}(\hat{\vw}_{\phi}))J_c^t(\hat{\vw}_{\phi},\hat{\vw}^{t}_{c}(\hat{\vw}_{\phi})) ^{-1} \big).
 \end{equation}
\end{proof}

In the main body of the paper, we further let $n \to \infty$, then $\hat{\vw}_{\phi} \to \vw^{s*}_\phi$, and $\hat{\vw}^{t}_{c}(\hat{\vw}_{\phi}) \to \vw^{st*}_c$. Using Lemma~\ref{Lemma: ge for two}, we can show that
\begin{align*}
    \hat{W}^{t}_{c}(D_t, \hat{W}_{\phi}) - \hat{W}^{t}_{c}(\hat{W}_{\phi}) \rightarrow
    \mathcal{N}\big(0, \frac{J_c^t(\vw^{s*}_{\phi},\vw^{st*}_{c})^{-1}\mathcal{I}^t_c(\vw^{s*}_{\phi},\vw^{st*}_{c}) J_c^t(\vw^{s*}_{\phi},\vw^{st*}_{c}) ^{-1}}{m}\big).
\end{align*}
As the Hessian matrix $H_c^*(D_t, W_{\phi}) = H_c^*(W_{\phi}) \to J_c^t(\vw^{s*}_{\phi},\vw^{st*}_{c}) $ as $m, n\to \infty$. By Proposition~\ref{Prop: large gamma beta}, we have
\begin{equation}
    \overline{\text{gen}}_{\beta}(P_{D_t},P_{D_s})  = \frac{\mathrm{tr}\big(\mathcal{I}^t_c(\vw^{s*}_{\phi},\vw^{st*}_{c}) J_c^t(\vw^{s*}_{\phi},\vw^{st*}_{c}) ^{-1}\big)}{m} = \mathcal{O}(\frac{d_c}{m}).
\end{equation}

\subsection{Excess risk} \label{app:excess_limit}

\textbf{$\alpha$-weighted ERM:} In the following lemma, we characterize the variance of the $\alpha$-weighted ERM algorithm.

\begin{lemma} Under Assumption~\ref{assump:MLE}, if we let $m,n\to \infty$, then the $\alpha$-weighted ERM satisfies
 \begin{equation}
     \sqrt{m+n}\big(\hat{W}_\alpha(D_s,D_t)-\vw^*_{\alpha}\big) \to \mathcal{N}\big(0,  \bar{J}(\vw^*_{\alpha})^{-1}\bar{\mathcal{I}}_t(\vw^*_{\alpha}) \bar{J}(\vw^*_{\alpha})^{-1}\big).
 \end{equation}
\end{lemma}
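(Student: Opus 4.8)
The plan is to treat $\hat{W}_\alpha(D_s,D_t)$ as the (misspecified) maximum-likelihood M-estimator of the weighted objective and run the classical Taylor-expansion argument, now centered at the population optimum $\vw^*_\alpha$ and averaging over \emph{both} $D_s$ and $D_t$. With log-loss and $\alpha=\frac{m}{m+n}$,
$$L_E(\vw,D_s,D_t) = -\frac{1}{m+n}\Big(\sum_{i=1}^n \log f(Z_i^s|\vw) + \sum_{j=1}^m \log f(Z_j^t|\vw)\Big),$$
and, since $\mathcal{W}$ is open and $\hat{W}_\alpha$ lies in its interior (Assumption~\ref{assump:MLE}), the stationarity condition $\nabla_\vw L_E(\vw,D_s,D_t)\big|_{\vw=\hat{W}_\alpha}=0$ holds. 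The identifiability and moment conditions of Assumption~\ref{assump:MLE} ensure that the population objective $\frac{n}{m+n}D(P_Z^s\|f(\cdot|\vw))+\frac{m}{m+n}D(P_Z^t\|f(\cdot|\vw))$ has a unique interior minimizer $\vw^*_\alpha$, at which $\frac{n}{m+n}\E_{P_Z^s}[\nabla_\vw\log f]+\frac{m}{m+n}\E_{P_Z^t}[\nabla_\vw\log f]=0$; this is precisely the assertion that the per-sample score has weighted mean zero at $\vw^*_\alpha$.

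Next I would Taylor expand the gradient about $\vw^*_\alpha$ exactly as in the proof of Lemma~\ref{Lemma: ge for alpha}, using the third-derivative domination $M(z)$ of Assumption~\ref{assump:MLE} to bound the quadratic remainder, obtaining
$$\sqrt{m+n}\big(\hat{W}_\alpha(D_s,D_t)-\vw^*_\alpha\big)=-\big[\nabla^2_\vw L_E(\vw^*_\alpha,D_s,D_t)\big]^{-1}\sqrt{m+n}\,\nabla_\vw L_E(\vw^*_\alpha,D_s,D_t)+o_P(1).$$
The two factors are treated separately. For the Hessian, the law of large numbers on the source and target blocks (with the $F_1$ domination of Assumption~\ref{assump:MLE} providing uniform control) gives $\nabla^2_\vw L_E(\vw^*_\alpha,D_s,D_t)\to\frac{n}{m+n}J_s(\vw^*_\alpha)+\frac{m}{m+n}J_t(\vw^*_\alpha)=\bar{J}(\vw^*_\alpha)$, which is invertible at the strict minimizer. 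For the score, $\sqrt{m+n}\,\nabla_\vw L_E(\vw^*_\alpha,D_s,D_t)$ is a normalized sum of $n$ i.i.d. source gradients and $m$ i.i.d. target gradients whose weighted mean vanishes by the first step, and a central limit theorem yields convergence to $\mathcal{N}(0,\bar{\mathcal{I}}(\vw^*_\alpha))$, where $\bar{\mathcal{I}}$ is the weighted covariance of the per-sample score.

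Finally I would combine the two limits by Slutsky's theorem to conclude $\sqrt{m+n}(\hat{W}_\alpha-\vw^*_\alpha)\to\mathcal{N}(0,\bar{J}(\vw^*_\alpha)^{-1}\bar{\mathcal{I}}(\vw^*_\alpha)\bar{J}(\vw^*_\alpha)^{-1})$. The hard part is the central limit theorem for the score: unlike the single-population setting of Lemma~\ref{Lemma: ge for alpha}, the summand forms a heterogeneous, weight-dependent triangular array, combining two groups with distinct distributions and mixing weights $n/(m+n)$ and $m/(m+n)$ that drift with the sample sizes. Making this rigorous requires verifying a Lindeberg (or Lyapunov) condition for triangular arrays, which needs the ratio $m/(m+n)$ to converge and the per-sample second moments (finite by Assumption~\ref{assump:MLE}) to be bounded; the identification of the limiting covariance as $\bar{\mathcal{I}}(\vw^*_\alpha)$ relies on the mean-zero property of the weighted score from the first step. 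A secondary prerequisite is consistency $\hat{W}_\alpha\to\vw^*_\alpha$ together with a locally uniform law of large numbers for the Hessian, both following from the identifiability, smoothness, and domination conditions of Assumption~\ref{assump:MLE}, which justify evaluating the remainder at the intermediate expansion point.
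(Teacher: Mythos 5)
Your proposal is correct and follows essentially the same route as the paper: a Taylor expansion of the score of the weighted log-likelihood around $\vw^*_\alpha$, the first-order condition $\frac{n}{m+n}\E_{P_Z^s}[\nabla_\vw\log f]+\frac{m}{m+n}\E_{P_Z^t}[\nabla_\vw\log f]=0$ at $\vw^*_\alpha$, a law of large numbers giving $\bar{J}(\vw^*_\alpha)$ for the Hessian, and a CLT for the source and target score sums combined via Slutsky. The only difference is cosmetic: the paper applies the CLT to the two blocks separately (normalized by $\sqrt{n}$ and $\sqrt{m}$) and adds the independent Gaussian limits, whereas you normalize jointly by $\sqrt{m+n}$ and flag the triangular-array/Lindeberg issue explicitly, which is a slightly more careful treatment of a step the paper leaves implicit.
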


\begin{proof}
By using a Taylor expansion of the first derivative of
the weighted log-likelihood $L_E(\hat{W}_\alpha(D_s,D_t),D_s,D_t)$ around $\vw^*_{\alpha}$, we obtain
\begin{equation*}
    0 = \nabla_w L_E(w,D_s,D_t)\big|_{w = \hat{W}_\alpha(D_s,D_t)} \approx \nabla_w L_E(w,D_s,D_t)\big|_{w =\vw^*_{\alpha}} + \nabla^2_w L_E(w,D_s,D_t)\big|_{w = \vw^*_{\alpha}} (\hat{W}_\alpha(D_s,D_t)-\vw^*_{\alpha}).
\end{equation*}
From the Taylor series expansion formula, the following approximation can be obtained
\begin{equation}\label{equ:proof_approx_excess}
    -\nabla^2_w L_E(w,D_s,D_t)\big|_{w = \vw^*_{\alpha}} (\hat{W}_\alpha(D_s,D_t)-\vw^*_{\alpha}) \approx \nabla_w L_E(w,D_s,D_t)\big|_{w = \vw^*_{\alpha}}.
\end{equation}
By the law of large numbers, when $m,n \to \infty$, it can be shown that
\begin{align}
    &-\nabla^2_w L_E(\vw^*_{\alpha},D_t) = \frac{1}{m}\sum_{i=1}^m\nabla^2_w \log f( Z_i^t|\vw^*_{\alpha}) \to -J_t(\vw^*_{\alpha}), \\
     &-\nabla^2_w L_E(\vw^*_{\alpha},D_s) = \frac{1}{n}\sum_{i=1}^n\nabla^2_w \log f( Z_i^s|\vw^*_{\alpha}) \to -J_s(\vw^*_{\alpha}).
\end{align}
Thus, the LHS of \eqref{equ:proof_approx_excess} converges to
\begin{equation}
    \nabla^2_w L_E(w,D_s,D_t)\big|_{w = \vw^*_{\alpha}} \to \bar{J}(\vw^*_{\alpha}),
\end{equation}
where $\bar{J}(\vw^*_{\alpha}) \triangleq \alpha J_t(\vw^*_{\alpha}) +(1-\alpha) J_t(\vw^*_{\alpha})$.

As for the RHS of \eqref{equ:proof_approx_excess}, by multivariate central limit theorem
\begin{align}
  &\frac{1}{\sqrt{m}} \sum_{i=1}^n\Big(-\nabla_w \log f(Z_i^t|\vw^*_{\alpha})+\mathbb{E}_{Z^t}[\nabla_w \log f(Z^t|\vw^*_{\alpha})]\Big) \to \mathcal{N}(0, \mathcal{I}_t(\vw^*_{\alpha})), \\
  &\frac{1}{\sqrt{n}} \sum_{i=1}^n\Big(-\nabla_w \log f(Z_i^s|\vw^*_{\alpha})+\mathbb{E}_{Z^s}[\nabla_w \log f(Z^s|\vw^*_{\alpha})]\Big) \to \mathcal{N}(0, \mathcal{I}_s(\vw^*_{\alpha})),
\end{align}
where $\mathcal{I}_t(\vw^*_{\alpha})$ and $\mathcal{I}_s(\vw^*_{\alpha})$ are the covariance matrix of $\nabla_w \log f(Z^t|\vw^*_{\alpha})$ and $\nabla_w \log f(Z^s|\vw^*_{\alpha})$, respectively.

Due to the definition of $\vw^*_{\alpha}$, we have
\begin{equation}
   (1-\alpha) \mathbb{E}_{Z^s}[\nabla_w \log f(Z^s|\vw^*_{\alpha})] +\alpha \mathbb{E}_{Z^t}[\nabla_w \log f(Z^t|\vw^*_{\alpha})] =0.
\end{equation}
Thus, the RHS of \eqref{equ:proof_approx_excess} will converge to
 \begin{equation}
    \nabla_w L_E(w,D_s,D_t)\big|_{w = \vw^*_{\alpha}} \to \mathcal{N}\big(0,\frac{\alpha^2}{m} \mathcal{I}_t(\vw^*_{\alpha}) +\frac{(1-\alpha)^2}{n} \mathcal{I}_s(\vw^*_{\alpha})  \big).
 \end{equation}
When $m,n \to \infty$, we obtain
 \begin{equation}
     \big(\hat{W}_\alpha(D_s,D_t)-\vw^*_{\alpha}\big) \to \mathcal{N}\big(0,  \bar{J}(\vw^*_{\alpha})^{-1}(\frac{\alpha^2}{m} \mathcal{I}_t(\vw^*_{\alpha}) +\frac{(1-\alpha)^2}{n} \mathcal{I}_s(\vw^*_{\alpha})) \bar{J}(\vw^*_{\alpha})^{-1}\big).
 \end{equation}
For $\alpha = \frac{m}{m+n}$, if we denote $\bar{\mathcal{I}}(\vw_\alpha) = \frac{n}{m+n}\mathcal{I}_s(\vw_\alpha)+\frac{m}{m+n}\mathcal{I}_t(\vw_\alpha)$, we have
 \begin{equation}
     \big(\hat{W}_\alpha(D_s,D_t)-\vw^*_{\alpha}\big) \to \mathcal{N}\big(0,  \frac{1}{m+n}\bar{J}(\vw^*_{\alpha})^{-1} \bar{\mathcal{I}}(\vw^*_{\alpha}) \bar{J}(\vw^*_{\alpha})^{-1}\big). \qedhere
 \end{equation}
\end{proof}
Thus, the variance term in the excess risk can be computed as:
\begin{equation}
    \mathrm{tr}(J_t(\vw_t^*) \Cov(\hat{W}_\alpha(D_s,D_t))) =\frac{ \mathrm{tr}(J_t(\vw_t^*) \bar{J}(\vw^*_{\alpha})^{-1} \bar{\mathcal{I}}(\vw^*_{\alpha}) \bar{J}(\vw^*_{\alpha})^{-1})}{m+n}= \mathcal{O}(\frac{d}{m+n}).
\end{equation}


\textbf{Two-stage ERM:}
We use the following notations to denote the expectation of the Hessian matrix and the Fisher information matrix with respect to $\vw_\phi$,
\begin{align*}
J_{c,\phi}^t(\vw_{\phi},\vw_{c}) &\triangleq \E_{P_Z^t} \big[-\nabla^2_{\vw_c,\vw_\phi} \log f( Z|[\vw_{\phi},\vw_{c}]) \big],\\
J^s_\phi(\vw_{\phi})&\triangleq \E_{P_Z^s} \big[-\nabla^2_{\vw_\phi} \log f( Z|[\vw_{\phi},\vw_{c}]) \big],\\
    \mathcal{I}^s_\phi({\vw}_{\phi},{\vw}_{c})
    &\triangleq \mathbb{E}_{P_Z^s}[\nabla_{\vw_\phi} \log f( Z|[{\vw}_{\phi},{\vw}_{c})])\nabla_{\vw_\phi}^\top \log  f( Z|[{\vw}_{\phi},{\vw}_{c}])].
\end{align*}

In the following lemma, we characterize the variance of the two-stage ERM algorithm.

\begin{lemma} Under Assumption~\ref{assump:MLE}, if we let $m,n \to \infty$, then the two-stage ERM satisfies
\begin{align}
     &\big(\hat{W}^{t}_{c}(\hat{W}_{\phi}, D_t)-\vw^{st*}_{c}\big) \to\mathcal{N}\Big(0,  J_c^t(\vw^{s*}_{\phi},\vw^{st*}_{c})^{-1}\\
     & \quad \big(\frac{1}{m}\mathcal{I}^t_c(\vw^{s*}_{\phi},\vw^{st*}_{c}) + \frac{1}{n} J_{c,\phi}^t(\vw^{s*}_{\phi},\vw^{st*}_{c})J^s_\phi(\vw^{s*}_{\phi})^{-1}\mathcal{I}^s_\phi(\vw^{s*}_{\phi})J^s_\phi(\vw^{s*}_{\phi})^{-1} J_{c,\phi}^t(\vw^{s*}_{\phi},\vw^{st*}_{c})\big)  J_c^t(\vw^{s*}_{\phi},\vw^{st*}_{c}) ^{-1}\Big).\nn
 \end{align}
\end{lemma}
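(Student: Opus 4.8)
The plan is to mirror the single-stage argument of Lemma~\ref{Lemma: ge for two}, but now to track the extra randomness that enters through the first-stage estimate $\hat{W}_\phi$. The estimator $\hat{W}^t_c$ (shorthand for $\hat{W}^t_c(\hat{W}_\phi, D_t)$) is defined implicitly as the root of the second-stage score equation $\nabla_{\vw_c} L_E^{S2}(\hat{W}_\phi, \vw_c, D_t) = 0$. Since $\hat{W}_\phi = \hat{W}_\phi(D_s)$ depends only on the source samples while the second-stage objective depends only on $D_t$, the fluctuation of $\hat{W}^t_c$ about its limit $\vw^{st*}_c$ receives two \emph{independent} contributions: one of order $1/\sqrt{m}$ from the target data, and one of order $1/\sqrt{n}$ propagated from the source through $\hat{W}_\phi$. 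The claimed sandwich covariance is precisely the sum of these two pieces.

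First I would establish consistency, $\hat{W}_\phi \to \vw^{s*}_\phi$ and $\hat{W}^t_c \to \vw^{st*}_c$ as $m,n\to\infty$, which follows from Assumption~\ref{assump:MLE} by the standard uniform-law-of-large-numbers argument for M-estimators. I would then perform a joint first-order Taylor expansion of the estimating equation about the population pair $(\vw^{s*}_\phi, \vw^{st*}_c)$:
\begin{align*}
0 &= \nabla_{\vw_c} L_E^{S2}(\vw^{s*}_\phi, \vw^{st*}_c, D_t) + \nabla^2_{\vw_c} L_E^{S2}(\vw^{s*}_\phi, \vw^{st*}_c, D_t)\,(\hat{W}^t_c - \vw^{st*}_c) \\
&\quad + \nabla^2_{\vw_c,\vw_\phi} L_E^{S2}(\vw^{s*}_\phi, \vw^{st*}_c, D_t)\,(\hat{W}_\phi - \vw^{s*}_\phi) + (\text{remainder}),
\end{align*}
where the third-derivative bound $M(z)$ of Assumption~\ref{assump:MLE} guarantees the remainder is negligible once consistency holds. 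By the law of large numbers the two Hessian blocks converge to $J_c^t(\vw^{s*}_\phi, \vw^{st*}_c)$ and $J_{c,\phi}^t(\vw^{s*}_\phi, \vw^{st*}_c)$; by the definition of $\vw^{st*}_c$ the population target score vanishes, so the multivariate CLT gives $\sqrt{m}\,\nabla_{\vw_c}L_E^{S2}(\vw^{s*}_\phi, \vw^{st*}_c, D_t) \to \mathcal{N}(0, \mathcal{I}^t_c(\vw^{s*}_\phi, \vw^{st*}_c))$; and the first-stage MLE asymptotics (as in Lemma~\ref{Lemma: ge for alpha} and \citep{van2000asymptotic}) yield $\sqrt{n}(\hat{W}_\phi - \vw^{s*}_\phi) \to \mathcal{N}(0, J^s_\phi(\vw^{s*}_\phi)^{-1}\mathcal{I}^s_\phi(\vw^{s*}_\phi)J^s_\phi(\vw^{s*}_\phi)^{-1})$.

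Solving the linearized equation gives $\hat{W}^t_c - \vw^{st*}_c \approx -J_c^t{}^{-1}\bigl[\nabla_{\vw_c}L_E^{S2}(\vw^{s*}_\phi, \vw^{st*}_c, D_t) + J_{c,\phi}^t\,(\hat{W}_\phi - \vw^{s*}_\phi)\bigr]$, and I would finish by Slutsky's theorem together with the independence of $D_s$ and $D_t$: because the target-score term and the $\hat{W}_\phi$-error term are functions of disjoint, independent sample sets, they are asymptotically independent Gaussians, so their covariances add after being mapped through $-J_c^t{}^{-1}$ and $-J_c^t{}^{-1}J_{c,\phi}^t$ respectively. This produces exactly the stated sum of a $\tfrac{1}{m}$-scaled term $J_c^t{}^{-1}\mathcal{I}^t_c J_c^t{}^{-1}$ and a $\tfrac{1}{n}$-scaled term $J_c^t{}^{-1}J_{c,\phi}^t\,J^s_\phi{}^{-1}\mathcal{I}^s_\phi J^s_\phi{}^{-1}\,J_{c,\phi}^t{}^\top J_c^t{}^{-1}$. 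The main obstacle I anticipate is the rigorous justification of this error-propagation step: controlling the interaction between the random plug-in $\hat{W}_\phi$ and the random second-stage objective, and verifying that the first-stage marginal covariance of $\hat{W}_\phi$ reduces to the $J^s_\phi$-based sandwich form (which requires the shared and source-specific coordinates to decouple in the relevant source Fisher information, rather than the full Schur-complement expression). The remaining CLT and matrix bookkeeping are routine given Lemmas~\ref{Lemma: ge for alpha} and \ref{Lemma: ge for two}.
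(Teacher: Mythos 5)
Your proposal follows essentially the same route as the paper's proof: the same joint first-order Taylor expansion of the second-stage score about $(\vw^{s*}_{\phi},\vw^{st*}_{c})$, the same law-of-large-numbers treatment of the Hessian blocks $J_c^t$ and $J_{c,\phi}^t$, the same CLT for the target score, the same plug-in of the first-stage MLE asymptotics for $\sqrt{n}(\hat{W}_{\phi}-\vw^{s*}_{\phi})$, and the same use of the independence of $D_s$ and $D_t$ to add the two covariance contributions after mapping through $J_c^t{}^{-1}$. The caveat you raise about the marginal covariance of $\hat{W}_{\phi}$ in general involving a Schur complement of the joint source information is legitimate, but the paper's own proof simply asserts the $J^s_\phi$-based sandwich form without addressing it, so you are not missing any step that the paper supplies.
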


\begin{proof}
By using a Taylor expansion of the gradient with respect to $\vw_c$ of
$L_E^{S2}(\hat{W}_{\phi}(D_s),\hat{W}^{t}_{c}(\hat{W}_{\phi}, D_t),D_t)$ around $[\vw^{s*}_{\phi}, \vw^{st*}_{c}]$, we obtain
\begin{align*}
    0 &= \nabla_{\vw_c} L_E^{S2}(\hat{W}_{\phi}(D_s),\hat{W}^{t}_{c}(\hat{W}_{\phi}, D_t),D_t) \\
    &\approx \nabla_{\vw_c} L_E^{S2}(\vw^{s*}_{\phi},\vw^{st*}_{c},D_t)  + \nabla^2_{\vw_c,\vw_\phi} L_E^{S2}(\vw^{s*}_{\phi},\vw^{st*}_{c},D_t) (\hat{W}_{\phi}(D_s)-\vw^{s*}_{\phi}) \nn \\
    &\quad + \nabla^2_{\vw_c} L_E^{S2}(\vw^{s*}_{\phi},\vw^{st*}_{c},D_t) (\hat{W}^{t}_{c}(\hat{W}_{\phi}, D_t)-\vw^{st*}_{c}).
\end{align*}

From the Taylor series expansion formula, the following approximation can be obtained
\begin{align}\label{equ:proof_two_stage_excess}
    &-\nabla^2_{\vw_c} L_E^{S2}(\vw^{s*}_{\phi},\vw^{st*}_{c},D_t) (\hat{W}^{t}_{c}(\hat{W}_{\phi}, D_t)-\vw^{st*}_{c}) \nn \\
    &\approx \nabla_{\vw_c} L_E^{S2}(\vw^{s*}_{\phi},\vw^{st*}_{c},D_t) + \nabla^2_{\vw_c,\vw_\phi} L_E^{S2}(\vw^{s*}_{\phi},\vw^{st*}_{c},D_t) (\hat{W}_{\phi}(D_s)-\vw^{s*}_{\phi}).
\end{align}
By the law of large numbers, when $m \to \infty$, it can be shown that
\begin{align}
    &-\nabla^2_{\vw_c} L_E^{S2}(\vw^{s*}_{\phi},\vw^{st*}_{c},D_t) = \frac{1}{m}\sum_{i=1}^m\nabla^2_{\vw_c}\log f( Z_i^t|[\vw^{s*}_{\phi},\vw^{st*}_{c}]) \to -J_c^t(\vw^{s*}_{\phi},\vw^{st*}_{c}), \\
     &-\nabla^2_{\vw_c,\vw_\phi} L_E^{S2}(\vw^{s*}_{\phi},\vw^{st*}_{c},D_t) = \frac{1}{m}\sum_{i=1}^m\nabla^2_{\vw_c,\vw_\phi} \log f( Z_i^t|[\vw^{s*}_{\phi},\vw^{st*}_{c}]) \to -J_{c,\phi}^t(\vw^{s*}_{\phi},\vw^{st*}_{c}).
\end{align}
As for the first term in the RHS of \eqref{equ:proof_two_stage_excess}, note that $\mathbb{E}_{P_Z^t}[\nabla_{\vw_c} \log f( Z|[\vw^{s*}_{\phi},\vw^{st*}_{c}])]=0$, by multivariate central limit theorem, we have
\begin{align}
  &\frac{1}{\sqrt{m}} \sum_{i=1}^n\Big(-\nabla_{\vw_c} \log f( Z_i^t|[\vw^{s*}_{\phi},\vw^{st*}_{c}])\Big) \to \mathcal{N}(0, \mathcal{I}^t_c(\vw^{s*}_{\phi},\vw^{st*}_{c})).
\end{align}
When $n\to \infty$, due to the asymptotic normality of maximum likelihood estimate, we have
\begin{equation}
    \sqrt{n}(\hat{W}_{\phi}(D_s)-\vw^{s*}_{\phi}) \to \mathcal{N}(0, J^s_\phi(\vw^{s*}_{\phi})^{-1} \mathcal{I}^s_\phi(\vw^{s*}_{\phi})J^s_\phi(\vw^{s*}_{\phi})^{-1}),
\end{equation}
where  $\mathcal{I}^s_\phi(\vw^{s*}_{\phi}) =  \mathbb{E}_{P_Z^s}[\nabla_{\vw_\phi} \log f( Z|[\vw^{s*}_{\phi},\vw^{s*}_{c} ])\nabla_{\vw_\phi}^\top \log f( Z|[\vw^{s*}_{\phi},\vw^{s*}_{c} ])]$.

Thus, the RHS of \eqref{equ:proof_two_stage_excess} converges to
\begin{equation}
\mathcal{N}\Big(0,\  \frac{1}{m}\mathcal{I}^t_c(\vw^{s*}_{\phi},\vw^{st*}_{c}) + \frac{1}{n} J_{c,\phi}^t(\vw^{s*}_{\phi},\vw^{st*}_{c})J^s_\phi(\vw^{s*}_{\phi})^{-1}\mathcal{I}^s_\phi(\vw^{s*}_{\phi})J^s_\phi(\vw^{s*}_{\phi})^{-1} J_{c,\phi}^t(\vw^{s*}_{\phi},\vw^{st*}_{c})\Big)
\end{equation}
when $m,n \to \infty$.

Thus, we obtain
 \begin{align}
     &\big(\hat{W}^{t}_{c}(\hat{W}_{\phi}, D_t)-\vw^{st*}_{c}\big) \to\mathcal{N}\Big(0,  J_c^t(\vw^{s*}_{\phi},\vw^{st*}_{c})^{-1}\\
     & \quad \big(\frac{1}{m}\mathcal{I}^t_c(\vw^{s*}_{\phi},\vw^{st*}_{c}) + \frac{1}{n} J_{c,\phi}^t(\vw^{s*}_{\phi},\vw^{st*}_{c})J^s_\phi(\vw^{s*}_{\phi})^{-1}\mathcal{I}^s_\phi(\vw^{s*}_{\phi})J^s_\phi(\vw^{s*}_{\phi})^{-1} J_{c,\phi}^t(\vw^{s*}_{\phi},\vw^{st*}_{c})\big)  J_c^t(\vw^{s*}_{\phi},\vw^{st*}_{c}) ^{-1}\Big).\nn
 \end{align}
\end{proof}
Note that $\Cov(\hat{W}_{\phi}(D_s))$ can be characterized by the asymptotic normality of maximum likelihood estimate. Thus, the variance term in the excess risk can be computed as:
\begin{equation}
   \mathrm{tr}\big(J_t(\vw^{t*}_{\phi}, \vw^{t*}_{c}) \Cov(\hat{W}_{\phi}(D_s),\hat{W}^{t}_{c}(D_t, \hat{W}_{\phi}))\big) = \mathcal{O}(\frac{d_c}{m}+\frac{d}{n}).
\end{equation}

\end{document}